\documentclass[10pt]{article}
\usepackage[ngerman, english]{babel}
\usepackage[utf8]{inputenc}
\usepackage[T1]{fontenc}
\usepackage[gen]{eurosym}
\usepackage{geometry}
\usepackage{graphicx}
\usepackage{amsmath, amsfonts, amssymb}
\usepackage{amsthm}
\usepackage{mathrsfs}
\usepackage{xcolor}
\usepackage{lmodern}
\usepackage{mathtools}
\usepackage{framed}
\usepackage{relsize}
\usepackage{exscale}
\usepackage{tabularx}
\usepackage{caption}
\usepackage{subcaption} 
\usepackage{microtype} 
\usepackage{a4}        
\usepackage{setspace} 
\usepackage{verbatim}   
\usepackage{enumerate}
\usepackage{enumitem}   
\usepackage{tocloft}
\usepackage{float}
\usepackage{pifont}
\usepackage{longtable}
\usepackage{booktabs}
\usepackage{bbm}
\usepackage[]{authblk}
\usepackage{fancyhdr}
\usepackage{makecell}

\definecolor{ubtgreen}{RGB}{0,146,93}

\definecolor{lightgray}{RGB}{245,245,245}

\newcommand{\anf}[1]{\enquote{#1}}

\newcommand{\R}{\mathbb{R}}

\newcommand{\N}{\mathbb{N}}

\newcommand{\XY}{X \times Y}

\DeclareMathOperator{\e}{e}

\newcommand{\Lcal}{\mathcal{L}}
\newcommand{\B}  {\mathcal{B}}   

\newcommand{\AB}{\mathcal{A} \otimes \mathcal{B}}

\newcommand{\RLP}{\mathcal{R}_{L,P}}
\newcommand{\RLD}{\mathcal{R}_{L,D}}

\newcommand{\RregLP}{\mathcal{R}^{\mathrm{reg}}_{L,P,\lambda}}
\newcommand{\dd}{\mathrm{d}}

\newcommand{\xmark}{\textbf{--}}
\newcommand{\email}[1]{\href{mailto:#1}{#1}}

\newtheorem{theorem}{Theorem}[section]
\newtheorem{lemma}[theorem]{Lemma}
\newtheorem{remark}[theorem]{Remark}
\newtheorem{proposition}[theorem]{Proposition}
\newtheorem{example}[theorem]{Example}

\newtheorem{corollary}[theorem]{Corollary}
\newtheorem{definition}[theorem]{Definition}

\usepackage[backend = biber, style = alphabetic, sorting = nyt, maxnames=10]{biblatex}
\usepackage{csquotes}
\DeclareFieldFormat{postnote}{#1}

\usepackage[hidelinks]{hyperref}
\usepackage{cleveref}

\title{\textbf{Ratio-based Loss Functions}}
\author[1]{Lena Helgerth\thanks{corresponding author, e-mail: \email{lena.helgerth@uni-bayreuth.de}}}  
\author[1]{Andreas Christmann}
\affil[1]{Department of Mathematics, University of Bayreuth, Chair of Stochastics and Machine Learning, 95440 Bayreuth, Germany}

\allowdisplaybreaks

\begin{document}
\selectlanguage{english}

\pagestyle{headings}

\maketitle

\textbf{Abstract~} {\small 
	Algorithms in machine learning and AI do critically depend on at least three key components: (i) the risk function, which is the expectation of the loss function, (ii) the function space, which is often called the hypothesis space, and (iii) the set of probability measures, which are allowed for the specified algorithm. 
	This paper gives a survey of a certain class of loss functions, which we call ratio-based. 
	In supervised learning, margin-based loss functions for classification tasks depending on the \emph{product} of the output values $y_i$ and the predictions $f(x_i)$  as well as distance-based loss functions depending on the \emph{difference} of $y_i$ and $f(x_i)$ for regression are common.
	Distance-based loss functions are in particular useful, if an additive model assumption seems plausible, i.e. the common signal plus noise assumption.
	However, in the literature, several loss functions proposed for regression purposes have a multiplicative error structure in mind and pay attention to relative errors, i.e. to the \emph{ratio} of $y_i$ and $f(x_i)$.
	In this survey article, we systematically investigate such ratio-based loss functions and propose a few new losses, which may be interesting for future research. 
	We concentrate on investigating general properties of ratio-based loss functions like continuity, Lipschitz-continuity, convexity, and differentiability, because these properties play a central role in most machine learning algorithms.
	Therefore, we do not focus on some specific machine learning algorithm to derive universal consistency, learning rates, or stability results.
	Instead, we want to enable future research in this direction.
}

\section{Introduction} \label{Kap_Intro}
Machine learning (ML) and AI are well established in modern life.
Besides choosing an appropriate function space or hypothesis space $\mathcal{F}$ and a set of probability measures $\mathcal{P}$, the choice of the loss function $L$ is a crucial component of ML. Of course, many algorithms also need a good determination of hyperparameters such as a regularization parameter or the depth of a decision tree or a random forest, or the depth and the structure of
a deep neural network.
Two main goals of ML methods are (i) minimizing the risk or the regularized risk over $\mathcal{F}$ and (ii) finding the optimal function $f^\star \in \mathcal{F}$.
Here, we focus on the loss function for supervised learning.
Denote the given data set by $D:=((x_1,y_1), \ldots, (x_n,y_n))\in (\XY)^n$, where $n\in\N$ is the sample size, $x_i$ denotes an input value in the set $X$, and $y_i$ denotes the corresponding output value in the set $Y\subset \R$.
As predictive accuracy plays a key role when comparing ML methods, loss functions, i.e. measurable mappings $L: \XY \times \R \to [0, \infty)$, quantify how good a prediction $t \in \R$ captures an input's $x \in X$ output value $y \in Y$.
The higher the loss, the poorer the prediction.
In many machine learning methods minimizing the expected loss, i.e. the risk, is a central component.  
The input space $X$ is typically required to be a complete separable metric space or even a Polish space, whereas the output space $Y$ is often assumed to be a subset of $\R$. 
Because loss functions are a critical component of ML methods, there is a variety of survey articles (see e.g. \cite{TianEtAl2022, TervenEtAl2025, CiampiconiEtAl2024}). \par
It is a useful and well-known fact that general properties of the loss function like continuity, Lipschitz continuity, convexity, or differentiability are inherited by the risk, i.e. the expectation of the loss function, under suitable conditions, see e.g. \cite[Chapter 2.2]{SC08}. \par 
In binary classification, we have $Y = \{-1,1\}$ and common loss functions for such methods are \textbf{margin-based}, i.e. there exists a representing function $\varphi: \R \to [0, \infty)$ that fulfills
\begin{equation*}
	L(x,y,t) = \varphi(y \cdot t),  \qquad (x,y,t) \in \XY \times \R.
\end{equation*}
Popular examples for margin-based loss functions are hinge loss $L_{\mathrm{hinge}}(x,y,t)=\max\{0, 1-y\cdot t\}$, logistic loss $L_{\mathrm{c-logist}}(x,y,t)=\ln(1+\exp{(y\cdot t)})$, and least squares loss $L_{\mathrm{LS}}(x,y,t) = (1-y\cdot t)^2$.
\par
In regression, it is common to assume $Y = \R$ and use a \textbf{distance-based} loss function, i.e. $L$ can be written as 
\begin{equation*}
	L(x,y,t) = \psi(y-t),  \qquad  (x,y,t) \in \XY \times \R,
\end{equation*}
where the representing function satisfies $\psi: \R \to [0, \infty)$ with $\psi(0) = 0$.
Distance-based loss functions are of particular usefulness if an \emph{additive} error structure is assumed, i.e. if the classical signal \emph{plus} random noise assumption seems plausible. Obviously, distance-based loss functions satisfy $L(x,y,t) = L(x,y+c, t+c)$ for all $(x,y,t) \in \XY \times \R$ and $c \in \R$.
Hence, such loss functions and their risk functions are invariant w.r.t. additive shifts by some constant $c$.
Examples for distance-based loss functions are the least squares loss $L_{\mathrm{LS}}(x,y,t)=(y-t)^2$ and the absolute error loss $L_{\mathrm{abs}}(x,y,t)=|y-t|$ both for regression and the pinball loss $L(x,y,t)=\max\{\tau(y-t), -(1-\tau)(y-t)\}$ for $\tau$-quantile regression, where $\tau\in(0,1)$ is fixed. 
\par
However, in many real-world regression tasks, the output space $Y$ is often a subset of $\R$, e.g. $Y=(0,\infty)$.  
Furthermore, a \emph{multiplicative} error model, i.e. signal \emph{times} random noise, is often at least as plausible as a signal \emph{plus}  random noise assumption. 
This gives the motivation to systematically investigate loss functions which depend on the \emph{ratio} of predicted and observed values instead of their difference. 
Especially for multiplicative models, relative errors are widely used (see e.g. \cite{ChenEtAl2010LARE, ChenEtAl2016LPRE, ZhangEtAl2021}).
A perfect ratio-based approach gives a scale-invariant version of predictive accuracy.
Note that this property cannot be ensured through an appropriate distance-based approach as long as a constant loss function $L\equiv 0$ (which is totally uninteresting in ML) is neglected.
For this, assume $L: X \times \R \times \R \to [0, \infty)$ is a distance-based loss function satisfying $L(x,y,t) = L(x,c y, c t)$ for all $(x,y,t) \in X \times \R \times \R$ and $c \in \R$, i.e. $L$ is scale-invariant.
Let $\psi$ denote its representing function. 
Then, 
\begin{equation*}
	\psi(y-t) = \psi(c (y-t)) , \qquad (x,y,t) \in X \times \R \times \R, c \in \R.
\end{equation*}
Therefore, $\psi\equiv 0$ is a constant function. \par 
In a scale-invariant approach, evaluation becomes independent of absolute output values and its units.
\cite[(2)]{ChenEtAl2010LARE} and \cite[(1)]{ChenEtAl2016LPRE} consider \anf{accelerated failure time models}, which reduce to additive regression models after a logarithmic transformation.
Therefore, these models only handle positive data (cf. \cite[p. 2]{ChenEtAl2010LARE}).
Later, the question arises whether transformation to the multiplicative case is necessary. 
We will see that back-transformation to the additive scenario is possible but does not give any additional information. 
Since results also need to be transformed back, one either gets information about the difference or about their ratio, but not both. \par 
Let us now briefly motivate why ratio-based loss functions can be of interest in applications. Medical fields as well as financial modeling or quality control are just a few examples in which a ratio-based error can be relevant and useful. This is one reason why logistic regression, odds ratio estimation  in dose-response studies, and Poisson regression to model count data are often used in practice.
E.g., when predicting a person's weight, absolute deviations often provide limited interpretation compared to relative ones.
If the aim of prediction is to calculate a dose of medicine, a deviation of 2 kg from the true value may be irrelevant for a 80 kg grown-up man's dose. But in case of an 8 kg child, whose weight was measured as 10 kg, the medication dose was 25 \% too high.
Here, the ratio is probably of greater interest than the difference. \par 
Speaking of inflation, for a daily newspaper, which cost \euro 3 before, an increase of \euro 5 would be unacceptable for most customers and have a major impact on their buying decision.
For a car instead, lifting the price by \euro 5 would not make a noticeable impact on the customer's buying decision. \par	
Therefore, ratio-based methods are an important addition to traditional distance-based techniques to measure predictive accuracy.\par 
Speaking of ratios, one usually intuitively assumes positive values because of interpretability issues, although mathematically observed values and predicted values may both be negative in some applications. 
Trivially, if both the observed values and the predicted values are negative, we can just multiply by $-1$.
Moreover, we have already seen before that in many real-world applications only positive outputs are possible, thinking of body height, body weight, measuring of time, prices, damage costs, fuel consumption of cars, etc.
This is why we do not use $Y = \R$ as an output space in this paper, but a real subset of it.
To guarantee the former and to avoid dividing by zero, we take $Y$ as a part of the positive real line.
Examples for positive outputs are also used to explain the significance of log-normal distribution or rather its advantages over the normal distribution in scenarios where only positive outputs can be expected (cf. \cite{LimpertStahel2001, LimpertStahel2017}).\par
The paper is organized as follows. 
Section \ref{Kap_Def} introduces the class of ratio-based loss functions.
Section \ref{Kap_Eigenschaften} presents the main results explaining and systematically analyzing how certain properties of the loss function can be achieved. 
Section \ref{Kap_Risiko} transfers these properties to the risk.
In Section \ref{Kap_Bsp}, we give some examples for ratio-based losses from the literature as well as some new ones and investigate their properties. 
At the end, in Sections \ref{Kap_Distance} and \ref{Kap_Alternative} we shortly discuss ratio-based loss functions' connection to distance-bases loss functions and a further approach to ratio-based.
Some basic proofs are given in the appendix.

\section{Definition} \label{Kap_Def}
Let us first define ratio-based loss functions. 
If not otherwise specified, in the remainder  of the paper $(X, \mathcal{A})$ denotes a measurable space and $Y \subseteq \R$, while $\R$ and all of its subsets are equipped with the Borel $\sigma$-algebras denoted by $\B=\B(\R)$ and
$\B(Y)$, respectively.
\begin{definition} \label{Def_ratiobasedLoss}
	A supervised loss function $L: \XY \times \R \to [0, \infty)$ is called \textbf{ratio-based (rb)} if there exists a representing measurable function $\ell: (0, \infty) \to [0, \infty)$ as well as a constant $c\geq 0$, and a link function $u:\R \to Y$ that is measurable, surjective, and monotonic, such that $\ell(1) = 0$ and 
	\begin{equation} \label{ratiobased}
		L(x,y,t) = \ell\biggl( \frac{u(t) + c}{y + c} \biggr), \qquad (x,y,t) \in \XY \times \R,	
	\end{equation}
	hold, where $Y = (a,b), 0 \leq a < b \leq \infty$. 
	Moreover, we call $L$ \textbf{strictly ratio-based} if $c=0$.
\end{definition}
The constant $c$ helps to guarantee that the ratio is in $\bigl(\frac{a+c}{b+c}, \frac{b+c}{a+c}\bigr)$.
Since many statistical learning algorithms use Hilbert spaces or Banach spaces of functions $f:X\to\R$ to predict $\hat{y}$ for $y$, one can in general not prevent the third variable $t=f(x)$ from being negative.
Therefore, it is necessary to transform the predicted values to $Y$ to be able to compare them accurately.
The link function transforms real values appropriately to $Y$.
Some examples for such link functions are given in Table \ref{table_u}. \par
\begin{table}[ht]
	\centering
	\renewcommand{\arraystretch}{1.2}
	\caption{Examples for link functions $u$ depending on output space $Y$}
	\begin{tabular}{l|l}
		$Y$ & $u(t)$ \\
		\midrule
		$(a, \infty)$, $0 \leq a < \infty$ & $\exp(t) + a$ \\
		& $\exp(-t) + a$ \\
		$(a,b)$, $0 \leq a < b < \infty$ & $(b-a)\frac{1}{1+\exp(-t)} + a$, \\
		& $(b-a)(\frac{1}{2} + \frac{1}{\pi}\arctan(t))+a$\\  
		& $(b-a)\exp(-\exp(-t))+a$ 
	\end{tabular}
	\label{table_u}
\end{table} 
The idea to define a loss function via this relation has also been used before.
For example, \cite[Chapter III.D]{JadonEtAl2022} define \textit{Relative Absolute Error (RAE)} through
\begin{equation*}
	RAE = \frac{\sum_{i = 1}^N |y_i-\hat{y}_i|}{\sum_{i = 1}^N \Bigl|y_i - \frac{1}{N} \sum_{i =1}^N y_i\Bigr| },
\end{equation*}
where $\hat{y}_i=u(f(x_i))$. 
In this context, the question arises whether RAE is well-defined. 
In case $y_i = K$ for all $i \in \{1, \dots, N\}$, RAE can be undefined (\cite[Tab. IV]{JadonEtAl2022}).
\cite{TervenEtAl2025} also considered loss functions which use certain ratios, e.g. \textit{Mean Absolute Relative Error (AbsRel)} 
\begin{equation*}
	AbsRel = \frac{1}{N} \sum_{i = 1}^N \frac{|\hat{y}_i-y_i|}{y_i}
\end{equation*}
(\cite[Chapter 8.2.1]{TervenEtAl2025}) and \textit{Logarithmic RMSE (LRMSE)}
\begin{equation*}
	LRMSE = \sqrt{\frac{1}{N}\sum_{i = 1}^N \bigl(\log(\hat{y}_i)-\log(y_i)\bigr)^2}
\end{equation*}
(\cite[Chapter 8.2.3]{TervenEtAl2025}) as well as \textit{Mean Log10 error}
\begin{equation*}
	Mean Log10 = \frac{1}{N} \sum_{i=1}^{N} \bigl|\log_{10}(\hat{y}_i)-\log_{10}(y_i)\bigr|
\end{equation*}
(\cite[Chapter 8.2.5]{TervenEtAl2025}). 
The last ones can be expressed through a ratio because of the logarithm's rules. 
A similar version of LRMSE is also discussed in \cite[Chapter 3.1.8]{LiEtAl2025}.
Aside from that, \cite{ChenEtAl2016LPRE} discuss ratio-based approaches for their \textit{least absolute relative error (LARE)} (\cites[(3)]{ChenEtAl2010LARE}[(2)]{ChenEtAl2016LPRE}), \textit{least product relative error (LPRE)} (\cite[(3), (4)]{ChenEtAl2016LPRE}), and \textit{general relative error (GRE)} (\cite[(8)]{ChenEtAl2016LPRE}). \par
Speaking of distance-based loss functions, symmetry is sometimes of concern. 
Considering a dis\-tance\--based loss with representing function $\psi$, symmetry is expressed via $\psi(r) = \psi(-r)$, see e.g. \cite[Def. 4.11 (ii)]{Steinwart2007}. Such functions penalize overestimation ($\hat{y} = y + c$) and underestimation ($\hat{y} = y - c$) of $y$, where $\hat{y}=f(x)$, by some $c \geq 0$ equally, because $L(x,y,t-c) = L(x,y,t+c)$ for all $(x,y,t) \in \XY \times \R$ and $c \in \R$.
Therefore, in a ratio-based setting, it makes sometimes sense to use a loss function such that overestimation $\hat{y} = \lambda y$ and underestimation $\hat{y} = \frac{y}{\lambda}$, where $\hat{y}=u(f(x))$ and $\lambda \geq 1$, yields the same loss.
This can be obtained by requiring 
\begin{equation*} 
	\ell(r) = \ell(r^{-1}), \qquad r \in (0, \infty).
\end{equation*}
This idea was already discussed in \cite{LimpertStahel2001}.
We call this property \textbf{ratio-symmetry}.
Roughly speaking, a ratio-based loss function is defined to realize the idea $L(x,y,t) = L(x,\lambda y, \lambda t)$ for all $(x,y,t) \in \XY \times \R$ and $\lambda > 0$ (modulo modification of the prediction through the link function), whereas ratio-symmetry ensures $L(x,y,\lambda t) = L(x,y,\frac{1}{\lambda}t)$ for all $(x,y,t) \in \XY\times \R$ and $\lambda > 0$.
\cite{ChenEtAl2010LARE, ChenEtAl2016LPRE} already mentioned that a loss without one type of relative error could lead to biased estimation and one should, therefore, take both errors into account, one relative to the response and one relative to its prediction.
Clearly, ratio-symmetry for ratio-based loss functions can be useful in some applications and not be useful in others, as is true also for symmetric distance-based loss functions. 
Thinking of loans, for the lending bank, it is worse to not get their money back, once they gave someone credit, than to not give someone credit, who would have paid all his money back (cf. \cite[credit-scoring]{FahrmeirHamerleTutz1996}).
Moreover, \anf{it is [\dots] worse to predict that a person will not have a heart attack when he or she actually will, than vice versa} (\cite[Chapter 9.2.4]{HastieTibshiraniFriedman2017}).
In these cases, one does not wish a ratio-symmetric loss. \par 
In binary classification with output space $Y = \{-1, 1\}$, a margin-based loss function
\begin{equation*}
	L(x,y,t) = \varphi(yt), \qquad (x,y,t) \in \XY \times \R,
\end{equation*}
with representing function $\varphi: \R \to [0, \infty)$ can be expressed in a ratio-based way.
Since $yt = \frac{t}{y}$ holds for $y \in \{-1,1\}$ and $t \in \R$,
\begin{equation*}
	L(x,y,t) = \varphi\Bigl(\frac{t}{y}\Bigr), \qquad (x,y,t) \in \XY \times \R.
\end{equation*}
For that reason, all margin-based loss functions can be presented through a ratio-based approach if we do not take every specification from Definition \ref{Def_ratiobasedLoss} into account.

\section{Properties of ratio-based loss functions} \label{Kap_Eigenschaften}
In this section, we systematically investigate properties of ratio-based loss functions and give some basic examples. These results will be used to investigate properties of the 
corresponding risk functional in Section \ref{Kap_Risiko}.\par
For distance-based loss functions, it is well known that properties of the representing function $\psi$ determine properties of the loss function and the risk function to a great extent.  	
Let us now investigate, whether this is true for ratio-based loss functions, too.\par
If not otherwise mentioned, $L$ is a ratio-based loss function (for regression) in this section. 
As is common in the literature, we say that a rb loss function $L$ has property 
$A$ if $L$ has this property w.r.t. the third argument uniformly for all $(x,y)\in\XY$ (cf. \cite[Chapter 2.2]{SC08}).

\subsection{Continuity}
A distance-based loss function is continuous if and only if the representing function $\psi$ is continuous.
For ratio-based loss functions it is in general necessary to also claim continuity of $u$. 
\begin{lemma} \label{LemmaLstetig}
	Let $L$ be a ratio-based loss function. If the functions $\ell$ and $u$ are continuous, then $L$ is a continuous loss function.
\end{lemma}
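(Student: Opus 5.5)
The plan is to write $L(x,y,\cdot)$ as a composition of continuous maps, for fixed $(x,y)\in\XY$, and then invoke the fact that compositions of continuous functions are continuous. By the convention stated at the beginning of this section, continuity of a loss function means continuity in the third argument $t$ for every $(x,y)\in\XY$. So it suffices to show that $t\mapsto L(x,y,t)$ is continuous on $\R$ for each fixed $(x,y)$.

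Fix $(x,y)\in\XY$ with $Y=(a,b)$ and $0\le a<b\le\infty$. Define $g_y:\R\to(0,\infty)$ by
\begin{equation*}
g_y(t):=\frac{u(t)+c}{y+c}, \qquad t\in\R.
\end{equation*}
The map $g_y$ is well defined and takes values in $(0,\infty)$. Indeed, $u(t)\in Y\subseteq(a,b)$ with $a\ge 0$ gives $u(t)>0$, and $y>a\ge0$. Since $c\ge0$, both $u(t)+c$ and $y+c$ are strictly positive. This positivity check is the only point needing care: it ensures that $\ell$, which is defined only on $(0,\infty)$, is evaluated inside its domain, and that no division by zero occurs. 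This holds even in the strictly ratio-based case $c=0$.

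Next, $g_y$ is continuous. It is the continuous function $u$, shifted by the constant $c$ and multiplied by the positive constant $(y+c)^{-1}$. Consequently $L(x,y,\cdot)=\ell\circ g_y$ is the composition of the continuous map $g_y:\R\to(0,\infty)$ with the continuous map $\ell:(0,\infty)\to[0,\infty)$. Continuity of $\ell$ is with respect to the subspace topology on $(0,\infty)$, which matches the codomain of $g_y$. Hence $\ell\circ g_y$ is continuous, and since $(x,y)$ was arbitrary, $L$ is a continuous loss function.

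If joint continuity in $(y,t)$ were also wanted, the same argument would apply to $(y,t)\mapsto\bigl(u(t)+c\bigr)/(y+c)$ on $Y\times\R$, which is continuous because the denominator stays strictly positive. The lemma, however, only requires continuity in $t$.
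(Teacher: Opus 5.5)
Your proof is correct and is the argument the paper relies on. The paper gives no separate proof of this lemma (it is not among the appendix proofs) and treats it as the immediate composition of the continuous map $t\mapsto \frac{u(t)+c}{y+c}$ into $(0,\infty)$ with the continuous $\ell$; your positivity check, using $a\ge 0$ and $c\ge 0$, is exactly what makes that composition well defined. Reading continuity as continuity of $L(x,y,\cdot)$ for each fixed $(x,y)$ is also the right interpretation here, since a version uniform in $y$ would fail, e.g.\ for $Y=(0,\infty)$, $c=0$, $u=\exp$ and $\ell(r)=|r-1|$.
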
	
Of course, this is only sufficient, as e.g. if $u$ is measurable but discontinuous and $\ell \equiv 0$, then $L$ is still a continuous loss function.

\subsection{Differentiability} \label{chapter_diff}
If all of the considered derivatives exist, with the common notation $L'(x,y,t)$ for the partial derivative $\frac{\partial}{\partial t}L(x,y,t)$, a straightforward calculation yields that 
\begin{equation*} 
	L'(x,y,t) = \ell'\biggl(\frac{u(t)+ c}{y + c}\biggr) \frac{u'(t)}{y + c}, \qquad (x,y,t) \in \XY \times \R.
\end{equation*}
\begin{lemma} \label{LemmaLdiff}
	Let $L$ be a ratio-based loss function. 
	Given two differentiable functions $\ell$ and $u$, $L$ is a differentiable loss function.
\end{lemma}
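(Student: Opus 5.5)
The plan is to write $L(x,y,\cdot)$, for fixed $(x,y)\in\XY$, as a composition $t\mapsto g_y(u(t))$ with $g_y(s):=\ell\bigl(\frac{s+c}{y+c}\bigr)$, and then apply the chain rule twice. Here a \enquote{differentiable loss function} means that $t\mapsto L(x,y,t)$ is differentiable on $\R$ for every $(x,y)\in\XY$, following the convention of \cite[Chapter 2.2]{SC08} recalled above.

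First I check that the composition is well defined. Since $u:\R\to Y=(a,b)$ with $a\geq 0$, we have $u(t)+c>0$, and also $y+c>0$ because $y>a\geq 0$. Hence the ratio $r_y(t):=\frac{u(t)+c}{y+c}$ always lies in $(0,\infty)$, the domain of $\ell$. Next I note that the inner map $t\mapsto r_y(t)$ is an affine transformation of the differentiable function $u$, with positive constants $\frac{1}{y+c}$ and $\frac{c}{y+c}$. It is therefore differentiable on $\R$, with derivative $\frac{u'(t)}{y+c}$.

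Since $\ell$ is differentiable on the open set $(0,\infty)$ and $r_y(t)$ lies in that set, the chain rule gives differentiability of $t\mapsto \ell(r_y(t))=L(x,y,t)$ at every $t\in\R$. It also reproduces the formula for $L'(x,y,t)$ stated before the lemma. As $(x,y)$ was arbitrary, $L$ is differentiable.

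There is no real obstacle. The only point needing care is ensuring that the ratio stays inside the open domain $(0,\infty)$ of $\ell$, so that differentiability of $\ell$ applies at an interior point; this is exactly where the assumptions $Y=(a,b)$ with $a\geq 0$ and $c\geq 0$ are used.
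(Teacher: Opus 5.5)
Your proposal is correct and follows essentially the paper's route: the paper gives no separate appendix proof and relies on the chain-rule computation $L'(x,y,t)=\ell'\bigl(\frac{u(t)+c}{y+c}\bigr)\frac{u'(t)}{y+c}$ stated just before the lemma. Your explicit check that the ratio stays in the open domain $(0,\infty)$ of $\ell$, using $a\geq 0$ and $c\geq 0$, is a detail the paper leaves implicit.
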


\subsection{Convexity} 
Convexity is a strong, but mathematically very nice condition for loss functions, as it implies convexity of the risk functional independent of the probability measure, see e.g. \cite[Lem. 2.13]{SC08}. 
The convexity of the risk functional is often useful to show uniqueness of the learning algorithm, as is well-known e.g. for many regularized kernel based methods. \par
A twice differentiable ratio-based loss function with twice differentiable $\ell$ and $u$ has second derivative (w.r.t. $t$)
\begin{equation*} 
	L''(x,y,t) = \ell''\biggl( \frac{u(t) + c}{y + c}\biggr) \cdot \biggl(\frac{u'(t)}{y + c}\biggr)^2 + \ell'\biggl(\frac{u(t) + c}{y + c}\biggr) \cdot \frac{u''(t)}{y + c} ~.
\end{equation*}
Hence, the assumption that $\ell$ and $u$ are convex functions is -- even under this smoothness assumption on $\ell$ and $u$ -- in general \emph{not} sufficient to guarantee a convex rb loss, as $\ell(r)$ should decrease for $r < 1$. 
From an applied point of view, it does not make sense to assume an increasing function $\ell$, since $\ell(1)=0$ and underestimation, i.e. $\ell(r)$ with $r \ll 1$, should in general yield positive losses.
Furthermore, the assumption of $u$ being convex is not automatically fulfilled, e.g. for logistic distribution function.
In fact, for $Y = (0, 1)$ with corresponding link function $u$ no matter of $c$, a non-constant convex ratio-based loss $L$ is not even possible.
The main reason for that is the following. 
Since $u$ is monotone and surjective, either $\lim_{t\to \infty} u(t) = 1$ or $\lim_{t \to -\infty} u(t) = 1$ holds. 
Without loss of generality, we assume that the first equation holds.
Then, for continuous $\ell$, $\lim_{t \to \infty} L(x,y,t) = \ell \Bigl(\frac{1+c}{y+c}\Bigr) < \infty$ for all $(x,y) \in \XY$, $c \geq 0$.
Under these assumptions, if $c > 0$, $\lim_{t \to -\infty} L(x,y,t) = \ell\Bigl(\frac{c}{y+c}\Bigr) < \infty$, too.
\begin{proposition} \label{PropNoConvexLoss}
	Let $L$ be a non-constant rb loss function. 
	We put that into concrete terms assuming additionally the existence of $\tilde{t} < t < t'$ with $L(x,y,t) < L(x,y,t')$ and $L(x,y,t) < L(x,y,\tilde{t})$. 
	Let $Y = (0,1)$ and consider logistic link function $u$. 
	Then, $L$ is not a convex loss function.
\end{proposition}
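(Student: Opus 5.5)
Suppose, for a contradiction, that $L$ is convex. Since convexity of a loss means convexity in the third argument for every fixed $(x,y)$, it suffices to work with a single $(x,y) \in \XY$ for which the hypothesis supplies $\tilde{t} < t < t'$ with
\begin{equation*}
L(x,y,t) < L(x,y,t') \quad \text{and} \quad L(x,y,t) < L(x,y,\tilde{t}).
\end{equation*}
Write $g(s) := L(x,y,s) = \ell\bigl(\frac{u(s)+c}{y+c}\bigr)$. The plan has two parts. First, show that a convex $g$ that strictly increases somewhere to the right of a point must grow at least linearly, and likewise to the left. Second, show that $g$ stays bounded in the direction in which $u$ tends to $1$, because the logistic link saturates. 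These two facts are incompatible.

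For the first part I will use the standard slope monotonicity of convex functions. Set $m := \frac{g(t')-g(t)}{t'-t} > 0$. For every $s > t'$, the slope inequality $\frac{g(s)-g(t)}{s-t} \ge m$ gives
\begin{equation*}
g(s) \ge g(t) + m(s-t) \longrightarrow \infty \qquad (s \to \infty).
\end{equation*}
Symmetrically, set $\tilde{m} := \frac{g(t)-g(\tilde{t})}{t-\tilde{t}} < 0$. For $s < \tilde{t}$ we get $g(s) \ge g(t) + \tilde{m}(s-t) \to \infty$ as $s \to -\infty$. So $g$ is unbounded at both ends. This is why the hypothesis includes both strict inequalities, and it means I never need to know which end of $\R$ is the relevant one for $u$.

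For the second part I use the logistic link $u(s) = \frac{1}{1+\exp(-s)}$ from Table \ref{table_u}. The argument works equally for its reflection. As $s \to \infty$ we have $u(s) \to 1$, so the ratio $\frac{u(s)+c}{y+c}$ tends to $r_1 := \frac{1+c}{y+c}$. Here $r_1 \in (0,\infty)$ and $r_1 > 1$, since $y < 1$.

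The main obstacle is that $\ell$ is only assumed measurable, so I cannot simply write $g(s) \to \ell(r_1)$ as the motivating discussion does for continuous $\ell$. I will get around this by using convexity itself rather than continuity of $\ell$. The map $u$ is a continuous strictly increasing bijection $\R \to (0,1)$. Define $h(r) := \ell(r)$ for $r$ in the interval $I = \bigl(\frac{c}{y+c}, r_1\bigr)$. Then
\begin{equation*}
g = h \circ \phi \quad \text{with} \quad \phi(s) = \frac{u(s)+c}{y+c},
\end{equation*}
where $\phi$ is an increasing homeomorphism of $\R$ onto $I$.

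Convexity of $g$ forces $g$ to be continuous, and hence $h = g \circ \phi^{-1}$ is continuous on $I$. However, that alone does not give boundedness near $r_1$. So the unboundedness from the first part has to be confronted more cleverly. If $\ell$ is assumed continuous on $(0,\infty)$, as in the paper's motivating discussion preceding the proposition, then $g(s) \to \ell(r_1) < \infty$, which contradicts $g(s) \to \infty$, and we are done.

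If I want to avoid that assumption, I would note that the paper's intended reading clearly uses continuity of $\ell$. I would therefore state that assumption explicitly in the proof, invoking Lemma \ref{LemmaLstetig} for context. With it, the contradiction is immediate at the end where $u \to 1$. The reflected link $u(-s)$ is handled by the same argument with $s \to -\infty$, because of the two-sided unboundedness established in the first part.
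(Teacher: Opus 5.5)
Your argument is essentially the paper's: slope monotonicity of the convex map $s\mapsto L(x,y,s)$ forces at least linear growth, which contradicts the finite limit $\ell\bigl(\frac{1+c}{y+c}\bigr)$ at the end where the logistic link saturates. The paper's proof likewise relies on continuity of $\ell$ there, through the discussion preceding the proposition, and your two-sided growth bound simply replaces its WLOG on the direction in which $u$ tends to $1$.

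The obstacle you flagged can in fact be removed without adding that assumption. Take any $y'\in(0,y)$; convexity, hence continuity, of $L(x,y',\cdot)$, composed with the inverse of the homeomorphism $s\mapsto\frac{u(s)+c}{y'+c}$, shows that $\ell$ is continuous on $\bigl(\frac{c}{y'+c},\frac{1+c}{y'+c}\bigr)$. That interval contains $\frac{1+c}{y+c}$, so $L(x,y,s)\to\ell\bigl(\frac{1+c}{y+c}\bigr)<\infty$ even for merely measurable $\ell$.
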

The same holds true for intervals  $Y = (a, \infty), a > 0$, or $Y =(a,b), 0 \leq a < b$, and continuous functions $\ell$ as either $\lim_{t \to -\infty}L(x,y,t)$ or $\lim_{t \to \infty}L(x,y,t)$ is bounded which gives a contradiction to the loss function's convexity no matter whether $c$ equals zero or not.
Note that convex functions on $\R$ are continuous automatically (\cite[Cor. 10.1.1]{Rockafellar1997}).
Moreover, when $Y = (0, \infty)$ with $c > 0$, again, a convex loss function is not possible since $\lim_{t \to -\infty} L(x,y,t)$ is bounded for $(x,y) \in \XY$. \par 
When using parameter $c = 0$ and link function $u(t) = \e^t$ from $\R$ to $Y = (0, \infty)$, a convex loss is, at least, possible.
Therefore, we will focus on such strict rb loss functions for the moment.
\begin{proposition} \label{Prop_konvexeVerlustfkt_tildeEll}
	Let $L$ be a ratio-based loss function.
	Let $Y = (0, \infty), u(t) = \e^t$, and $c = 0$.
	Define
	\begin{equation*}
		\ell(r) := \widetilde{\ell}(r) + \widetilde{\ell}(r^{-1})-2\widetilde{\ell}(1)
	\end{equation*}
	with $\widetilde{\ell}: (0, \infty) \to [0, \infty)$ being a twice differentiable function satisfying
	\begin{equation} \label{Bedingung_Konvexität_tildeell}
		\widetilde{\ell}'(r) + r\widetilde{\ell}''(r) \geq 0
	\end{equation}
	for all $r \in (0, \infty)$.
	Then, 
	\begin{equation} \label{convexLoss}
		L(x,y,t) = \ell\Bigl(\frac{\e^t}{y}\Bigr) = \widetilde{\ell}\Bigl(\frac{\e^t}{y}\Bigr)+\widetilde{\ell}\Bigl(\frac{y}{\e^t}\Bigr) - 2\widetilde{\ell}(1)
	\end{equation}
	is a convex ratio-based loss function.
\end{proposition}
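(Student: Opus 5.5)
Fix $(x,y)\in\XY$ and define $g_y(t):=\widetilde{\ell}(\e^t/y)$ and $h_y(t):=\widetilde{\ell}(y/\e^t)=\widetilde{\ell}(\e^{-t}y)$. Then $L(x,y,t)=g_y(t)+h_y(t)-2\widetilde{\ell}(1)$. A sum of convex functions is convex and subtracting a constant preserves convexity, so it is enough to show that $g_y$ and $h_y$ are convex on $\R$ for every $y>0$. Convexity will be uniform in $(x,y)$, which is what the paper requires. First, I will check that $L$ is a ratio-based loss in the sense of Definition \ref{Def_ratiobasedLoss}. The function $u(t)=\e^t$ is measurable, monotone and surjective onto $(0,\infty)$, and $\ell(1)=2\widetilde{\ell}(1)-2\widetilde{\ell}(1)=0$. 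Measurability of $\ell$ follows from continuity of $\widetilde{\ell}$.

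The key computation uses the substitution $r=\e^{s}$. Set $\phi(s):=\widetilde{\ell}(\e^{s})$. By the chain rule, $\phi'(s)=\e^{s}\widetilde{\ell}'(\e^s)$ and
\begin{equation*}
\phi''(s)=\e^{s}\widetilde{\ell}'(\e^s)+\e^{2s}\widetilde{\ell}''(\e^s)=\e^{s}\bigl(\widetilde{\ell}'(r)+r\widetilde{\ell}''(r)\bigr)\big|_{r=\e^s}.
\end{equation*}
By condition (\ref{Bedingung_Konvexität_tildeell}), $\phi''\geq 0$ on $\R$, so $\phi$ is convex. The condition is exactly $\frac{\dd}{\dd r}\bigl(r\widetilde{\ell}'(r)\bigr)\geq 0$, which is the convexity of $\widetilde{\ell}$ in logarithmic coordinates.

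Next, I will express both summands through $\phi$: $g_y(t)=\phi(t-\ln y)$ and $h_y(t)=\phi(\ln y-t)$. Each is $\phi$ composed with an affine map of $t$, and composing a convex function with an affine map preserves convexity. Hence $g_y$ and $h_y$ are convex, and so is $t\mapsto L(x,y,t)$ for all $(x,y)$.

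The main point to handle carefully is nonnegativity. A loss must map into $[0,\infty)$, and the definition of a rb loss requires $\ell\geq 0$, but $\widetilde{\ell}(r)+\widetilde{\ell}(r^{-1})-2\widetilde{\ell}(1)$ is not obviously nonnegative. I would write $\ell(\e^{s})=\phi(s)+\phi(-s)-2\phi(0)$. The function $s\mapsto\phi(s)+\phi(-s)$ is convex and even, so it attains its minimum at $s=0$: by convexity, $\tfrac12(\psi(s)+\psi(-s))\geq\psi(0)$ for $\psi(s)=\phi(s)+\phi(-s)$. This gives $\ell\geq 0$ with $\ell(1)=0$, and it also shows that $\ell$ is ratio-symmetric. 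This completes the verification that (\ref{convexLoss}) defines a convex ratio-based loss function.
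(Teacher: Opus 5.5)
Your proof is correct and follows the paper's argument. The paper also sets $f(t)=\widetilde{\ell}(e^t)$ and uses $\widetilde{\ell}'(r)+r\widetilde{\ell}''(r)\ge 0$ to get $f''\ge 0$. It then obtains $\ell\ge 0$ from $f(0)\le\frac12 f(t)+\frac12 f(-t)$, which you could use directly instead of going through the even function $\psi$.

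The only difference is cosmetic. The paper checks convexity by computing $L''(x,y,t)$ explicitly with $q(t)=e^t/y$, and that expression is exactly $\phi''(t-\ln y)+\phi''(\ln y-t)$. You get the same conclusion from composing the convex $\phi$ with affine maps.
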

Note that $u' = u$ yields easy calculations here and that the statement can be improved to strict convexity by claiming strict inequality in \eqref{Bedingung_Konvexität_tildeell}.
\begin{example}
	In the context from Proposition \ref{Prop_konvexeVerlustfkt_tildeEll},  $\widetilde{\ell}(r):= r^\alpha, \alpha \geq 0$ satisfies
	\begin{equation*}
		\widetilde{\ell}'(r) + r\widetilde{\ell}''(r) = \alpha r^{\alpha -1} + r \alpha(\alpha-1)r^{\alpha-2} = \alpha^2 r^{\alpha-1} \geq 0
	\end{equation*}
	for all $r \in (0, \infty)$.
	Thus, using Proposition \ref{Prop_konvexeVerlustfkt_tildeEll}, we get the  convex ratio-based loss function
	\begin{equation*}
		L_\alpha(x,y,t) := \Bigl(\frac{\e^t}{y}\Bigr)^\alpha + \Bigl(\frac{y}{\e^t}\Bigr)^\alpha - 2.
	\end{equation*}
\end{example}
If $\alpha = 0$, the loss function is constantly zero. 
This is, of course, the trivial case and not interesting for applications.
The special case  $\alpha = 1$ will be discussed in Chapter \ref{Kap_Bsp}.
Interestingly, $\ell$ itself does not need to be convex.
Especially, if $\alpha < 1$,
\begin{equation*}
	\ell''(r) = \alpha r^{-2}\bigl((\alpha-1)r^\alpha + (\alpha+1)r^{-\alpha}\bigr), \; r \in (0, \infty).
\end{equation*}
Choosing $r = \biggl(2\sqrt{\frac{1+\alpha}{1-\alpha}}\biggr)^{\frac{1}{\alpha}} > 0$ gives $\ell''(r) < 0$. 
This means, $\ell$ is not convex for $\alpha < 1$.
Instead, $\ell$ is convex if $\alpha \geq 1$.
Nevertheless, the corresponding loss function $L$ is always convex. \par
We are now interested in a certain choice of $\widetilde{\ell}$ to ensure the claims of Proposition \ref{Prop_konvexeVerlustfkt_tildeEll}.
\begin{proposition} \label{Prop_BspKonvexeVerlustfkt_Integral}
	Let $L$ be a ratio-based loss function. 
	Let $Y = (0, \infty), u(t) = \e^t$, and $g \in C^1((0, \infty))$ with $g$ being an increasing function.
	Then,
	\begin{equation*}
		\widetilde{\ell}(r) := C + \int_{r_0}^{r} \frac{g(t)}{t} \; \mathrm{d}t, \qquad r \in (0, \infty)
	\end{equation*}
	with $r_0 \geq 0$ and $C \in \R$ such that $\widetilde{\ell}$ is well-defined, non-negative, and finite, fulfills \eqref{Bedingung_Konvexität_tildeell} for all $r \in (0, \infty)$. 
	Thus, 
	\begin{equation*}
		L(x,y,t) = \widetilde{\ell}\Bigl(\frac{\e^t}{y}\Bigr) + \widetilde{\ell}\Bigl(\frac{y}{\e^t}\Bigr) - 2\widetilde{\ell}(1)
	\end{equation*}
	is a convex loss function.
\end{proposition}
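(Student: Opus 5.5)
The plan is to reduce the statement to Proposition \ref{Prop_konvexeVerlustfkt_tildeEll}: once $\widetilde{\ell}$ is shown to be twice differentiable on $(0,\infty)$, non-negative, and to satisfy \eqref{Bedingung_Konvexität_tildeell}, that proposition gives convexity of the displayed loss directly.

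First, regularity. For $r>0$ the integrand $t \mapsto g(t)/t$ is continuous on $(0,\infty)$, since $g\in C^1((0,\infty))$. Hence, by the fundamental theorem of calculus, $\widetilde{\ell}$ is differentiable with
\begin{equation*}
	\widetilde{\ell}'(r) = \frac{g(r)}{r}.
\end{equation*}
The right-hand side is again $C^1$ on $(0,\infty)$, so $\widetilde{\ell}$ is twice differentiable with
\begin{equation*}
	\widetilde{\ell}''(r) = \frac{g'(r)}{r} - \frac{g(r)}{r^2}.
\end{equation*}
If $r_0=0$, the integral is improper at $0$. Here we use the standing hypothesis that $C$ and $r_0$ are chosen so that $\widetilde{\ell}$ is well-defined and finite. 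Writing $\int_{r_0}^r=\int_{r_0}^{1}+\int_1^r$, the first term is a finite constant that can be absorbed into $C$. The derivative formulas therefore hold for every $r>0$ regardless of $r_0$.

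Second, the key identity. Substituting,
\begin{equation*}
	\widetilde{\ell}'(r) + r\widetilde{\ell}''(r) = \frac{g(r)}{r} + g'(r) - \frac{g(r)}{r} = g'(r).
\end{equation*}
Since $g$ is increasing and differentiable, $g'(r)\ge 0$ for all $r>0$, so \eqref{Bedingung_Konvexität_tildeell} holds. It is cleaner to note that $\widetilde{\ell}'(r)+r\widetilde{\ell}''(r)=\frac{\mathrm{d}}{\mathrm{d}r}\bigl(r\widetilde{\ell}'(r)\bigr)$ and $r\widetilde{\ell}'(r)=g(r)$, which makes the computation a one-liner. Non-negativity and finiteness of $\widetilde{\ell}$ are part of the hypotheses on $C$ and $r_0$, so $\widetilde{\ell}\colon(0,\infty)\to[0,\infty)$ is as required.

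Finally, Proposition \ref{Prop_konvexeVerlustfkt_tildeEll} applies with $Y=(0,\infty)$, $u=\exp$ and $c=0$, and yields convexity of $L$. We should also check that $L$ is a genuine rb loss. The function $\ell(r)=\widetilde{\ell}(r)+\widetilde{\ell}(r^{-1})-2\widetilde{\ell}(1)$ is measurable because it is continuous, and it satisfies $\ell(1)=0$. Its non-negativity is implicit in the statement of Proposition \ref{Prop_konvexeVerlustfkt_tildeEll}. If we want to verify it here, set $h(s):=\ell(\e^s)$. Then $h$ is convex, since $h''(s)=\e^s g'(\e^s)+\e^{-s}g'(\e^{-s})\ge 0$, and even, with $h(0)=0$. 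An even convex function attains its minimum at $0$, so $h\ge 0$.

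The main obstacle is only bookkeeping: handling the improper case $r_0=0$ so that the derivative formulas remain valid. This is dispatched by the absorb-into-constant argument above.
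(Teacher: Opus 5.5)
Your proof is correct and follows the paper's argument: the fundamental theorem of calculus gives $\widetilde{\ell}'(r)=g(r)/r$, so $\widetilde{\ell}'(r)+r\widetilde{\ell}''(r)=g'(r)\ge 0$ because $g$ is increasing, and Proposition \ref{Prop_konvexeVerlustfkt_tildeEll} then gives convexity. Your extra bookkeeping is sound and is not a different route: this covers the improper case $r_0=0$ and the non-negativity of $\ell$, which the paper checks inside the proof of Proposition \ref{Prop_konvexeVerlustfkt_tildeEll}.
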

\begin{example} \label{Bsp_log(1+r)}
	In the notation from Proposition \ref{Prop_BspKonvexeVerlustfkt_Integral},  $\widetilde{\ell}(r) := \log(1+r)$ gives a convex rb loss function. 
	Integrating $g(t) = \frac{t}{t+1}$ on the open interval $(0,r)$ yields
	\begin{equation*}
		\int_0^r \frac{g(t)}{t} \; \mathrm{d}t = \int_{0}^{r} \frac{1}{t+1} \; \mathrm{d}t = \log(1+r) = \widetilde{\ell}(r).
	\end{equation*}
	Here, $g$ is a differentiable and increasing function on $(0, \infty)$.
	Because of Proposition \ref{Prop_BspKonvexeVerlustfkt_Integral} we obtain that  
	\begin{equation*}
		L(x,y,t) = \log\Bigl(1+\frac{\e^t}{y}\Bigr) + \log\Bigl(1+\frac{y}{\e^t}\Bigr) - 2\log(2)
	\end{equation*}
	is a convex ratio-based loss function with minimum $L(x,y,\log(y)) = 0$ in $t = \log(y)$. 
	Furthermore, $L(x,y,\cdot)$ is symmetric around $\log(y)$.
	Additionally, $L$ can be written in a distance-based manner.
	Using the logistic loss function (cf. e.g. \cite[Ex. 2.40]{SC08})
	\begin{equation*}
		\psi(r) = 2\log(1+\e^{-r})+r-2\log(2) = \log\Bigl(\frac{(1+\e^r)^2}{4\e^r}\Bigr), \qquad \psi(0) = 0,
	\end{equation*}
	one can write $L(x,y,t) = \psi(\log(y)-t)$.
	Note that $\ell$ also yields a similar representation, i.e.
	\begin{equation} \label{log(1+r)Loss}
		\ell(r) = \log\biggl(\frac{(1+r)^2}{4r}\biggr).
	\end{equation}
	\begin{figure}[ht]
		\centering
		\begin{subfigure}{0.31\textwidth}
			\centering
			\includegraphics[width=\textwidth]{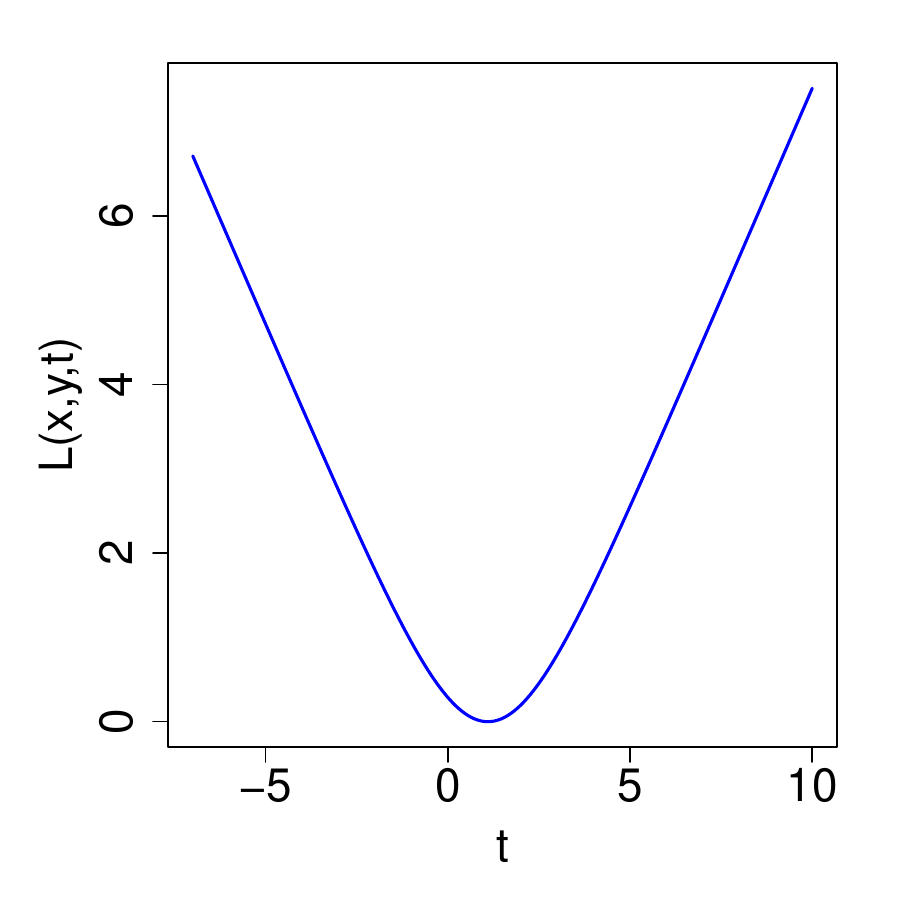}
			\caption{Loss function $L$ in $t$ for $y = 3$}
		\end{subfigure}
		\hspace*{1cm}
		\begin{subfigure}{0.31\textwidth}
			\centering
			\includegraphics[width=\textwidth]{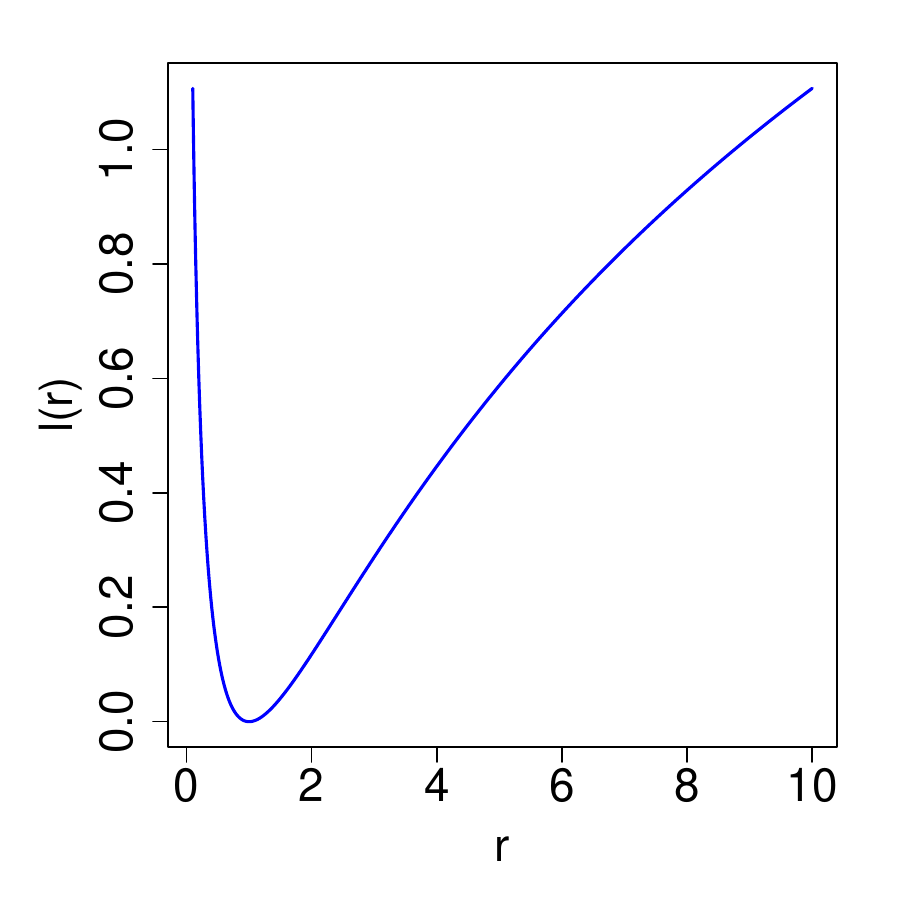}
			\caption{Representing function $\ell$}
		\end{subfigure}
		\caption{Convex and Lipschitz continuous loss function $L$ (left) with representing function $\ell$ (right) from Example \ref{Bsp_log(1+r)}}
	\end{figure}
\end{example}
\begin{remark}
	Using either $g(t) = \frac{t}{t+1}+\frac{t}{(1+t)^2}$ with $\widetilde{\ell}(r) = \log(1+r)+\frac{r}{r+1}$ or $g(t) = (\frac{t}{t+1})^2$ with $\widetilde{\ell}(r) = \log(1+r)+\frac{1}{r+1}$, one can apply Proposition \ref{Prop_BspKonvexeVerlustfkt_Integral} as well. 
	Note that these auxiliary functions produce the same loss function as Example \ref{Bsp_log(1+r)}. 
\end{remark}
\begin{example} \label{bsp_konvex_Lipschitz_3}
	$g(t) = \frac{\sqrt{t}}{2\sqrt{t+1}}$ is a differentiable, increasing function on $(0, \infty)$ which satisfies $0 \leq g(t) \leq \frac{1}{2}$ for all $t \in (0, \infty)$.
	Hence, rb loss function
	\begin{equation*}
		L(x,y,t) = \log \biggl(\sqrt{\frac{\e^t}{y}} + \sqrt{1+\frac{\e^t}{y}}\biggr) + \log\biggl(\sqrt{\frac{y}{\e^t}} + \sqrt{1+\frac{y}{\e^t}}\biggr) - 2 \log\Bigl(1 + \sqrt{2}\Bigr),
	\end{equation*}
	based on 
	\begin{equation*}
		\widetilde{\ell}(r) := \int_{0}^{r} \frac{1}{2\sqrt{t(t+1)}} \; \mathrm{d}t = \log(\sqrt{r} + \sqrt{1+r}),
	\end{equation*}
	which yields representing function
	\begin{equation}\label{bsp_3_Loss}
		\ell(r) = \log\biggl(\frac{(1+\sqrt{1+r})(\sqrt{r}+\sqrt{1+r})}{(3+2\sqrt{2})\sqrt{r}}\biggr),
	\end{equation}
	is a convex rb loss according to Proposition \ref{Prop_BspKonvexeVerlustfkt_Integral} which differs from the previous one.
	$L(x,y,\cdot)$ attains its minimum $L(x,y,\log(y))=0$ in $t = \log(y)$.
	Moreover, $L(x,y,\cdot)$ is symmetric around it.
	In a distance-based approach, $L(x,y,t) = \psi(\log(y)-t)$ holds where
	\begin{equation*}
		\psi(r) = \log(\sqrt{\e^r}+\sqrt{1+\e^r}) + \log(\sqrt{\e^{-r}}+\sqrt{1+\e^{-r}}) - 2\log(1+\sqrt{2}), \qquad \psi(0) = 0.
	\end{equation*}
\end{example}
\begin{corollary}
	Let $L$ be a ratio-based loss function.
	Let $Y = (0, \infty)$, $u = \exp$, $c = 0$, and $g \in C^1((0, \infty))$ increasing.
	Let 
	\begin{equation*}
		\ell(r) := C + \int_{r_0}^r \frac{g(t)}{t} \; \dd t, \qquad r \in (0, \infty)
	\end{equation*}
	with $r_0 \geq 0$, $C \in \R$ such that $\ell$ is well-defined, non-negative, finite, ratio-symmetric, and $\ell(1) = 0$.
	Then, we get a convex rb loss function
	\begin{equation*}
		L(x,y,t) = \ell\biggl(\frac{\e^t}{y}\biggr) .
	\end{equation*}
\end{corollary}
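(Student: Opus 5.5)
The plan is to reduce the corollary to Proposition \ref{Prop_BspKonvexeVerlustfkt_Integral}, and in turn to Proposition \ref{Prop_konvexeVerlustfkt_tildeEll}, using ratio-symmetry. Set $\widetilde{\ell} := \ell$. By hypothesis $\ell$ is of the integral form required in Proposition \ref{Prop_BspKonvexeVerlustfkt_Integral}: $g \in C^1((0,\infty))$ is increasing, and $\ell$ is well-defined, non-negative and finite. Hence $\widetilde{\ell}$ satisfies condition \eqref{Bedingung_Konvexität_tildeell}.

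This can also be checked directly. By the fundamental theorem of calculus, $\widetilde{\ell}'(r) = g(r)/r$, and therefore
\begin{equation*}
	\widetilde{\ell}'(r) + r\widetilde{\ell}''(r) = \frac{g(r)}{r} + r\Bigl(\frac{g'(r)}{r} - \frac{g(r)}{r^2}\Bigr) = g'(r) \geq 0 .
\end{equation*}
So Proposition \ref{Prop_BspKonvexeVerlustfkt_Integral} shows that
\begin{equation*}
	\widetilde{L}(x,y,t) := \widetilde{\ell}\Bigl(\frac{\e^t}{y}\Bigr) + \widetilde{\ell}\Bigl(\frac{y}{\e^t}\Bigr) - 2\widetilde{\ell}(1)
\end{equation*}
is convex in $t$.

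Next I use ratio-symmetry and $\ell(1)=0$ to identify $\widetilde{L}$. Since $\ell(r^{-1}) = \ell(r)$ and $\widetilde{\ell}(1) = \ell(1) = 0$,
\begin{equation*}
	\widetilde{L}(x,y,t) = 2\,\ell\Bigl(\frac{\e^t}{y}\Bigr) = 2L(x,y,t).
\end{equation*}
Because a positive multiple of a convex function is convex, $L = \tfrac12 \widetilde{L}$ is convex in $t$ for every $(x,y)$, i.e.\ convex uniformly in $(x,y)$.

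It remains to confirm that $L$ is a ratio-based loss in the sense of Definition \ref{Def_ratiobasedLoss}. Here $c=0$ and $u=\exp$ is measurable, monotone and surjective onto $Y=(0,\infty)$. The function $\ell$ is continuous, hence measurable, non-negative, and satisfies $\ell(1)=0$.

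The only delicate point is that Proposition \ref{Prop_konvexeVerlustfkt_tildeEll} requires $\widetilde{\ell}$ to be twice differentiable. This holds because $g\in C^1$ and $t\mapsto 1/t$ is smooth on $(0,\infty)$. If the inherited proposition is felt to be insufficient, the fallback is to verify convexity directly. Writing $s = t - \log y$, one has $L = \ell(\e^s)$, and
\begin{equation*}
	\frac{\dd^2}{\dd s^2}\,\ell(\e^s) = \e^s\ell'(\e^s) + \e^{2s}\ell''(\e^s) = g'(\e^s) \geq 0 .
\end{equation*}
This confirms convexity independently of the symmetrization step.
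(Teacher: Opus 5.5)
Your proposal is correct and follows the route the paper intends: the corollary is the ratio-symmetric special case of Proposition~\ref{Prop_BspKonvexeVerlustfkt_Integral}, and your choice $\widetilde{\ell}=\ell$ with $\widetilde{L}=2L$ is equivalent to applying that proposition with $\widetilde{\ell}=\ell/2$ (integrand $g/2$). One small slip in your fallback: $\frac{\dd^2}{\dd s^2}\ell(\e^s)=\e^s g'(\e^s)$, not $g'(\e^s)$. This is harmless since $\e^s>0$, and indeed $\frac{\dd}{\dd t}L(x,y,t)=g(\e^t/y)$ is increasing in $t$, so ratio-symmetry is not even needed for the convexity itself.
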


\subsection{Lipschitz Continuity}
For the sake of convexity, one could think about replacing $Y = (0,1)$ by its superset $Y = (0,\infty)$ and using $c = 0$.
However, choosing such output space can be disadvantageous in terms of Lipschitz continuity.
\begin{lemma} \label{LipschitzL} 
	Let $L$ be a ratio-based loss function. 
	Let $\ell$ and $u$ be Lipschitz-continuous and $c \geq 0$. 
	Consider the output space $Y = (a, b)$, $0 \leq a < b \leq \infty$  with $a + c > 0$. 
	Then, rb loss \eqref{ratiobased} is a Lipschitz continuous loss function with Lipschitz constant $|L|_1 \leq \frac{|\ell|_1 |u|_1}{a+c}$.
\end{lemma}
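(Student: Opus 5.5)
The plan is to show directly that $t \mapsto L(x,y,t)$ is Lipschitz with a constant independent of $(x,y)$. Following the convention of \cite[Chapter 2.2]{SC08} recalled at the start of this section, we fix $(x,y)\in\XY$ and take $t,t'\in\R$. The argument is a composition of two Lipschitz maps, and the scaling by $\frac{1}{y+c}$ must be controlled uniformly in $y$.

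\textbf{Step 1.} By Definition~\ref{Def_ratiobasedLoss} we have
\begin{equation*}
	|L(x,y,t)-L(x,y,t')| = \biggl|\ell\Bigl(\frac{u(t)+c}{y+c}\Bigr)-\ell\Bigl(\frac{u(t')+c}{y+c}\Bigr)\biggr|.
\end{equation*}
Both arguments of $\ell$ lie in $(0,\infty)$, since $u(t)\in Y\subseteq(a,b)$ and $a+c>0$ give $u(t)+c>0$ and $y+c>0$. So the Lipschitz continuity of $\ell$ on $(0,\infty)$, with constant $|\ell|_1$, applies and bounds the right-hand side by
\begin{equation*}
	|\ell|_1\,\frac{|u(t)-u(t')|}{y+c}.
\end{equation*}

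\textbf{Step 2.} The Lipschitz continuity of $u$ gives $|u(t)-u(t')|\le |u|_1|t-t'|$.

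\textbf{Step 3.} This is where uniformity in $y$ comes in. Since $y\in(a,b)$, we have $y+c > a+c>0$, hence $\frac{1}{y+c}<\frac{1}{a+c}$, a bound independent of $(x,y)$. This is exactly why the hypothesis $a+c>0$ is needed. If $a+c=0$, the factor $\frac{1}{y+c}$ blows up as $y\downarrow 0$, and no uniform constant exists. This explains the remark preceding the lemma that $Y=(0,\infty)$ with $c=0$ is disadvantageous.

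Combining the three steps yields
\begin{equation*}
	|L(x,y,t)-L(x,y,t')|\le \frac{|\ell|_1|u|_1}{a+c}|t-t'|
\end{equation*}
for all $(x,y)\in\XY$ and $t,t'\in\R$. This gives both the Lipschitz continuity of $L$ and the stated bound $|L|_1\le\frac{|\ell|_1|u|_1}{a+c}$.

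The only point needing care is Step 3 together with checking that $\ell$ is evaluated on its domain; the rest is a routine composition estimate. Measurability of $L$ is inherited from $\ell$ and $u$ and is not at issue here.
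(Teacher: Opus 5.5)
Your proof is correct and is essentially the paper's argument: the composition estimate $|L(x,y,t)-L(x,y,t')|\le |\ell|_1|u|_1|t-t'|/(y+c)$, then the uniform bound $y+c\ge a+c>0$, then the supremum over $(x,y)$. One caveat concerns only your aside, not the proof: when $a+c=0$ it is merely this estimate that breaks down, not Lipschitz continuity of $L$ itself (e.g.\ for $Y=(0,1)$, $c=0$, the logistic link and the bounded $\ell(r)=\min\{|r-1|,1\}$ one gets $|L'|\le 2$), so ``no uniform constant exists'' is too strong.
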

Here, both of the critical values $a = 0$ and $b = \infty$ are possible.
In case $a = 0$, we only require $c > 0$. 
Another possibility to get a Lipschitz continuous loss is to bound its derivative.
\begin{lemma} \label{Lemma_Lipschitz_beschrAbl}
	Let $L$ be a differentiable (ratio-based) loss function and $M \in (0, \infty)$. Suppose,
	\begin{equation*}
		\sup_{(x,y,t) \in \XY \times \R} |L'(x,y,t)| \leq M < \infty.
	\end{equation*}
	Then, $L$ is a Lipschitz continuous loss function.
\end{lemma}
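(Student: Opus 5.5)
The argument reduces to the one-dimensional mean value theorem, applied for each fixed pair $(x,y)\in\XY$. First we fix what \emph{Lipschitz continuous loss function} means in the paper's convention (cf.\ \cite[Chapter 2.2]{SC08}): there must be a constant $|L|_1<\infty$ such that
\begin{equation*}
	|L(x,y,t)-L(x,y,t')| \leq |L|_1\,|t-t'|
\end{equation*}
for all $(x,y)\in\XY$ and all $t,t'\in\R$. The constant is required to be uniform in $(x,y)$. The claim is that $|L|_1=M$ works.

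Now fix $(x,y)\in\XY$ and take $t<t'$ in $\R$. By assumption, $t\mapsto L(x,y,t)$ is differentiable on all of $\R$, so it is continuous on $[t,t']$ and differentiable on $(t,t')$. The mean value theorem gives some $\xi\in(t,t')$ with
\begin{equation*}
	L(x,y,t')-L(x,y,t) = L'(x,y,\xi)\,(t'-t).
\end{equation*}
Taking absolute values and using $|L'(x,y,\xi)|\leq \sup_{(x,y,t)\in\XY\times\R}|L'(x,y,t)|\leq M$, we get $|L(x,y,t')-L(x,y,t)|\leq M|t'-t|$. The case $t=t'$ is trivial, and the case $t>t'$ follows by symmetry.

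Since $M$ bounds the derivative over all of $\XY\times\R$, the same constant serves every $(x,y)$. This gives uniform Lipschitz continuity with $|L|_1\leq M$.

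There is no real obstacle here. The only points needing care are that differentiability in $t$ is required on the whole real line, so that the mean value theorem applies on every interval, and that the supremum in the hypothesis is taken jointly over $(x,y,t)$, which is exactly what makes the constant uniform. The ratio-based structure is not used at all. For a ratio-based $L$ one may add a remark: by the derivative formula of Section~\ref{chapter_diff}, the hypothesis amounts to
\begin{equation*}
	\biggl|\ell'\Bigl(\frac{u(t)+c}{y+c}\Bigr)\frac{u'(t)}{y+c}\biggr|\leq M
\end{equation*}
for all $y\in Y$ and $t\in\R$.
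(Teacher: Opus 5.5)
Your proof is correct and takes the same approach as the paper, which simply cites the standard fact that a differentiable function with bounded derivative is Lipschitz continuous. You write out the underlying mean value theorem argument for each fixed $(x,y)$ and correctly observe that the joint supremum over $\XY\times\R$ gives the uniform constant $|L|_1\leq M$.
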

\begin{proof}
	Any differentiable function with bounded first derivative is Lipschitz continuous (cf. e.g. \cite[p. 147]{Königsberger2004}).
\end{proof}
Indeed, there is a equivalence (\cite[Chapter 1, Thm. 7.3]{ClarkeEtAl1998}).
Note that we do not need a \emph{ratio-based} loss function here. 
\begin{lemma} \label{Lemma_Lipschitz_beschrAbl2}
	Let $L$ be a differentiable (ratio-based) loss function such that $|L'|$ is not bounded. Then, $L$ is not a Lipschitz continuous loss function.
\end{lemma}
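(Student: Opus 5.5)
We argue by contraposition: assume $L$ is a Lipschitz continuous loss function with constant $|L|_1<\infty$, i.e.
\begin{equation*}
	|L(x,y,t)-L(x,y,s)| \leq |L|_1 \, |t-s|, \qquad (x,y)\in\XY,\; s,t\in\R,
\end{equation*}
and show that $|L'|$ is then bounded by $|L|_1$. This is the converse direction of Lemma \ref{Lemma_Lipschitz_beschrAbl}, and it is the equivalence cited from \cite[Chapter 1, Thm. 7.3]{ClarkeEtAl1998}. As in Lemma \ref{Lemma_Lipschitz_beschrAbl}, the ratio-based structure plays no role, so we treat $L$ as an arbitrary differentiable loss.

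Fix $(x,y,t)\in\XY\times\R$. Since $L(x,y,\cdot)$ is differentiable at $t$, the derivative is the limit of difference quotients:
\begin{equation*}
	L'(x,y,t)=\lim_{h\to 0}\frac{L(x,y,t+h)-L(x,y,t)}{h}.
\end{equation*}
By the Lipschitz inequality, every such difference quotient has absolute value at most $|L|_1$, uniformly in $h\neq 0$. Because $z\mapsto |z|$ is continuous, passing to the limit gives $|L'(x,y,t)|\leq |L|_1$.

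The point $(x,y,t)$ was arbitrary and $|L|_1$ does not depend on it. Hence
\begin{equation*}
	\sup_{(x,y,t)\in\XY\times\R}|L'(x,y,t)| \leq |L|_1 < \infty,
\end{equation*}
which contradicts the assumption that $|L'|$ is unbounded. Therefore $L$ is not a Lipschitz continuous loss function.

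The only step that needs care is the meaning of \emph{not bounded}. The convention of Section \ref{Kap_Eigenschaften} is that properties hold w.r.t. the third argument uniformly in $(x,y)$, so \emph{unbounded} means the supremum over all of $\XY\times\R$ is infinite. The argument above bounds exactly this joint supremum, since the Lipschitz constant is uniform. If instead $|L'(x,y,\cdot)|$ were unbounded for some fixed $(x,y)$, the same argument applied to that section alone already rules out Lipschitz continuity. No real obstacle remains beyond this bookkeeping.
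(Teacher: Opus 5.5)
Your proposal is correct and follows essentially the same route as the paper: assume a uniform Lipschitz constant $K$ and bound each difference quotient of $L(x,y,\cdot)$ by $K$. Passing to the limit gives $|L'(x,y,t)| \leq K$ for all $(x,y,t)$, which contradicts unboundedness of $|L'|$. Your version is, if anything, slightly cleaner than the paper's. The paper writes a $\lim$ of a supremum-based quotient whose existence is not justified, whereas you bound each quotient uniformly before taking the limit.
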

\begin{example}[Continuation of Example \ref{Bsp_log(1+r)}]
	One can show that
	\begin{equation*}
		L(x,y,t) = \log\Bigl(1+\frac{\e^t}{y}\Bigr) + \log\Bigl(1+\frac{y}{\e^t}\Bigr) - 2\log(2)
	\end{equation*}
	is a Lipschitz continuous loss function. 
	For this, we apply Lemma \ref{Lemma_Lipschitz_beschrAbl} since
	\begin{equation*}
		|L'(x,y,t)| = \Bigl|\frac{1}{1+\frac{\e^t}{y}} \frac{\e^t}{y} - \frac{1}{1+\frac{y}{\e^t}}\frac{y}{\e^t}\Bigr| = \Bigl|\frac{\e^t}{y+\e^t} - \frac{y}{\e^t+y}\Bigr|\leq 2.
	\end{equation*}
\end{example}
This result can be summarized in the following proposition.
\begin{proposition}[Continuation of Proposition \ref{Prop_BspKonvexeVerlustfkt_Integral}] \label{Prop_Lipschitz_Integral}
	Let the assumptions of Proposition \ref{Prop_BspKonvexeVerlustfkt_Integral} be satisfied, i.e. let $L$ be a ratio-based loss function with $Y = (0, \infty), u(t) =\e^t$, and $g \in C^1((0, \infty))$ being increasing. 
	Additionally, assume $-M \leq g(r) \leq M$ holds for some $M \in (0, \infty)$.
	Besides, define
	\begin{equation*}
		\widetilde{\ell}(r) := C + \int_{r_0}^r \frac{g(t)}{t} \; \mathrm{d}t, \qquad r \in (0, \infty)
	\end{equation*}
	for $r_0 \geq 0$ and $C \in \R$ such that $\widetilde{\ell}$ is well-defined, non-negative, and finite.
	Then,
	\begin{equation*}
		L(x,y,t) = \widetilde{\ell}\Bigl(\frac{\e^t}{y}\Bigr)+\widetilde{\ell}\Bigl(\frac{y}{\e^t}\Bigr)-2\widetilde{\ell}(1)
	\end{equation*}
	is a Lipschitz continuous ratio-based loss function with Lipschitz constant $|L|_1 \leq 2M$.
\end{proposition}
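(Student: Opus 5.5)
The plan is to compute the derivative of $L$ with respect to $t$ explicitly and bound it uniformly by $2M$; Lemma \ref{Lemma_Lipschitz_beschrAbl} then yields Lipschitz continuity with $|L|_1 \leq 2M$.

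First, $\widetilde{\ell}$ is differentiable on $(0,\infty)$. Since $g \in C^1((0,\infty))$, the integrand $t \mapsto g(t)/t$ is continuous on $(0,\infty)$. By the fundamental theorem of calculus,
\begin{equation*}
	\widetilde{\ell}'(r) = \frac{g(r)}{r}, \qquad r \in (0,\infty).
\end{equation*}
This holds regardless of the choice of $r_0 \geq 0$ and $C$, because $\widetilde{\ell}$ is assumed well-defined and finite. The case $r_0 = 0$ needs only a remark: the difference $\widetilde{\ell}(r) - \widetilde{\ell}(r')$ is an ordinary integral over a compact subinterval of $(0,\infty)$. Consequently $L(x,y,\cdot)$ is differentiable as a composition with $t \mapsto \e^t/y$ and $t \mapsto y/\e^t$.

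Next comes the key computation, which mirrors the one done for Example \ref{Bsp_log(1+r)}. Put $r = \e^t/y$. Then
\begin{equation*}
	\frac{\partial}{\partial t}\frac{\e^t}{y} = r
	\qquad\text{and}\qquad
	\frac{\partial}{\partial t}\frac{y}{\e^t} = -r^{-1}.
\end{equation*}
Hence
\begin{equation*}
	L'(x,y,t) = \frac{g(r)}{r}\, r - \frac{g(r^{-1})}{r^{-1}}\, r^{-1} = g\Bigl(\frac{\e^t}{y}\Bigr) - g\Bigl(\frac{y}{\e^t}\Bigr).
\end{equation*}
The factors $1/r$ from $\widetilde{\ell}'$ cancel exactly against the inner derivatives; this is precisely where $u = \exp$ (i.e.\ $u' = u$) enters. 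Using $-M \leq g \leq M$, we obtain
\begin{equation*}
	|L'(x,y,t)| \leq |g(r)| + |g(r^{-1})| \leq 2M
\end{equation*}
uniformly in $(x,y,t) \in \XY \times \R$.

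Finally, Lemma \ref{Lemma_Lipschitz_beschrAbl} gives Lipschitz continuity. To get the explicit constant, apply the mean value theorem in $t$ for each fixed $(x,y)$: $|L(x,y,t)-L(x,y,t')| \leq 2M|t-t'|$, so $|L|_1 \leq 2M$. The ratio-based structure and non-negativity of $L$ are already provided by Proposition \ref{Prop_BspKonvexeVerlustfkt_Integral}; monotonicity of $g$ is not even needed for this part.

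There is no real obstacle here. The only point needing care is justifying differentiation under the integral when $r_0 = 0$, which is settled by the remark above about differences of $\widetilde{\ell}$.
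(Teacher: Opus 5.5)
Your proposal is correct and essentially identical to the paper's proof: the paper likewise computes $L'(x,y,t) = g(\e^t/y) - g(y/\e^t)$, bounds it by $2M$, and invokes Lemma \ref{Lemma_Lipschitz_beschrAbl}. Your extra remarks on the case $r_0 = 0$ and on using the mean value theorem to get the explicit constant spell out what the paper leaves implicit.
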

\begin{corollary}
	Let $Y = (0, \infty)$, $u = \exp$, and $c = 0$. Let $g \in C^1((0, \infty))$ be an increasing function fulfilling $-M \leq g(r) \leq M$ for some $M \in (0, \infty)$.
	Let 
	\begin{equation*}
		\ell(r) := C + \int_{r_0}^r \frac{g(t)}{t} \; \dd t, \qquad r \in (0, \infty)
	\end{equation*}
	with $r_0 \geq 0$, $C \in \R$ such that $\ell$ is well-defined, non-negative, finite, ratio-symmetric, and $\ell(1) = 0$.
	Then, 
	\begin{equation*}
		L(x,y,t) = \ell\biggl(\frac{\e^t}{y}\biggr) 
	\end{equation*}
	is a Lipschitz continuous rb loss function.
\end{corollary}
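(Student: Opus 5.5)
The statement immediately above is the last corollary: a ratio-symmetric $\ell$ defined by the integral with a bounded increasing $g$ yields a Lipschitz continuous rb loss $L(x,y,t)=\ell(\e^t/y)$. The plan is to bound $L'$ directly and then invoke Lemma \ref{Lemma_Lipschitz_beschrAbl}.

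\textbf{Step 1: derivative of $\ell$.} Since $g\in C^1((0,\infty))$ is in particular continuous, the fundamental theorem of calculus gives
\begin{equation*}
	\ell'(r) = \frac{g(r)}{r}, \qquad r\in(0,\infty).
\end{equation*}
This holds regardless of the choice of $r_0\ge 0$ and $C$, because $\ell$ is assumed to be well-defined and finite.

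\textbf{Step 2: differentiability of $L$.} Since $u=\exp$ is differentiable, Lemma \ref{LemmaLdiff} shows that $L$ is differentiable. The formula for $L'$ from Section \ref{chapter_diff}, with $c=0$ and $u'=u$, gives
\begin{equation*}
	L'(x,y,t) = \ell'\Bigl(\frac{\e^t}{y}\Bigr)\frac{\e^t}{y} = g\Bigl(\frac{\e^t}{y}\Bigr).
\end{equation*}

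\textbf{Step 3: uniform bound.} The hypothesis $-M\le g\le M$ yields
\begin{equation*}
	\sup_{(x,y,t)\in\XY\times\R}|L'(x,y,t)| \le M<\infty.
\end{equation*}
Lemma \ref{Lemma_Lipschitz_beschrAbl} then shows that $L$ is Lipschitz continuous, with $|L|_1\le M$.

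\textbf{Comments on the hypotheses.} Ratio-symmetry and $\ell(1)=0$ are not needed for the Lipschitz bound. They only ensure that $L$ is a genuine rb loss in the sense of Definition \ref{Def_ratiobasedLoss}, i.e.\ $\ell(1)=0$ and $\ell\ge 0$. Monotonicity of $g$ is likewise not needed for this step; it matters only for convexity, via the preceding corollary.

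\textbf{Alternative route.} One could instead observe that ratio-symmetry gives
\begin{equation*}
	\ell(r)=\tfrac12\bigl(\ell(r)+\ell(r^{-1})\bigr),
\end{equation*}
so $L$ has the form treated in Proposition \ref{Prop_Lipschitz_Integral} with $\widetilde{\ell}=\tfrac12\ell$. However, $\widetilde{\ell}(1)=0$ and $g/2$ is bounded by $M/2$, so that route produces the constant $2\cdot\tfrac M2 = M$ anyway. The direct computation is cleaner, and I would present that.

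\textbf{Main obstacle.} The only delicate point is justifying that $\ell'(r)=g(r)/r$ when $r_0=0$, where the integral is improper. Since $\ell$ is assumed finite, the integral $\int_{r_0}^r g(t)/t\,\dd t$ converges. Differentiability then follows on each compact subinterval of $(0,\infty)$, where the integrand is continuous.
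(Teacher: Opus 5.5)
Your proof is correct and follows essentially the paper's argument. The paper obtains this corollary from Proposition \ref{Prop_Lipschitz_Integral}, whose proof is exactly your mechanism: express $L'$ through $g$, bound it using $|g|\le M$, and invoke Lemma \ref{Lemma_Lipschitz_beschrAbl}. You run that computation directly on $\ell$, which gives $L'(x,y,t)=g(\e^t/y)$ and the constant $M$. You also correctly note the reduction through $\widetilde{\ell}$ via ratio-symmetry as the alternative route.
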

\begin{example}[Continuation of Example \ref{bsp_konvex_Lipschitz_3}]
	According to Proposition \ref{Prop_Lipschitz_Integral}, $L$ from Example \ref{bsp_konvex_Lipschitz_3} is a Lipschitz continuous loss function.
\end{example}
Most examples for $\ell$ from Chapter \ref{Kap_Bsp}, however, are not globally, but only locally Lipschitz continuous (cf. Table \ref{TablePropertiesl}).
Nevertheless, there are functions which are globally Lipschitz continuous, e.g. $\ell(r) = |r-1|$, for which we can apply Lemma \ref{LipschitzL}.
Such functions bound the penalization of underestimation.
Moreover, not all link functions $u$ are globally Lipschitz continuous.
For example, the exponential function does not satisfy this condition.
Anyway, the logistic distribution function $u$ has Lipschitz constant $|u|_1 = \frac{1}{4}$. 
\begin{lemma} \label{LipschitzY01}
	Let $Y = (0,1), u(t) = \frac{1}{1+\exp(-t)}$, and $c > 0$. Additionally, let $\ell$ be locally Lipschitz continuous.
	Then, $L$ is a (globally) Lipschitz continuous ratio-based loss function with Lipschitz constant $|L|_1 \leq \frac{|\ell|_{I,1}|u|_1}{c}$ and $I := \Bigl(\frac{c}{1+c}, \frac{1+c}{c}\Bigr)$, where $|\ell|_{I,1}$ denotes $\ell$'s Lipschitz constant on $I$.
\end{lemma}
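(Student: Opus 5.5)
The plan is to reduce the statement to the argument behind Lemma \ref{LipschitzL}. The only difference is that $\ell$ is now merely locally Lipschitz. The key observation is that, for $Y=(0,1)$ and $c>0$, the argument of $\ell$ never leaves a fixed interval. Since $u$ maps $\R$ into $(0,1)$ and $y\in(0,1)$, we have
\begin{equation*}
	\frac{u(t)+c}{y+c} \in \Bigl(\frac{c}{1+c}, \frac{1+c}{c}\Bigr) = I \qquad \text{for all } (y,t)\in Y\times\R .
\end{equation*}
So only the restriction of $\ell$ to $I$ matters.

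First, show that $\ell$ is Lipschitz on $I$ with a finite constant $|\ell|_{I,1}$. The interval $I$ is bounded, and its closure $\bar I=[\frac{c}{1+c},\frac{1+c}{c}]$ is a compact subset of $(0,\infty)$ because $c>0$. A locally Lipschitz function is Lipschitz on every compact set, by the standard covering argument: cover $\bar I$ by finitely many neighbourhoods on which $\ell$ is Lipschitz. If local Lipschitz continuity is read as ``Lipschitz on every compact subset'', this step is immediate. Either way $|\ell|_{I,1}<\infty$. This is the only point where $c>0$ is essential, since for $c=0$ the interval would be $(0,\infty)$.

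Second, note that the logistic link satisfies $u'(t)=u(t)(1-u(t))\le \frac14$. By the mean value theorem, $u$ is Lipschitz with $|u|_1=\frac14$.

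Third, fix $(x,y)\in\XY$ and $t,t'\in\R$. Both ratios lie in $I$, so
\begin{equation*}
	|L(x,y,t)-L(x,y,t')| \le |\ell|_{I,1}\,\frac{|u(t)-u(t')|}{y+c} \le \frac{|\ell|_{I,1}|u|_1}{y+c}\,|t-t'| \le \frac{|\ell|_{I,1}|u|_1}{c}\,|t-t'| ,
\end{equation*}
where the last step uses $y>0$. The bound is uniform in $(x,y)$, which gives global Lipschitz continuity with $|L|_1\le |\ell|_{I,1}|u|_1/c$.

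The main point requiring care is the first step, namely turning local into uniform Lipschitz continuity on $I$. Everything else is the same chain-rule-type estimate as in Lemma \ref{LipschitzL} with $a=0$.
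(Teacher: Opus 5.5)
Your proof is correct and follows essentially the same route as the paper: confine the ratio $\frac{u(t)+c}{y+c}$ to $I$, apply the Lipschitz bound of $\ell$ on $I$, and use $y+c>c$ to get the uniform constant $|\ell|_{I,1}|u|_1/c$. You also justify $|\ell|_{I,1}<\infty$ via compactness of $\bar I\subset(0,\infty)$ and compute $|u|_1=\frac14$, both of which the paper leaves implicit. One small correction: $c>0$ is used not only for that compactness, as you claim, but also in the final step $1/(y+c)\le 1/c$.
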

Note that here the specific link function $u$ does not matter as long as $u$ is globally Lipschitz continuous. 
Every example of link function $u$ for $Y = (0,1)$ from Table \ref{table_u} is indeed Lipschitz continuous.
\begin{lemma} \label{LocalLipschitzL}
	Let $L$ be a ratio-based loss, such that either $a > 0$ or $c > 0$. Let $b < \infty$ and  let $\ell$ and $u$ be locally Lipschitz continuous. 
	Then, $L$ is a locally Lipschitz continuous rb loss function.
\end{lemma}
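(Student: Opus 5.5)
Fix a compact set $K\subset\R$, and work with the notion of local Lipschitz continuity used in \cite[Chapter 2.2]{SC08}: for a loss this means that for every $B>0$ there is a constant $c_B$ with $|L(x,y,t)-L(x,y,t')|\le c_B|t-t'|$ for all $(x,y)\in\XY$ and $t,t'\in[-B,B]$. The plan is to factor $L(x,y,\cdot)$ as the composition $t\mapsto u(t)\mapsto \frac{u(t)+c}{y+c}\mapsto \ell\bigl(\frac{u(t)+c}{y+c}\bigr)$ and to bound the Lipschitz constant of each map uniformly in $(x,y)$. This mirrors the proof of Lemma \ref{LipschitzL}.

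\emph{Step 1: the link.} Since $u$ is locally Lipschitz, there is $|u|_{[-B,B],1}<\infty$ with $|u(t)-u(t')|\le |u|_{[-B,B],1}|t-t'|$ for $t,t'\in[-B,B]$.

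\emph{Step 2: the ratio.} Because $u$ maps into $Y=(a,b)$ and $y\in(a,b)$,
\begin{equation*}
\Bigl|\frac{u(t)+c}{y+c}-\frac{u(t')+c}{y+c}\Bigr| = \frac{|u(t)-u(t')|}{y+c}\le \frac{|u(t)-u(t')|}{a+c}.
\end{equation*}
The hypothesis $a>0$ or $c>0$ gives $a+c>0$, so this bound is finite.

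\emph{Step 3: confining the arguments of $\ell$.} For every $t\in\R$ and $y\in Y$, the argument of $\ell$ lies in
\begin{equation*}
I:=\Bigl(\frac{a+c}{b+c},\frac{b+c}{a+c}\Bigr).
\end{equation*}
Here $b<\infty$ ensures the left endpoint is strictly positive, and $a+c>0$ ensures the right endpoint is finite. Hence $\overline I$ is a compact subset of $(0,\infty)$. A locally Lipschitz function is Lipschitz on compact sets, by a standard covering argument or by the convention of local Lipschitz continuity on bounded subsets of the domain. Therefore $\ell$ has a finite Lipschitz constant $|\ell|_{I,1}$ on $I$.

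\emph{Step 4: combining.} Chaining the three estimates gives
\begin{equation*}
|L(x,y,t)-L(x,y,t')|\le \frac{|\ell|_{I,1}\,|u|_{[-B,B],1}}{a+c}\,|t-t'|,
\end{equation*}
uniformly in $(x,y)$. This is local Lipschitz continuity of $L$.

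The main obstacle is Step 3, namely keeping the arguments of $\ell$ in a compact subset of $(0,\infty)$ independently of $y$. This is exactly where both $b<\infty$ and $a+c>0$ are needed: if $b=\infty$ the ratio can approach $0$, and if $a+c=0$ it can blow up, and in either case $\ell$'s local constants need not be uniformly bounded. A secondary point is passing from local Lipschitz continuity of $\ell$ on the open interval $(0,\infty)$ to a Lipschitz bound on $\overline I$. I would settle this by the compactness argument: cover $\overline I$ by finitely many neighbourhoods on which $\ell$ is Lipschitz, and use continuity of $\ell$ together with a Lebesgue-number argument to handle pairs of points that are far apart.
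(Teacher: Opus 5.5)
Your proof is correct and follows essentially the paper's argument: the local Lipschitz constant of $u$ on $[-B,B]$, the factor $1/(y+c)\le 1/(a+c)$, and a Lipschitz constant of $\ell$ on a compact subinterval of $(0,\infty)$, with the supremum over $(x,y)$ taken at the end. The only difference is how the ratio is confined: you use the fixed interval $\bigl(\frac{a+c}{b+c},\frac{b+c}{a+c}\bigr)$, which follows from $u(\R)\subseteq Y$ alone, whereas the paper uses the monotonicity of $u$ to get the smaller, $d$-dependent interval $[\check{m},\hat{m}]$. Your version therefore does not need monotonicity, and it gives an $\ell$-constant independent of $B$, so a globally Lipschitz $u$ would immediately give a globally Lipschitz $L$, as in Lemma \ref{LipschitzY01}.
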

Note that $b < \infty$ plays a key role in this proof.
Again, $a +c >0$ is necessary here.
Otherwise we would require local Lipschitz continuity arbitrarily close to zero which cannot be assumed in general.
\begin{lemma} \label{LocalLipschitzL2}
	Consider $Y = (a, \infty)$, $a \geq 0$, and a ratio-based loss $L$, such that $a + c > 0$ holds. 
	Moreover, assume locally Lipschitz continuous functions $\ell: (0, \infty) \to [0, \infty)$ and $u: \R \to Y$. 
	Assume that $\ell$ can be continuously continued in 0. 
	Furthermore, let the continuation $\ell: [0, \infty) \to [0, \infty)$ be locally Lipschitz continuous.
	Under those circumstances, $L$ is a locally Lipschitz continuous ratio-based loss function.
\end{lemma}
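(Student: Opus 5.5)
Fix $(x,y)\in\XY$ and a compact interval $K=[t_1,t_2]\subset\R$. The goal is a constant $C_K$, independent of $(x,y)$, with $|L(x,y,t)-L(x,y,t')|\leq C_K|t-t'|$ for all $t,t'\in K$. As for Lemma \ref{LocalLipschitzL}, the plan is to write $L(x,y,\cdot)$ as the composition of $t\mapsto u(t)$, the affine map $s\mapsto \frac{s+c}{y+c}$, and $\ell$, and to bound each Lipschitz factor on the relevant compact set. Since $b=\infty$, the argument $r=\frac{u(t)+c}{y+c}$ has no positive lower bound uniformly in $y$: it tends to $0$ as $y\to\infty$. This is why we need the continued function $\ell$ on $[0,\infty)$.

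\textbf{Step 1 (bounding $u$ on $K$).} Since $u$ is locally Lipschitz continuous, it is Lipschitz on $K$ with constant $|u|_{K,1}$. Being monotone with values in $Y=(a,\infty)$, it satisfies $a<u(t)\leq \max\{u(t_1),u(t_2)\}=:B_K<\infty$ for $t\in K$.

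\textbf{Step 2 (range of the ratio).} Since $y>a$ and $a+c>0$, we have $y+c>a+c>0$. Hence for $t\in K$,
\begin{equation*}
0<\frac{u(t)+c}{y+c}\leq \frac{B_K+c}{a+c}=:R_K .
\end{equation*}
So all ratios lie in the compact interval $J_K:=[0,R_K]$, uniformly in $y$. The continued $\ell$ is locally Lipschitz on $[0,\infty)$, so it is Lipschitz on $J_K$ with constant $|\ell|_{J_K,1}$. This step is the crux: without continuation to $0$, the interval would be $(0,R_K]$ and the Lipschitz constant could blow up near $0$.

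\textbf{Step 3 (combining).} For $t,t'\in K$,
\begin{equation*}
|L(x,y,t)-L(x,y,t')|\leq |\ell|_{J_K,1}\frac{|u(t)-u(t')|}{y+c}\leq \frac{|\ell|_{J_K,1}|u|_{K,1}}{a+c}\,|t-t'| .
\end{equation*}
This constant does not depend on $(x,y)$, which gives local Lipschitz continuity of $L$ uniformly in $(x,y)$. The only delicate point is Step 2, namely that the upper bound $R_K$ uses $y+c\geq a+c>0$ rather than any upper bound on $y$. Once that is in place, the rest mirrors the proof of Lemma \ref{LipschitzL}.
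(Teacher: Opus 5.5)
Your proof is correct and follows essentially the same route as the paper. Both arguments bound the ratio in $(0,m]$ using the monotonicity of $u$ and $y+c>a+c>0$, then apply the Lipschitz constant of the continued $\ell$ on $[0,m]$ and of $u$ on the compact $t$-interval, which gives the uniform constant $|\ell|_{[0,m],1}|u|_{d,1}/(a+c)$. The only cosmetic difference is that you use an arbitrary compact interval $[t_1,t_2]$ where the paper uses $[-d,d]$, and the two are equivalent for local Lipschitz continuity.
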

\begin{corollary}
	Let $Y = (a,\infty), a \geq 0$, and $c \geq 0$ with $a+c>0$. 
	Let $\ell: (0, \infty) \to [0, \infty)$ and $u: \R \to Y$ be locally Lipschitz continuous.
	Furthermore, let $\ell \in C^1((0, \infty))$ and assume $\lim_{r \to 0, r > 0}\ell(r) < \infty$ as well as $\lim_{r \to 0, r > 0} \ell'(r) < \infty$. 
	Then, $L$ is a locally Lipschitz continuous ratio-based loss function.
\end{corollary}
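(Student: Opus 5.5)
The plan is to reduce the corollary to Lemma \ref{LocalLipschitzL2}. That lemma already covers $Y=(a,\infty)$, $a+c>0$, and locally Lipschitz $u$. The only extra hypothesis it needs is that $\ell$ extends continuously to $[0,\infty)$ and that this extension is locally Lipschitz on $[0,\infty)$. So everything comes down to showing that $\ell\in C^1((0,\infty))$ with finite limits $\lim_{r\downarrow 0}\ell(r)$ and $\lim_{r\downarrow 0}\ell'(r)$ produces such an extension.

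First I define $\ell(0):=\lim_{r\downarrow0}\ell(r)$. This value is finite by assumption and non-negative as a limit of non-negative values. With it, $\ell:[0,\infty)\to[0,\infty)$ is continuous.

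Next I show local Lipschitz continuity of the extension. Fix a compact set $K\subset[0,\infty)$; it suffices to take $K=[0,R]$. The key point is that $\ell'$ is bounded on $(0,R]$. On any $[\delta,R]$ with $\delta>0$, $\ell'$ is continuous, hence bounded. Near $0$, the finite limit $\lim_{r\downarrow0}\ell'(r)$ gives a bound on $(0,\delta]$ for $\delta$ small. So $M_R:=\sup_{(0,R]}|\ell'|<\infty$. For $0<r,s\le R$, the mean value theorem gives $|\ell(r)-\ell(s)|\le M_R|r-s|$. If $s=0$, I let $s'\downarrow0$ in $|\ell(r)-\ell(s')|\le M_R|r-s'|$ and use continuity of the extension to get $|\ell(r)-\ell(0)|\le M_R r$. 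Hence the extension is Lipschitz on $[0,R]$, i.e.\ locally Lipschitz on $[0,\infty)$. (I will read ``$\lim\ell'(r)<\infty$'' as existence of a finite limit, so that $\ell'$ is bounded near $0$ in absolute value. If only an upper bound were meant, I would need to note that an upper bound alone does not suffice and assume the two-sided version.)

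Finally, $\ell$ is locally Lipschitz on $(0,\infty)$ because it is $C^1$, and $u$ is locally Lipschitz by assumption. All hypotheses of Lemma \ref{LocalLipschitzL2} therefore hold, and $L$ is a locally Lipschitz continuous ratio-based loss function. The only delicate step is the uniform bound on $\ell'$ near $0$ together with passing the Lipschitz estimate to the endpoint $r=0$; the rest is direct invocation of the lemma.
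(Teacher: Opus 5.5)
Your proposal is correct and follows the paper's intended route: the corollary (which the paper states without a separate proof) is a direct application of Lemma \ref{LocalLipschitzL2}, where the $C^1$ hypothesis and the finite limits of $\ell$ and $\ell'$ at $0$ give a locally Lipschitz continuous extension to $[0,\infty)$, via a bounded derivative on $(0,R]$ and the mean value theorem. Your remark that $\lim_{r\downarrow 0}\ell'(r)<\infty$ must be read as a finite limit, not merely an upper bound, is a correct and worthwhile clarification. Under the weaker reading the claim can fail, e.g.\ if $\ell'(r)\to-\infty$ while $\ell(r)$ stays bounded near $0$.
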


\subsection{Finite Risk} \label{Kap_endlRisiko}
Let $P$ be a probability measure on $\XY$.
The $L$-risk of a measurable function $f: X \to \R$ is defined by
\begin{equation*}
	\RLP(f) := \int_{\XY} L(x,y,f(x)) \; \mathrm{d}P(x,y).
\end{equation*}
Note, that this integral is always well-defined, because $L$ is non-negative, but it can happen that $\RLP(f)=\infty$ for some $P$ and some $f$.\par 
As $P$ is unknown in machine learning, many ML algorithms approximate
the true risk $\RLP(f)$ by the empirical risk 
\begin{equation*}
	\RLD(f) = \frac{1}{n} \sum_{i=1}^n L(x_i,y_i,f(x_i)),
\end{equation*}
where $D:=D_n := \frac{1}{n} \sum_{i =1}^n \delta_{(x_i, y_i)}$ denotes the empirical measure for the data set $\bigl((x_1,y_1),\dots,(x_n,y_n)\bigr)$.\par 
To avoid the danger of overfitting, many ML algorithms minimize a regularized  risk, i.e. one minimizes
\begin{equation*}
	\RregLP(f):= \RLP(f) + \lambda R(f)
\end{equation*}
over a suitable subset $\mathcal{F}$ of (all) measurable functions $f: X \to \R$ with some regularization parameter $\lambda > 0$ and regularization term $R(f)$. 
One special case are support vector machines, see e.g. \cite{Vapnik1995, Vapnik1998, SC08} with a general loss function $L$, $\mathcal{F}$ chosen to be a reproducing kernel Hilbert space $H$, and $R(f)=\|f\|_H^2$. 
There are too many generalizations to mention them here in detail. 
E.g. $\mathcal{F}$ chosen as a reproducing kernel Banach space and other regularization functions like $\|f\|^p$ with $p\ge 1$ or $\alpha_1 \|f\|_{\mathcal{F}}^1 + \alpha_2\|f\|_{\mathcal{F}}^2$ have been proposed, see e.g. \cite{ZhangXuZhang2009,DeMolDeVitoRosasco2009,RosascoEtAl2013, CuckerZhou2007} and the references therein.
For mathematically even more advanced function spaces used in deep distributed convolutional neural networks, we refer to  
\cite{Zhou2018},\cite{LinWangWangZhou2022}, and \cite{ZhangShiZhou2024}.
\begin{remark}
	Let $L$ be a rb loss function.
	If $Y = (a,b), 0 \leq a < b < \infty$ (note that $b \neq \infty$), and $c \geq 0$, but $a+c>0$,
	\begin{equation*}
		\frac{u(t)+c}{y+c} \in \biggl(\frac{a+c}{b+c}, \frac{b+c}{a+c}\biggr) \subsetneq (0, \infty).
	\end{equation*}
	So even the closed interval is contained in the domain of $\ell$.
	Thus, if $\ell$ is continuous, $\ell\Bigl(\frac{u(f(x))+c}{y+c}\Bigr)$ is contained in $[0, M]$ for some $M > 0$ (cf. e.g. \cite[p. 90]{Königsberger2004}).
	Hence, 
	\begin{equation*}
		\mathcal{R}_{L, P}(f) \leq M < \infty
	\end{equation*}
	holds for any measurable function $f: X \to \R$.
\end{remark}
Considering $Y = (0, \infty)$, the upper fraction can become arbitrarily large and arbitrarily close to zero.
\begin{remark}
	Let $L$ be a rb loss function.
	Using a bounded function $\ell$ (e.g. robust loss function), a measurable function $f$ has a finite risk for all probability measures $P$, even if $Y = (0, \infty)$ and $c = 0$.
\end{remark}
As many machine learning algorithms minimize the risk (or the regularized risk) over an appropriate function space $\mathcal{F}$, we need to avoid the case that $\RLP(f)=\infty$ or  $\RLD(f)=\infty$ for \emph{all} $f\in \mathcal{F}$.
In this case, it is often sufficient to know that there exists at least one function $f\in\mathcal{F}$ whose (regularized) risk is finite such that the minimal (regularized) risk is finite. Often this is easily checked for the function $f \equiv 0$. 
Let $L$ be a rb loss function. 
Then, 
\begin{equation} \label{BruchAbschaetz}
	\frac{u(0)+c}{y+c} \in \biggl(0, \frac{u(0)+c}{a+c}\biggr) \subseteq (0, \infty),
\end{equation}
where $u(0) \in Y$ is a certain value, that gives an upper bound for the quotient.
This yields
\begin{equation*}
	\mathcal{R}_{L, P}(0) = \int_{X \times Y} \biggl|\ell\biggl(\frac{u(0)+c}{y+c}\biggr) - \ell(1)\biggr| \; \mathrm{d}P(x,y).
\end{equation*}
\begin{remark} \label{finiteRisk_locLipschitz}
	Let $Y = (0, \infty)$ and $c > 0$. 
	Let $\ell$ be locally Lipschitz continuous and continuously continuable in 0, such that the continuation $\ell: [0, \infty) \to [0, \infty)$ is locally Lipschitz continuous.
	Using \eqref{BruchAbschaetz} with $m := \frac{u(0)+c}{c}$, we obtain for 
	an rb loss $L$ that 
	\begin{align*}
		\mathcal{R}_{L, P}(0) &= \int_{X \times Y} \biggl|\ell\biggl(\frac{u(0)+c}{y+c}\biggr) - \ell(1)\biggr| \; \mathrm{d}P(x,y) \\
		&\leq \int_{X \times Y} |\ell|_{[0,m],1} \biggl(\frac{u(0)+c}{c} + 1\biggr) \; \mathrm{d}P(x,y) = |\ell|_{[0,m],1} \biggl(\frac{u(0)}{c} + 2\biggr) < \infty
	\end{align*} 
	for all probability measures $P$.
\end{remark}
\begin{remark}
	Under the assumptions from Remark \ref{finiteRisk_locLipschitz}, one can show, in the very same way, that the risk $\RLP(f)$ of a bounded measurable function $f: X \to \R$ is finite as well for all probability measures $P$ where we take the monotonicity of $u$ into account. 
\end{remark}
\subsection{Nemitski Loss Functions}
For distance-based loss functions and their risks, the Nemitski property plays an important role. The reason is that this property can help to achieve a finite $L$-risk, which is necessary for many ML methods.
Additionally, it helps to transfer continuity and differentiability from the loss function to $\RLP$.
The \textbf{Nemitski} property separates the influence of variables $(x, y)$ and $t$:
\begin{equation*}
	L(x, y,t) \leq \tilde{b}(x, y) + \tilde{h}(|t|),
\end{equation*}
with increasing $\tilde{h}: [0, \infty) \to [0, \infty)$ and measurable $\tilde{b}: \XY \to [0, \infty)$.
Furthermore, such loss functions are called $\boldsymbol{P}$\textbf{-integrable Nemitski} if and only if $\tilde{b} \in \Lcal_1(P)$ with $P$ being the probability measure on $X \times Y$; for details, we refer to \cite[Section 2.4]{SC08}. 
On the other hand, if
\begin{equation*}
	L(x,y,t) \leq \tilde{b}(x,y) + \lambda |t|^p
\end{equation*}
is valid for some $p \in (0, \infty)$ and $\lambda > 0$, $L$ is a \textbf{Nemitski loss of order} $\boldsymbol{p}$.
We apply these definitions also to ratio-based loss functions.
\begin{remark}
	A locally Lipschitz continuous loss function $L$ is a $P$-integrable Nemitski loss if and only if $\mathcal{R}_{L,P}(0) < \infty$, see e.g. \cite[p. 31]{SC08}.
\end{remark}
\begin{example}
	With a monotonically increasing function $u$ and $a + c \geq 0$, $\ell(r) = |r-1|$ defines a $P$-integrable Nemitski loss function for all probability measures $P$ on $\XY$.
	A rough calculation yields
	\begin{equation*}
		L(x,y,t) \leq \frac{u(|t|)+c}{a+c} + 1.
	\end{equation*}
\end{example}

\section{Risk} \label{Kap_Risiko}
For loss functions, it is common to consider the output space $Y$ to be a closed subset of $\R$.
This property ensures that one can write risks
\begin{equation*}
	\mathcal{R}_{L, P}(f) := \int_{X \times Y} L(x,y,f(x)) \; \mathrm{d}P(x,y)
\end{equation*}
of measurable functions $f: X \to \R$ as double integrals
\begin{equation*}
	\int_X \int_Y L(x,y,f(x)) \; \mathrm{d}P(y|x) \; \dd P_X(x).
\end{equation*}
Nevertheless, the same conclusion holds in our situation (recall $Y = (a,b), 0 \leq a < b \leq \infty$), as can be seen by using results e.g. from \cites[Chapter 26]{Bauer2001}[Chapter 8]{Cohn2013}[Chapter 10]{Dudley2002}.
This requires a measurable Polish space $(X, \mathcal{A})$.
Since $\R$ (endowed with the standard topology) is a Polish space (\cites[Chapter 26, p. 157]{Bauer2001}[Ex. 8.1.1 (a)]{Cohn2013}), its open subset $Y$ (endowed with its subspace topology) is Polish as well (\cites[Chapter 26, p. 157]{Bauer2001}[Prop. 8.1.2]{Cohn2013}).
In particular, $\XY$ is Polish (\cites[Chapter 26, p. 157]{Bauer2001}[Prop. 8.1.4]{Cohn2013}).
Endowing $Y$ with its Borel $\sigma$-algebra $\B(Y)$ induced by the subspace topology, $\XY$ with its product $\sigma$-algebra $\AB(Y)$ is a measurable space.
Letting $P$ be a probability measure on $\XY$, $(X \times Y, \AB(Y), P)$ is probability space.
Define the projections
$\pi_X: X \times Y \to X, 	\pi_Y: X \times Y \to Y$, which are measurable functions.
Let $\mathcal{C} := \pi_X^{-1}(\mathcal{A})$ denote the smallest sub-$\sigma$-algebra of $\AB(Y)$ such that $\pi_X$ is measurable.
At last, we denote the marginal distribution of $P$ on $X$ by $P_X := P \circ \pi_X^{-1}$.
Hence, the conditional probability distribution $P_{\pi_{Y} | \mathcal{C}}$ on $\mathcal{B}(Y) \times (X \times Y)$ exists (\cite[Thm. 10.2.2]{Dudley2002}).
Therefore, there also exist conditional distributions $P(\cdot | x)$ for $P$ and $x \in X$ (\cite[Thm. 10.2.1 (I)]{Dudley2002}),
which imply
\begin{equation*}
	\int_{X \times Y} g(x,y) \; \mathrm{d}P(x,y) = \int_X \int_Y g(x,y) \; \mathrm{d}P(y|x) \; dP_X(x)
\end{equation*}
for all $P$-integrable functions $g$ (\cite[Thm. 10.2.1 (II)]{Dudley2002}). 
From this, we see that the upper equality holds not only for integrable functions $g$, but for all functions whose integral $\int g \; \mathrm{d}P$ is defined.
That is whenever $\int g_+ \; \mathrm{d}P < \infty$ or $\int g_- \; \mathrm{d}P < \infty$ holds, where $g_+ := \max\{g, 0\}$ and $g_- := \max\{-g, 0\}$. \par 
Now, one can discuss how properties of $\ell$ and $u$ apply to $\RLP$.
Most properties of loss functions apply to the risk as well. 
Some, however, need additional assumptions.
The following results refer the loss functions' properties back to those of its components $\ell$ and $u$ and use \cite[Chapter 2.2]{SC08}.
\begin{lemma}
	Let $L$ be a ratio-based loss function and let $\ell$ and $u$ be continuous. 
	Let $(f_n)_{n \in \N}$ be a sequence in $\Lcal_0(P_X)$ which converges to $f \in \Lcal_0(P_X)$ in probability w.r.t. the marginal distribution $P_X$. 
	Then, the risk is lower semi-continuous, i.e. 
	\begin{equation*}
		\RLP(f) \leq \liminf_{n \to \infty} \RLP(f_n).
	\end{equation*}
\end{lemma}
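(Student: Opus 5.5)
The plan is to reduce the statement to the standard lower semicontinuity result for continuous loss functions, \cite[Lem. 2.15]{SC08}, or to reprove it directly with a subsequence argument and Fatou's lemma. First, by Lemma \ref{LemmaLstetig}, continuity of $\ell$ and $u$ makes $L(x,y,\cdot)$ continuous on $\R$ for every $(x,y)\in\XY$. Moreover $L$ is measurable and nonnegative. This is all that is needed.

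The direct argument runs as follows. Choose a subsequence $(f_{n_k})_k$ with
\begin{equation*}
\lim_{k\to\infty}\RLP(f_{n_k}) = \liminf_{n\to\infty}\RLP(f_n).
\end{equation*}
Since $f_{n_k}\to f$ in $P_X$-probability, we may pass to a further subsequence, not relabeled, that converges $P_X$-almost surely. Then $f_{n_k}(x)\to f(x)$ for all $x$ outside a $P_X$-null set $N$, and $N\times Y$ is a $P$-null set. For $(x,y)\notin N\times Y$, continuity of $t\mapsto L(x,y,t)$ gives
\begin{equation*}
L(x,y,f_{n_k}(x))\to L(x,y,f(x)).
\end{equation*}
Since $L\ge 0$, Fatou's lemma yields
\begin{equation*}
\RLP(f)=\int \lim_k L(x,y,f_{n_k}(x))\,\dd P(x,y)\le \liminf_k \RLP(f_{n_k}) = \liminf_{n\to\infty}\RLP(f_n).
\end{equation*}

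The only delicate point is bookkeeping. We must first extract a subsequence realizing the $\liminf$ and only then extract the a.s.\ convergent subsequence, so that the final inequality concerns the original $\liminf$. We should also note that the integrands $(x,y)\mapsto L(x,y,f_n(x))$ are measurable, being compositions of measurable maps. No integrability is required, because the risks may equal $\infty$ and Fatou's lemma holds in $[0,\infty]$.
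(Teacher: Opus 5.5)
Your proposal is correct and follows the same route as the paper, whose proof simply combines Lemma~\ref{LemmaLstetig} (continuity of $L$ in $t$) with \cite[Lemma 2.15]{SC08}. Your subsequence-plus-Fatou argument is exactly the unpacked proof of that cited lemma. You also order the bookkeeping correctly: first a subsequence realizing the $\liminf$, then an almost surely convergent further subsequence.
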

\begin{proof}
	This follows directly from Lemma \ref{LemmaLstetig} and \cite[Lemma 2.15]{SC08}.
\end{proof}
To prove the continuity of the risk, we will assume the Nemitski property.
\begin{lemma}
	Let $L$ be a ratio-based loss function with continuous functions $\ell$ and $u$ and let $P$ be a distribution on $\XY$, such that $L$ is a $P$-integrable Nemitski loss function.
	\begin{enumerate}
		\item Then, $\RLP: L_\infty(P_X) \to [0, \infty)$ is well-defined and continuous.
		\item If $L$ is a Nemitski loss of order $p \in [1, \infty)$, $\RLP: L_p(P_X) \to [0, \infty)$ is well-defined and continuous.
	\end{enumerate}
\end{lemma}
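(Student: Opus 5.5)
The plan is to reduce the statement to the known result for general loss functions, namely \cite[Lemma 2.17]{SC08}. That result says a continuous $P$-integrable Nemitski loss yields a well-defined and continuous risk on $L_\infty(P_X)$. It also says a continuous Nemitski loss of order $p$ (with integrable $\tilde b$) yields a well-defined and continuous risk on $L_p(P_X)$. So we only need to check the hypotheses of that lemma for our rb loss $L$.

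\textbf{Step 1: continuity of $L$.} By Lemma \ref{LemmaLstetig}, continuity of $\ell$ and $u$ implies that $L$ is a continuous loss function. In particular $L(x,y,\cdot)$ is continuous for every $(x,y)$. Measurability of $L$ follows from the measurability of $\ell$, $u$ and the map $(y,t)\mapsto \frac{u(t)+c}{y+c}$. The Nemitski properties are assumed in the statement.

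\textbf{Step 2, part 1: $L_\infty(P_X)$.} For $f\in\Lcal_\infty(P_X)$ the Nemitski bound gives
\begin{equation*}
L(x,y,f(x))\le \tilde b(x,y)+\tilde h(\|f\|_\infty)
\end{equation*}
for $P$-almost all $(x,y)$, since $\tilde h$ is increasing. The right-hand side is integrable, so $\RLP(f)<\infty$. Changing $f$ on a $P_X$-null set does not change the risk, so $\RLP$ is well defined on equivalence classes.

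For continuity, take $f_n\to f$ in $L_\infty(P_X)$. Then $\|f_n\|_\infty\le B$ for some $B$, and $f_n(x)\to f(x)$ for $P_X$-a.e.\ $x$. By Step 1 the integrands $L(x,y,f_n(x))$ converge pointwise a.e.\ to $L(x,y,f(x))$. They are dominated by $\tilde b+\tilde h(B)$, so dominated convergence gives $\RLP(f_n)\to\RLP(f)$.

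\textbf{Step 2, part 2: $L_p(P_X)$.} Here the bound is $L(x,y,t)\le \tilde b(x,y)+\lambda|t|^p$, which immediately gives finiteness of the risk for $f\in L_p(P_X)$. For continuity there is no fixed dominating function, so we use a subsequence argument. Given $f_n\to f$ in $L_p$, pass to an arbitrary subsequence, then to a further subsequence converging a.e. Apply the generalized dominated convergence theorem with dominating functions $\tilde b+\lambda|f_{n}|^p$. These converge a.e.\ to $\tilde b+\lambda|f|^p$, and their integrals converge because $\|f_n\|_p\to\|f\|_p$. This gives $\RLP(f_n)\to\RLP(f)$ along the sub-subsequence, and hence along the full sequence.

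The main obstacle is part 2: it requires careful handling of the integrability of $\tilde b$ and of the convergence of the integrals of the dominating functions. Citing \cite[Lemma 2.17]{SC08} directly avoids redoing this argument. Beyond that, nothing specific to ratio-based losses is needed except Lemma \ref{LemmaLstetig}.
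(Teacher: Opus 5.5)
Your proposal is correct and follows the paper's proof, which simply combines Lemma \ref{LemmaLstetig} (continuity of $L$ in $t$) with \cite[Lemma 2.17]{SC08}. Your dominated-convergence argument for $L_\infty(P_X)$ and your sub-subsequence argument with generalized dominated convergence for $L_p(P_X)$ are sound sketches of that cited lemma, but they are optional: the paper relies on the citation alone.
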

\begin{proof}
	Combining Lemma \ref{LemmaLstetig} and \cite[Lemma 2.17]{SC08} gives the assertion.
\end{proof}
\begin{lemma}
	Let $L$ be a ratio-based loss function, let $\ell$ and $u$ be differentiable functions and let $P$ be a probability measure on $\XY$. 
	Let $L$ and $|L'|$ be $P$-integrable Nemitski loss functions.
	Then, $\RLP: L_\infty(P) \to [0, \infty)$ is Fr\'{e}chet differentiable and its derivative at $f \in L_\infty(P)$ is a bounded linear operator
	\begin{align*}
		\mathcal{R}'_{L,P}(f): L_\infty(P_X) &\to \R \\
		g &\mapsto \int_{\XY} g(x)L'(x,y,f(x)) \; \dd P(x,y).
	\end{align*}
\end{lemma}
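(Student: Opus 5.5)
The plan is to reduce the statement to the general differentiability result for risks in \cite[Lemma 2.21]{SC08}, exactly as the two preceding lemmas reduce to \cite[Lemma 2.15]{SC08} and \cite[Lemma 2.17]{SC08}. That result says: if $L$ is a measurable loss that is differentiable in its third argument, and both $L$ and $|L'|$ are $P$-integrable Nemitski loss functions, then $\RLP: L_\infty(P_X) \to [0,\infty)$ is Fr\'echet differentiable. It also gives the derivative as $g \mapsto \int g(x) L'(x,y,f(x)) \, \dd P(x,y)$. So the proof reduces to checking the hypotheses of that lemma for our ratio-based $L$.

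\textbf{Step 1: differentiability of $L$.} Since $\ell$ and $u$ are differentiable, Lemma \ref{LemmaLdiff} shows that $L$ is a differentiable loss function. The partial derivative is given by the formula in Section \ref{chapter_diff},
\begin{equation*}
L'(x,y,t) = \ell'\Bigl(\tfrac{u(t)+c}{y+c}\Bigr)\tfrac{u'(t)}{y+c}.
\end{equation*}

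\textbf{Step 2: measurability of $|L'|$.} We need $|L'|$ to be measurable on $\XY \times \R$ so that it qualifies as a loss function. Here $\ell'$ and $u'$ are pointwise limits of measurable difference quotients, hence Borel measurable. The map $(y,t) \mapsto \frac{u(t)+c}{y+c}$ is continuous wherever $u$ is, and $u$ is differentiable, hence continuous. Composing and multiplying measurable functions therefore gives measurability of $L'$. This is routine, but I would spell it out briefly.

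\textbf{Step 3: apply the lemma.} The Nemitski assumptions on $L$ and $|L'|$ are given directly in the statement. Invoking \cite[Lemma 2.21]{SC08} then yields Fr\'echet differentiability and the stated form of $\mathcal{R}'_{L,P}(f)$.

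\textbf{Step 4: the derivative is bounded and linear.} Linearity is clear. For boundedness, use the $P$-integrable Nemitski bound $|L'(x,y,t)| \le \tilde b(x,y) + \tilde h(|t|)$ with $\tilde b \in \Lcal_1(P)$ and $\tilde h$ increasing. For $\|f\|_\infty < \infty$ this gives
\begin{equation*}
|\mathcal{R}'_{L,P}(f) g| \le \|g\|_\infty \bigl(\|\tilde b\|_1 + \tilde h(\|f\|_\infty)\bigr).
\end{equation*}

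\textbf{Main obstacle.} The only real subtlety is notational. The statement writes $L_\infty(P)$ for the domain, but the derivative acts on $L_\infty(P_X)$. Functions $f: X \to \R$ should be read as elements of $L_\infty(P_X)$, and I would note this identification. Everything else is bookkeeping.
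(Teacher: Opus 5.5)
Your proposal is correct and follows the paper's proof, which is the one-line combination of Lemma \ref{LemmaLdiff} with \cite[Lemma 2.21]{SC08}. Your extra steps are correct but already covered by the cited lemma: measurability of $L'$, the bound on $\mathcal{R}'_{L,P}(f)$, and reading $L_\infty(P)$ as $L_\infty(P_X)$, which fixes a typo in the statement.
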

\begin{proof}
	This is a direct consequence of Lemma \ref{LemmaLdiff} and \cite[Lemma 2.21]{SC08}.
\end{proof}
\begin{lemma}
	For a rb loss function $L$, let $Y = (0, \infty), u = \exp, c = 0$, and $\ell(r) = \widetilde{\ell}(r) + \widetilde{\ell}(r^{-1})-2\widetilde{\ell}(1)$ with $\widetilde{\ell}: (0, \infty) \to [0, \infty)$ being twice differentiable and fulfilling \eqref{Bedingung_Konvexität_tildeell} for all $r \in (0, \infty)$.
	Let $P$ be a probability measure on $\XY$.
	Then, $\RLP: \Lcal_0(P_X) \to [0, \infty]$ is convex.
\end{lemma}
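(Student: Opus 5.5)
The plan is to combine Proposition~\ref{Prop_konvexeVerlustfkt_tildeEll} with the general fact that convexity of a loss function is inherited by its risk, in the spirit of \cite[Lemma 2.13]{SC08}. By Proposition~\ref{Prop_konvexeVerlustfkt_tildeEll}, the assumptions on $Y$, $u$, $c$ and $\widetilde{\ell}$ give that for every $(x,y)\in\XY$ the map $t\mapsto L(x,y,t)=\widetilde{\ell}(\e^t/y)+\widetilde{\ell}(y/\e^t)-2\widetilde{\ell}(1)$ is convex on $\R$. So $L$ is a convex loss function, and only the transfer to $\RLP$ remains.

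For the transfer, fix $f_1,f_2\in\Lcal_0(P_X)$ and $\alpha\in[0,1]$. Pointwise convexity of $L(x,y,\cdot)$ gives, for all $(x,y)\in\XY$,
\begin{equation*}
L\bigl(x,y,\alpha f_1(x)+(1-\alpha)f_2(x)\bigr)\leq \alpha L(x,y,f_1(x))+(1-\alpha)L(x,y,f_2(x)).
\end{equation*}
The composed maps $(x,y)\mapsto L(x,y,f_i(x))$ are measurable, since $L$ is measurable and $f_i$ is measurable. They are also non-negative, so all integrals are well defined in $[0,\infty]$. Integrating both sides with respect to $P$ and using monotonicity and linearity of the integral for non-negative functions yields
\begin{equation*}
\RLP\bigl(\alpha f_1+(1-\alpha)f_2\bigr)\leq\alpha\RLP(f_1)+(1-\alpha)\RLP(f_2).
\end{equation*}

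The only delicate point is the extended-valued case where one of the risks equals $\infty$. Here I would adopt the convention $0\cdot\infty=0$, so that $\alpha\in\{0,1\}$ is trivial. For $\alpha\in(0,1)$ with an infinite risk, the right-hand side is $\infty$ and the inequality holds trivially. This is the usual notion of convexity for functionals with values in $[0,\infty]$, so the proof amounts to citing \cite[Lemma 2.13]{SC08} once convexity of $L$ is known from Proposition~\ref{Prop_konvexeVerlustfkt_tildeEll}.
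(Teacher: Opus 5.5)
Your proposal is correct and matches the paper's proof: the paper also gets convexity of $L(x,y,\cdot)$ from Proposition~\ref{Prop_konvexeVerlustfkt_tildeEll} and then transfers it to $\RLP$ via \cite[Lemma 2.13]{SC08}. You simply write out the content of that lemma, namely pointwise convexity, measurability of $(x,y)\mapsto L(x,y,f(x))$, and monotonicity and linearity of the integral for non-negative functions with values in $[0,\infty]$.
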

\begin{proof}
	Combine Proposition \ref{Prop_konvexeVerlustfkt_tildeEll} with \cite[Lemma 2.13]{SC08}.
\end{proof}
Note that this result can be improved to strict convexity by claiming strict inequality in \eqref{Bedingung_Konvexität_tildeell}. 
\begin{lemma}
	Let $L$ be a ratio-based loss function.
	\begin{enumerate}
		\item Let $\ell$ and $u$ be Lipschitz continuous, $a+c >0$. $P$ denotes a distribution on $\XY$.
		Then, $\RLP: L_\infty(P_X) \to [0, \infty)$ is Lipschitz continuous.
		\item Let $Y = (0,1), u(t) = \frac{1}{1+\exp(-t)}, c > 0$, and let $\ell$ be locally Lipschitz continuous. 
		Let $P$ be a probability measure on $\XY$.
		Then, $\RLP: L_\infty(P_X) \to [0, \infty)$ is Lipschitz continuous.
		\item Let $a+c > 0$, $b < \infty$. Furthermore, let $\ell$ and $u$ be locally Lipschitz continuous and let $P$ be a distribution on $\XY$.
		Then, $\RLP: L_\infty(P_X) \to [0, \infty)$ is locally Lipschitz continuous.
	\end{enumerate}
\end{lemma}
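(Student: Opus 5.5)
The plan is to reduce each of the three parts to the corresponding statement about the loss function from Section~\ref{Kap_Eigenschaften}. The transfer to the risk is then done via the standard results in \cite[Chapter 2.2]{SC08}, in particular \cite[Lemma 2.19]{SC08}. That lemma says that if $L$ is (locally) Lipschitz continuous in the sense used here, i.e. w.r.t. the third argument uniformly in $(x,y)$, and $\RLP(0)<\infty$, then $\RLP: L_\infty(P_X)\to[0,\infty)$ is well-defined and (locally) Lipschitz continuous.

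For part 1 we invoke Lemma~\ref{LipschitzL}, which gives that $L$ is Lipschitz continuous with $|L|_1 \leq \frac{|\ell|_1|u|_1}{a+c}$. For part 2 we invoke Lemma~\ref{LipschitzY01}, which yields global Lipschitz continuity with constant $\frac{|\ell|_{I,1}|u|_1}{c}$. For part 3 we invoke Lemma~\ref{LocalLipschitzL}, which gives local Lipschitz continuity, using $a+c>0$ and $b<\infty$.

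Next, we must check that the risk is finite, so that $\RLP$ maps into $[0,\infty)$. The main observation is that a Lipschitz estimate alone gives, for $f\in L_\infty(P_X)$,
\begin{equation*}
	\RLP(f) \leq \RLP(0) + |L|_1 \|f\|_\infty ,
\end{equation*}
so finiteness reduces to $\RLP(0)<\infty$. The three parts handle this step differently.
\begin{enumerate}
	\item In parts 2 and 3 we have $b<\infty$ and $a+c>0$. By the first remark of Subsection~\ref{Kap_endlRisiko}, the ratio then lies in a compact subinterval of $(0,\infty)$. Since $\ell$ is continuous there, $L$ is bounded, so every risk is finite.
	\item In part 1, $b=\infty$ is allowed, so we use Lipschitz continuity of $\ell$ directly. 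By \eqref{BruchAbschaetz}, the quotient $\frac{u(0)+c}{y+c}$ lies in $\bigl(0,\frac{u(0)+c}{a+c}\bigr)$. Hence $L(x,y,0)=|\ell(\cdot)-\ell(1)| \leq |\ell|_1\bigl(\frac{u(0)+c}{a+c}+1\bigr)$, which is a constant, and therefore $\RLP(0)<\infty$ for every $P$.
\end{enumerate}

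With finiteness established, we conclude as follows. In the global cases (parts 1 and 2), for $f,g\in L_\infty(P_X)$ we have
\begin{equation*}
	|\RLP(f)-\RLP(g)| \leq \int |L(x,y,f(x))-L(x,y,g(x))|\,\dd P \leq |L|_1\|f-g\|_{L_1(P_X)} \leq |L|_1 \|f-g\|_\infty .
\end{equation*}
For part 3 we restrict to $\|f\|_\infty,\|g\|_\infty\leq B$ and use the local constant $|L|_{B,1}$ on $[-B,B]$ in the same estimate. That $L$ is $P$-integrable Nemitski, as required in \cite[Lemma 2.19]{SC08}, follows from the remark in the Nemitski subsection, since $\RLP(0)<\infty$.

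The main obstacle is the finiteness of $\RLP(0)$ in part 1 when $b=\infty$, since the ratio is unbounded above. The resolution is that, at $t=0$, the ratio is bounded above by $\frac{u(0)+c}{a+c}$, and global Lipschitz continuity of $\ell$ then controls $L(x,y,0)$ uniformly in $(x,y)$.
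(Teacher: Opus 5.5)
Your proposal is correct and takes the same route as the paper, whose proof simply combines Lemmata \ref{LipschitzL}, \ref{LipschitzY01}, and \ref{LocalLipschitzL} with \cite[Lemma 2.19]{SC08}. You also verify $\RLP(0)<\infty$ explicitly, via boundedness of the ratio when $b<\infty$ and via \eqref{BruchAbschaetz} plus global Lipschitz continuity of $\ell$ when $b=\infty$; this fills in a hypothesis the paper's one-line proof leaves implicit and is what justifies the codomain $[0,\infty)$.
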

\begin{proof}
	This follows directly from the combination of Lemmata \ref{LipschitzL}, \ref{LipschitzY01}, and \ref{LocalLipschitzL} with \cite[Lemma 2.19]{SC08}.
\end{proof}

\section{Examples} \label{Kap_Bsp}
In this chapter, we will give special representing functions $\ell: (0, \infty) \to [0, \infty)$. 
The argument of $\ell$, which we will usually denote by $r$ in the remainder of the paper, is positive.
We give definitions as well as plots in the following subsections resulting in an overview at the end.
Some of the following examples can already be found in the literature. 
However, they are put in our more general theoretical framework here.
We also propose some new ratio-based loss functions and briefly discuss why these loss functions may be useful. 
\subsection{Ratio-Based Loss Functions Using the Logarithm}
Since $\ell$ has by definition a positive argument, one can use the logarithm.
Additionally to the loss functions defined via \eqref{log(1+r)Loss} in Example \ref{Bsp_log(1+r)} and via \eqref{bsp_3_Loss} in Example \ref{bsp_konvex_Lipschitz_3}, we give the following representation functions.
In this context, some work is already done.
Many loss functions use the logarithm in a distance-based representation, especially with $c = 1$ (\cites[Chapter 4.2.9]{CiampiconiEtAl2024}[Chapters III.H, III.I]{JadonEtAl2022}[(9)]{LiEtAl2025}[Chapters 8.1.2, 8.2.3, 8.2.5, 8.2.8]{TervenEtAl2025}).
Because of the logarithm's rules, this can be expressed through a ratio-based approach, too.
\subsubsection{Squared Logarithmic Relative Loss}
\begin{equation} \label{logquadrFehler}
	\ell(r) := (\log(r))^2
\end{equation}
(cf. \cites[Chapter III.H]{JadonEtAl2022}[Chapter 8.1.2]{TervenEtAl2025}[Chapter 3.2]{Eigen2014})
\subsubsection{Absolute Logarithmic Relative Loss}
\begin{equation} \label{logabsFehler}
	\ell(r) = |\log(r)|
\end{equation}
(cf. \cites[Chapter 4.2.9]{CiampiconiEtAl2024}[Chapter III.I]{JadonEtAl2022}[(9)]{LiEtAl2025}[Chapter 8.2.3]{TervenEtAl2025})
\subsubsection{Huber-type Logarithmic Relative Loss}
To combine the advantages of the previous functions, i.e. differentiability of the squared logarithmic loss and hopefully better robustness properties of the absolute logarithmic loss, we introduce a Huber-type version which is continuous and continuously differentiable using a parameter $\alpha > 1$ by
\begin{equation} \label{mixedlogFehler}
	\ell(r) = 
	{\renewcommand{\arraystretch}{1.5}%
		\begin{cases}
			\log(\frac{1}{\alpha}) \log(r^2\alpha), & r \leq \alpha^{-1}, \\
			\log(r)^2, &\alpha^{-1} < r < \alpha, \\
			\log(\alpha)\log\Bigl(\frac{r^2}{\alpha}\Bigr), & \alpha \leq r.
	\end{cases}}
\end{equation} 
The corresponding rb loss function $L$ is obviously inspired by Huber's loss function (\cites[p. 71: Ex. 5.4]{Huber1981}[Chapter III.L]{JadonEtAl2022}[p. 2532]{KimScott2012}[pp. 104-105: Ex. 2, Fig. 1]{HampelEtAl1986}[p. 26]{MaronnaMartinYohai2006}).
To our knowledge, this loss function has not been proposed earlier. 
\begin{figure}[ht]
	\centering
	\begin{subfigure}{0.31\textwidth}
		\centering
		\includegraphics[width=\textwidth]{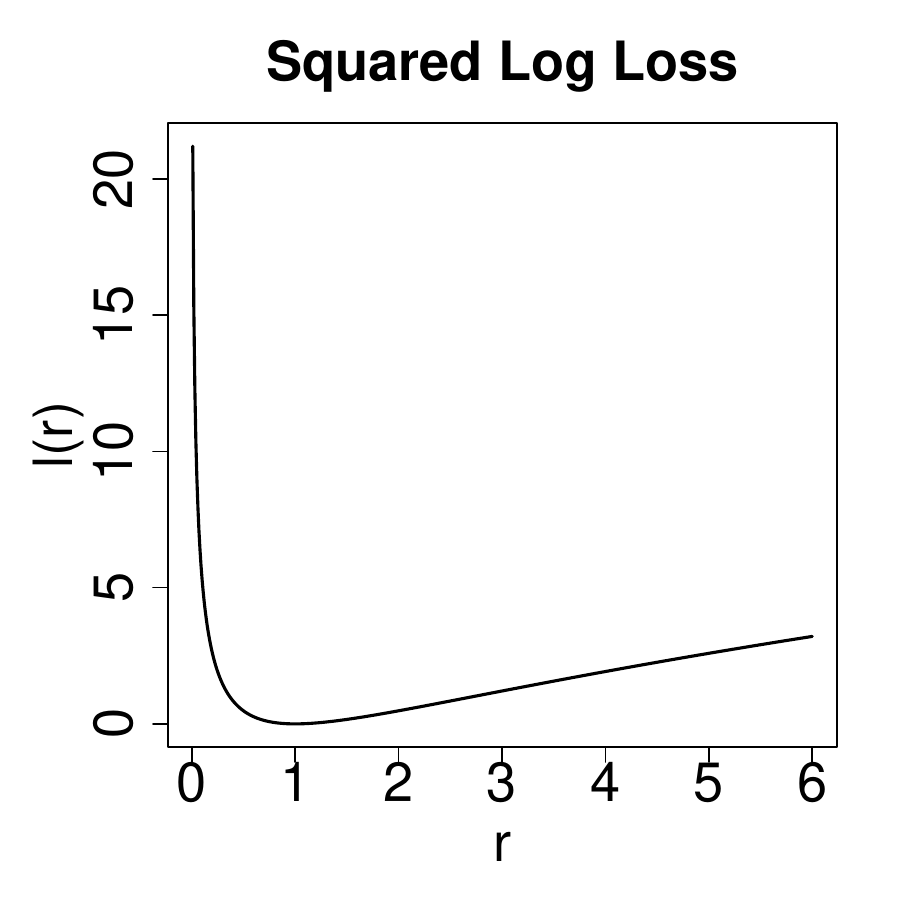}
	\end{subfigure}
	\hfill
	\begin{subfigure}{0.31\textwidth}
		\centering
		\includegraphics[width=\textwidth]{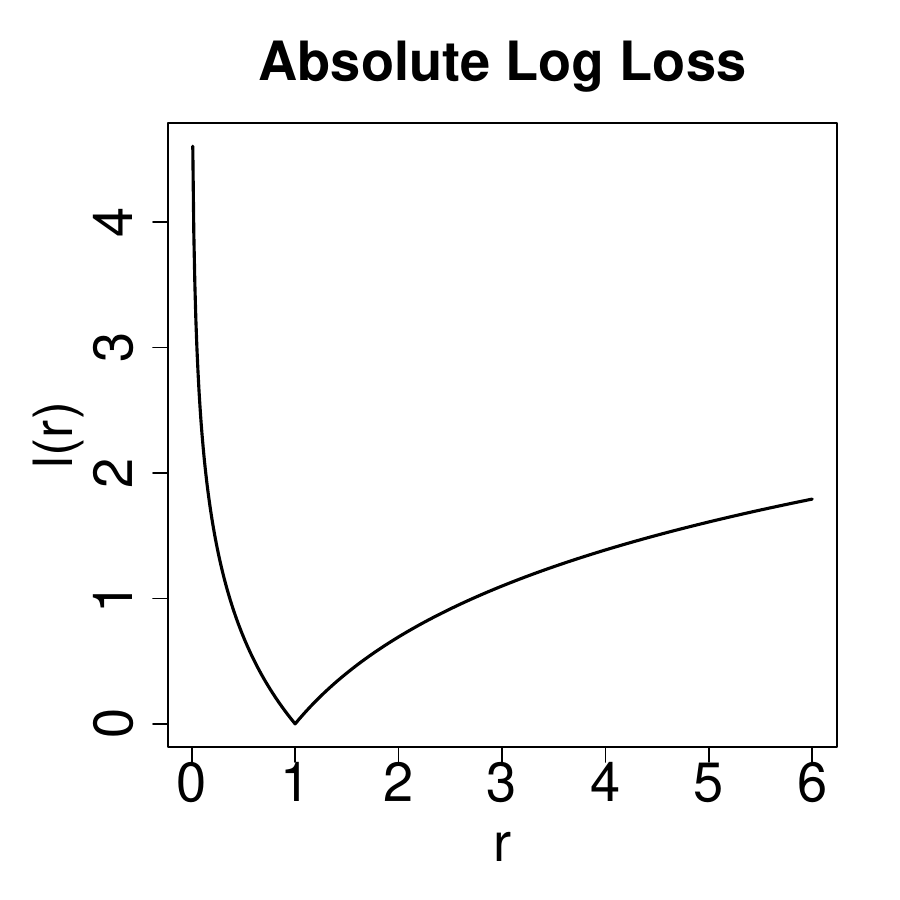}
	\end{subfigure}
	\hfill
	\begin{subfigure}{0.31\textwidth}
		\centering
		\includegraphics[width=\textwidth]{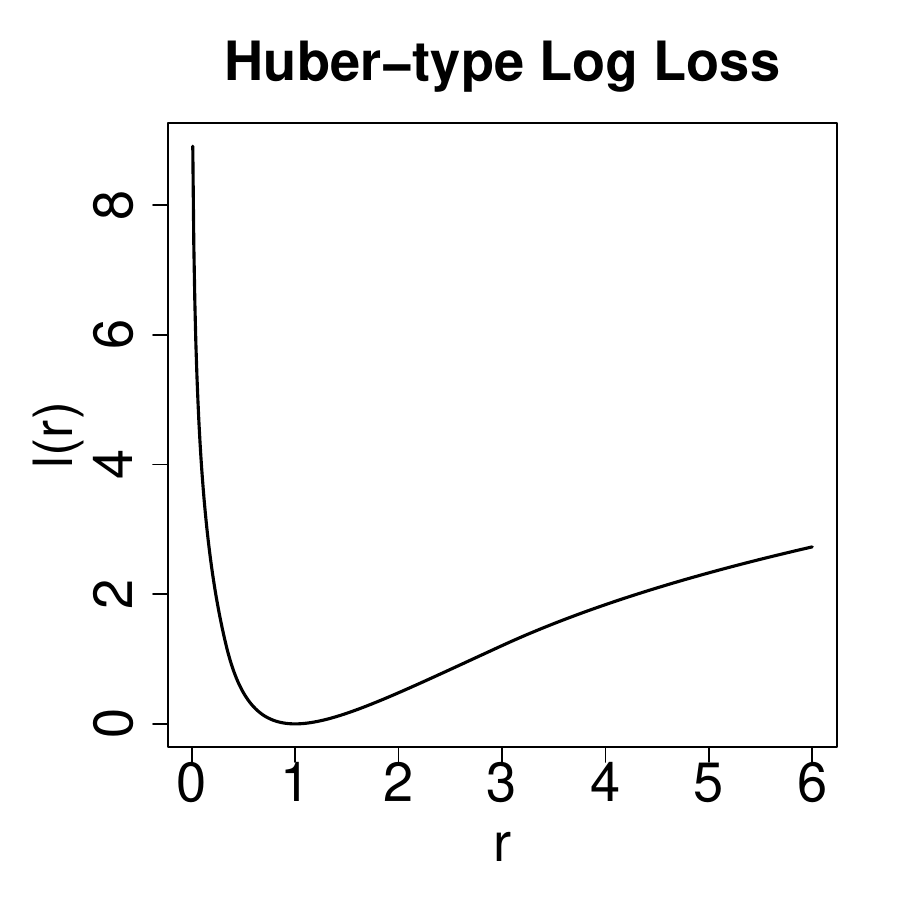}
	\end{subfigure}
	\caption{Plots of the representing functions $\ell$ using the logarithm; Huber-type logarithmic relative loss with parameter $\alpha = 3$}
\end{figure}

\subsection{Ratio-Based Loss Functions Using Logarithm and Hyperbolic Cosine}
A common distance-based loss function is log-cosh loss $L(x,y,t) = \psi(y-t) = \log(\cosh(y-t))$ (\cites[Chapter III.M]{JadonEtAl2022}[Chapter 3.1.9]{LiEtAl2025}[Chapter 4.2.8]{CiampiconiEtAl2024}).
We modify $\psi$ in various ways such that the minimum is attained in 1 to obtain a rb loss function. 

\subsubsection{Log-cosh Relative Loss}
\begin{equation} \label{logcoshFehler}
	\ell(r) = \log(\cosh(r-1))
\end{equation}
\subsubsection{Cosh-log Relative Loss}
\begin{equation} \label{coshlogFehler}
	\ell(r) = \cosh(\log(r)) - 1
\end{equation}
\subsubsection{Log-cosh-log Relative Loss}
\begin{equation} \label{logcoshlogFehler}
	\ell(r) = \log(\cosh(\log(r)))
\end{equation}
\begin{figure}[ht]
	\centering
	\begin{subfigure}{0.31\textwidth}
		\centering
		\includegraphics[width=\textwidth]{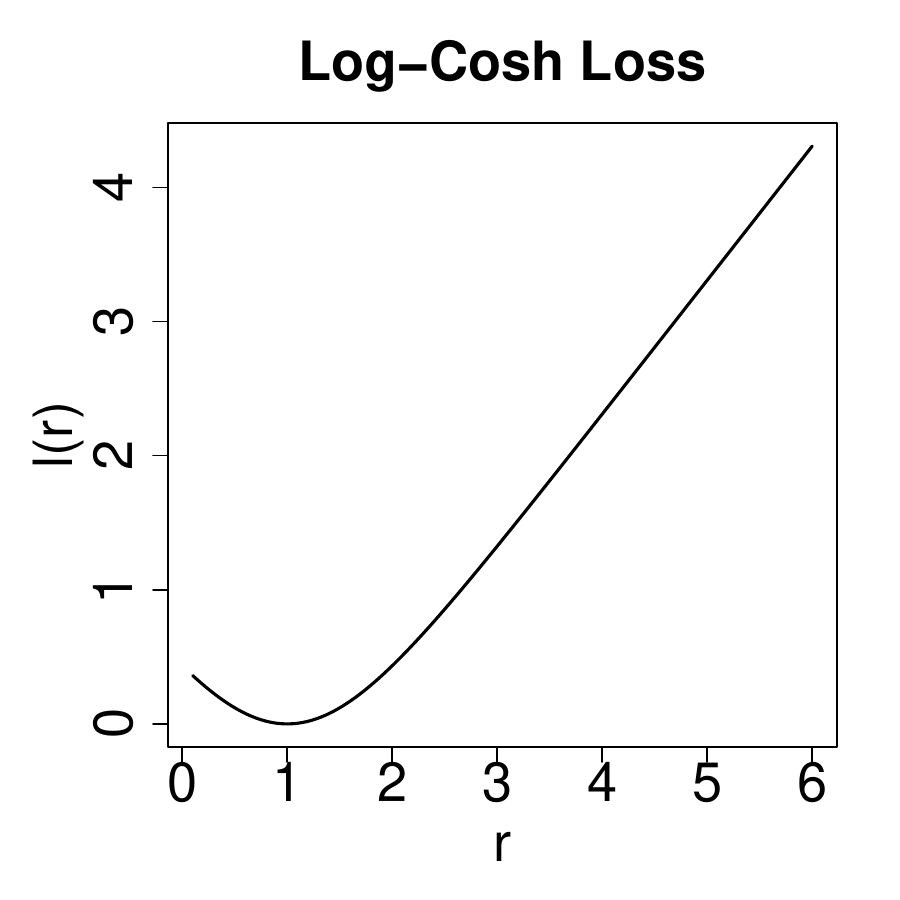}
	\end{subfigure}
	\hfill
	\begin{subfigure}{0.31\textwidth}
		\centering
		\includegraphics[width=\textwidth]{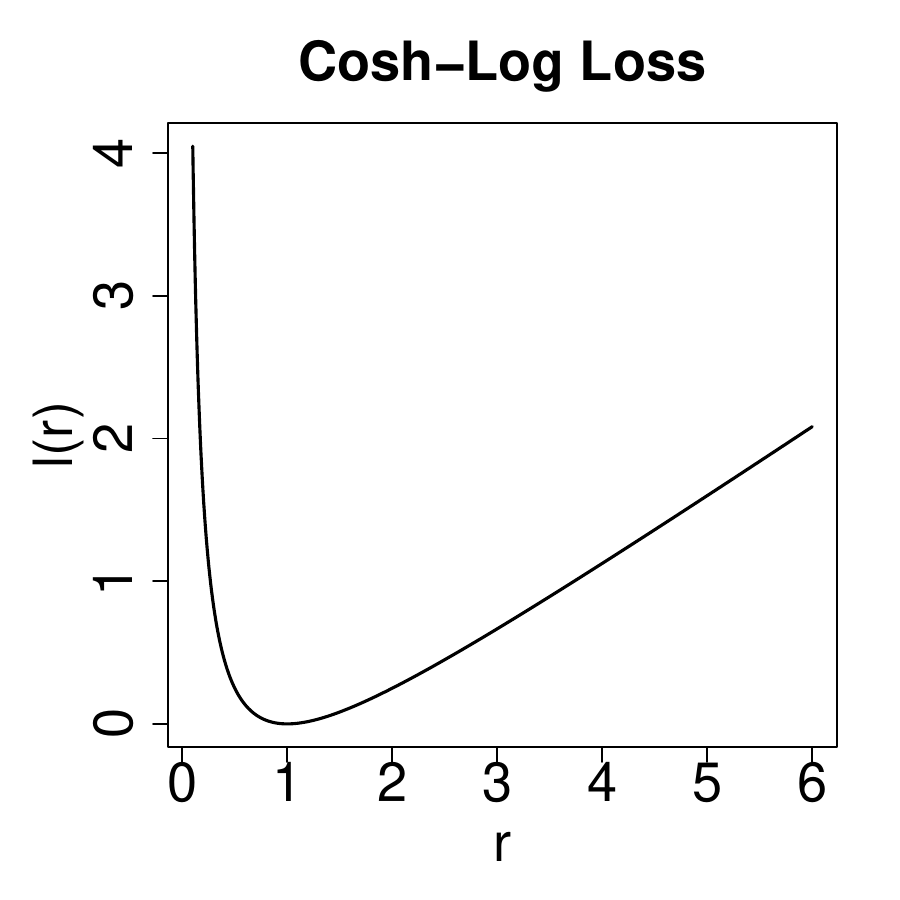}
	\end{subfigure}
	\hfill
	\begin{subfigure}{0.31\textwidth}
		\centering 
		\includegraphics[width=\textwidth]{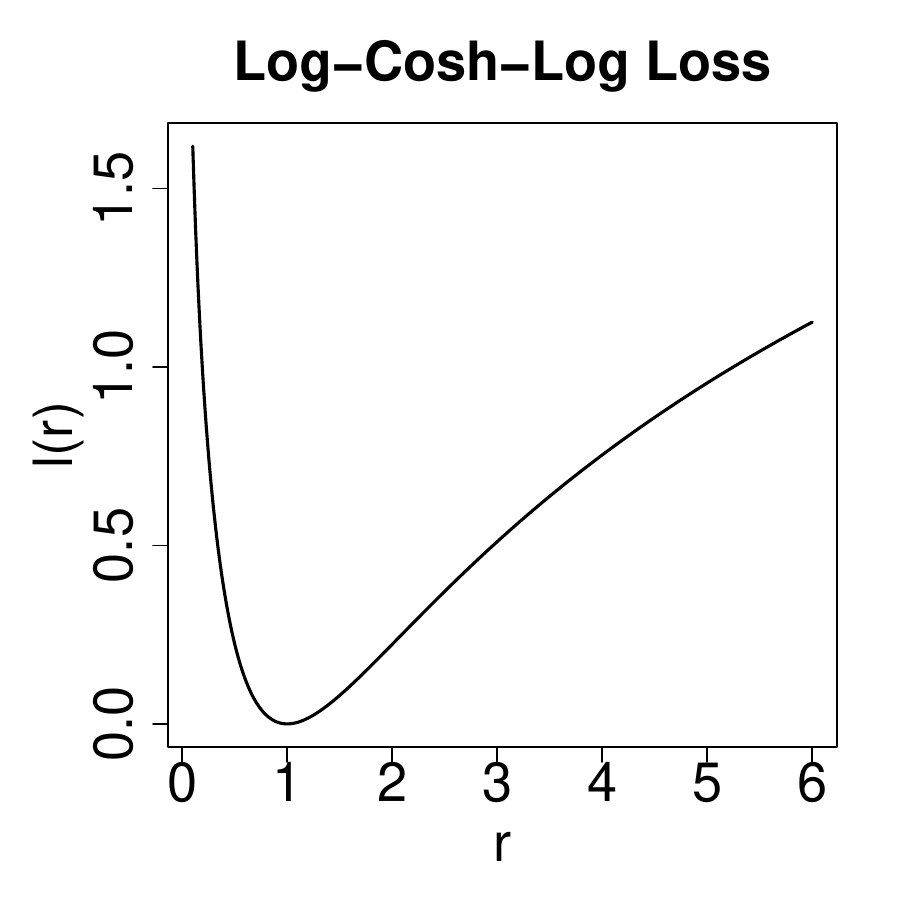}
	\end{subfigure}
	\caption{Plots of the representation functions $\ell$ using logarithm and hyperbolic cosine}
\end{figure}
\subsection{Maximum Loss}
Next, we define maximum loss (cf. \cites[Chapter 8.2.4]{TervenEtAl2025}[Chapter 4.3]{Eigen2014}[p. 15]{Ye2007})
\begin{equation} \label{maxFehler}
	\ell(r) = \max\{r, r^{-1}\} - 1 = \max\{|1-r|, |1-r^{-1}|\}.
\end{equation}
The second expression shows its connection to the later defined \textit{general relative error} (\cite[(8)]{ChenEtAl2016LPRE}).
\subsection{Logarithmic Pinball Loss} 
The following rb loss function $L$ using $\ell$ is based on the idea of the distance-based $\tau$-pinball loss function (see e.g. \cite[Ex. 2.43]{SC08}) and is convex, Lipschitz continuous, but not differentiable. 
For $\tau\in(0,1)$ we define the logarithmic pinball loss function via 
\begin{equation} \label{maxlogFehler}
	\ell(r) = \max\{\tau \log(r), (1-\tau) \log(r^{-1})\}.
\end{equation}
\begin{figure}[ht]
	\centering
	\begin{subfigure}{0.31\textwidth}
		\includegraphics[width=\textwidth]{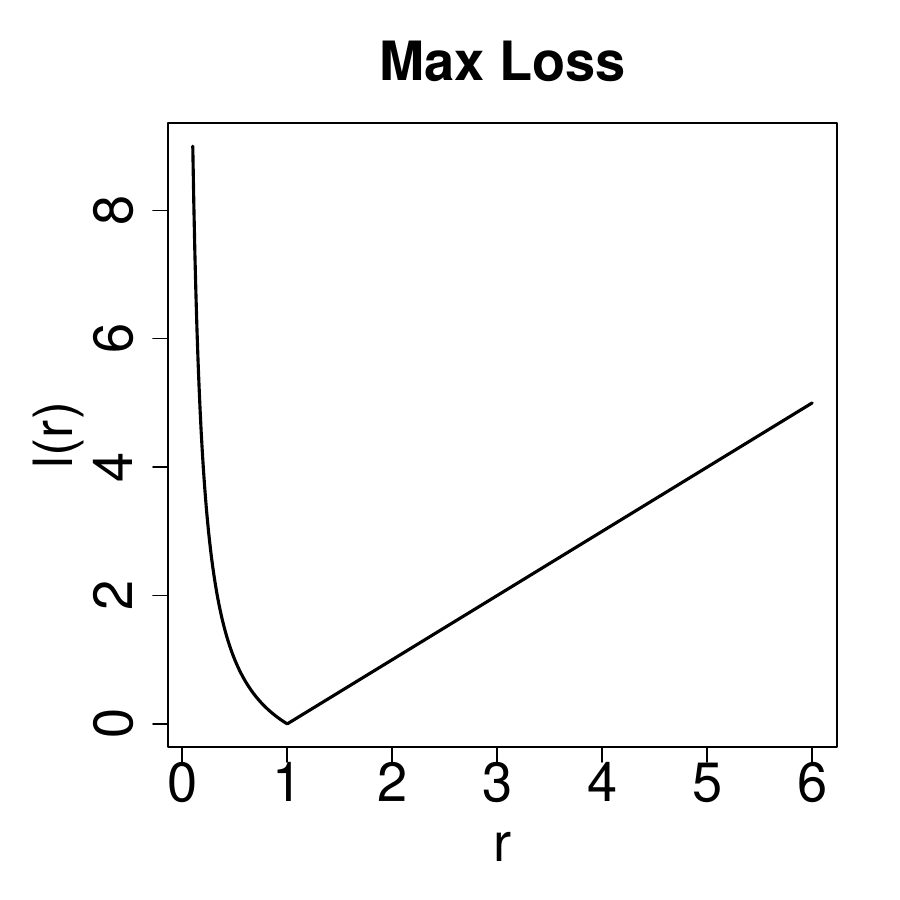}
		\caption{Plot of the representing function $\ell$ of maximum loss function from \eqref{maxFehler}}
	\end{subfigure}
	\hfill
	\begin{subfigure}{0.31\textwidth}
		\includegraphics[width=\textwidth]{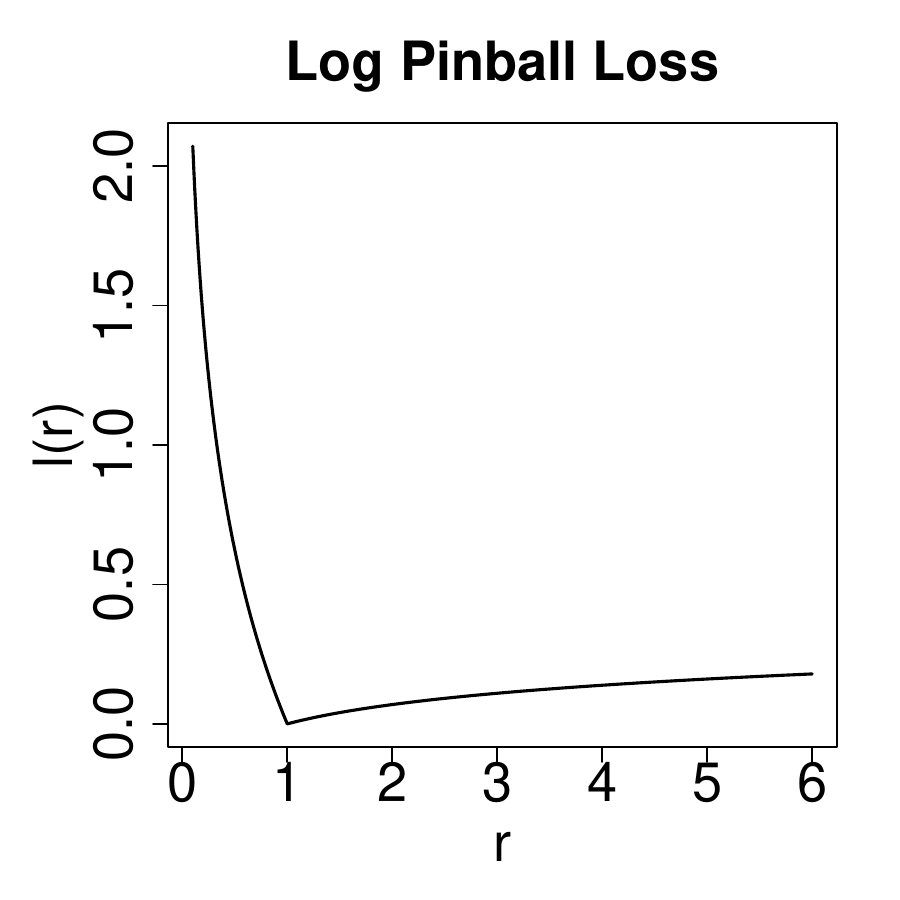}
		\caption{Plot of the representing function $\ell$ of logarithmic pinball loss with parameter $\tau = 0.1$}
	\end{subfigure}
	\hfill
	\begin{subfigure}{0.31\textwidth}
		\includegraphics[width=\textwidth]{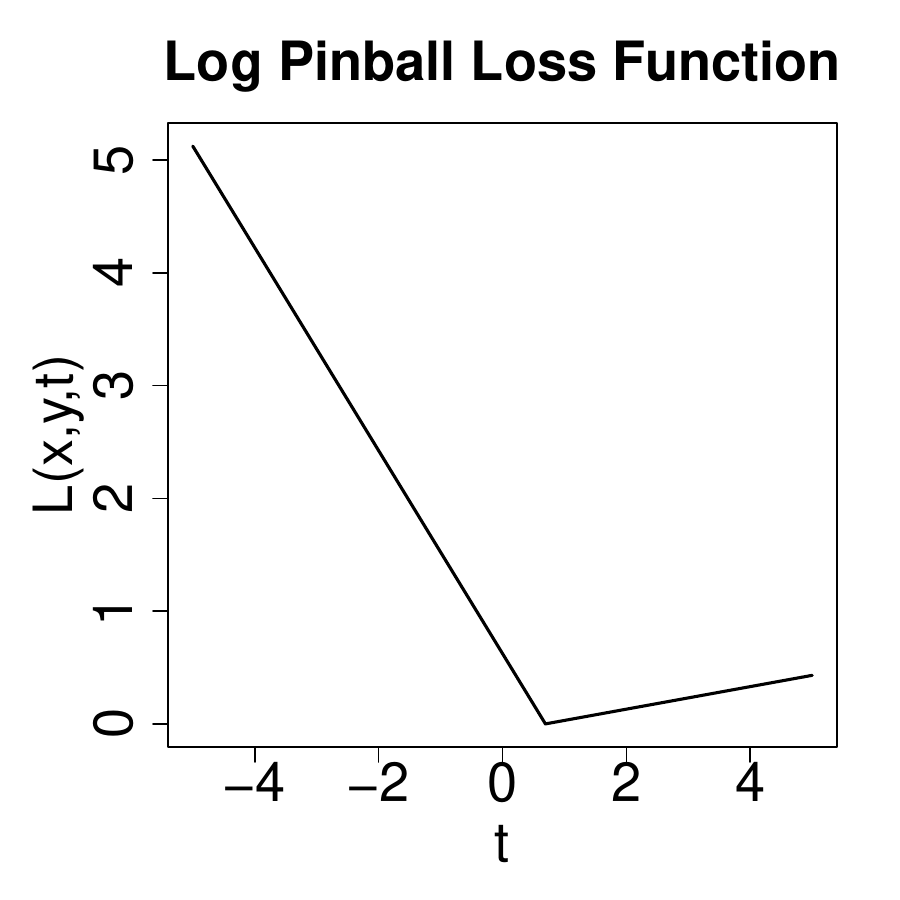}
		\caption{Plot of the logarithmic pinball loss function $L$ in variable $t$ for $y = 3$, $\tau = 0.1$, $c = 0$ and $u  =\exp$}
	\end{subfigure}
\end{figure}

\subsection{Absolute Relative Loss}
Inspired by \cites[Chapter 8.2.1]{TervenEtAl2025}[Chapter III.F]{JadonEtAl2022}[Chapter 4.3]{Eigen2014}[p. 13]{Ye2007}, an intuitive and easy way for calculations is the total percentage deviation 
\begin{equation} \label{absFehler}
	\ell(r) = |r-1|.
\end{equation}
\subsubsection{Squared Relative Loss}
Smoothing the loss function is done by taking the square (cf. \cite[Chapter 4.3]{Eigen2014}). 
This yields differentiability of $\ell$.
Still, we can derive total percentage deviation.
\begin{equation} \label{quadrFehler}
	\ell(r) = (r-1)^2
\end{equation}
\subsubsection{Huber-type Relative Loss}
Combining both absolute relative loss functions discussed above in the sense of Huber (\cite[Chapter 3.5: Ex. 5.4]{Huber1981}), we get 
\begin{equation} \label{mixedrelFehler}
	\ell(r) = 
	\begin{cases}
		2(\alpha^{-1}-1)(r-1)- (1-\alpha^{-1})^2, & r \leq \alpha^{-1}, \\
		(r-1)^2, & \alpha^{-1} < r < \alpha, \\
		2(\alpha -1)(r-1)- (\alpha-1)^2, & \alpha \leq r.
	\end{cases}
\end{equation}
As far as we know, this loss has not been studied in the literature. 
Obviously, this representing function $\ell$ is continuous and even has a continuous derivative.
\begin{figure}[ht]
	\centering
	\begin{subfigure}{0.31\textwidth}
		\centering
		\includegraphics[width=\textwidth]{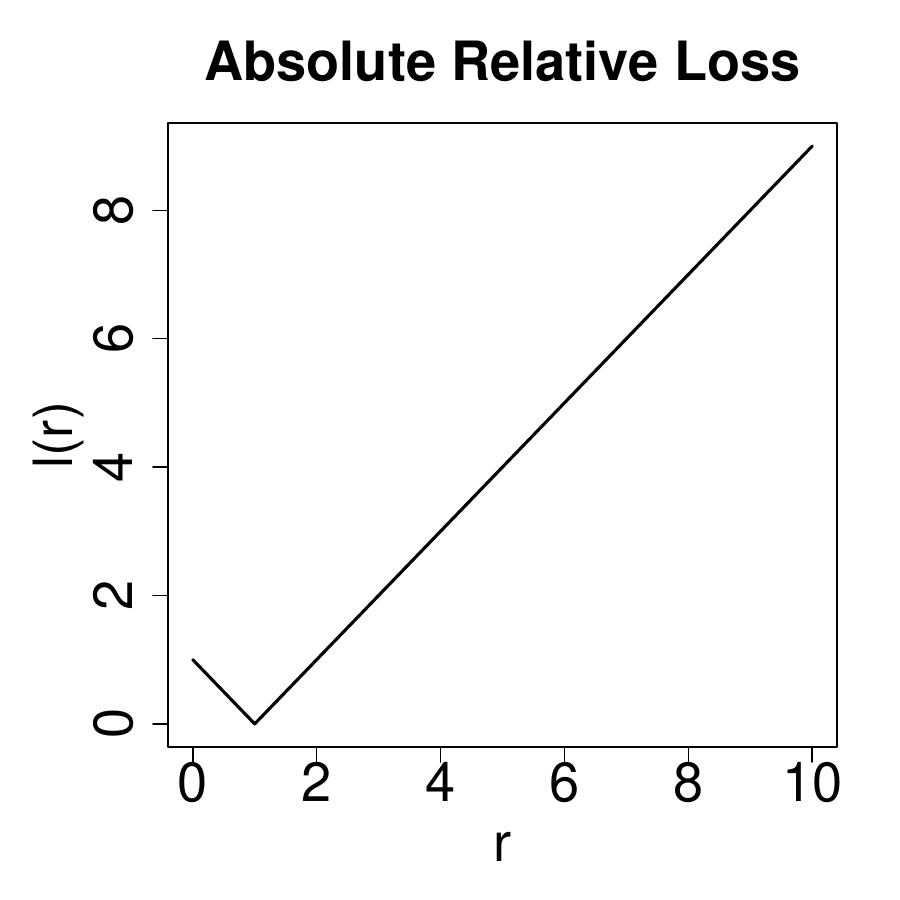}
	\end{subfigure}
	\hfill
	\begin{subfigure}{0.31\textwidth}
		\centering
		\includegraphics[width=\textwidth]{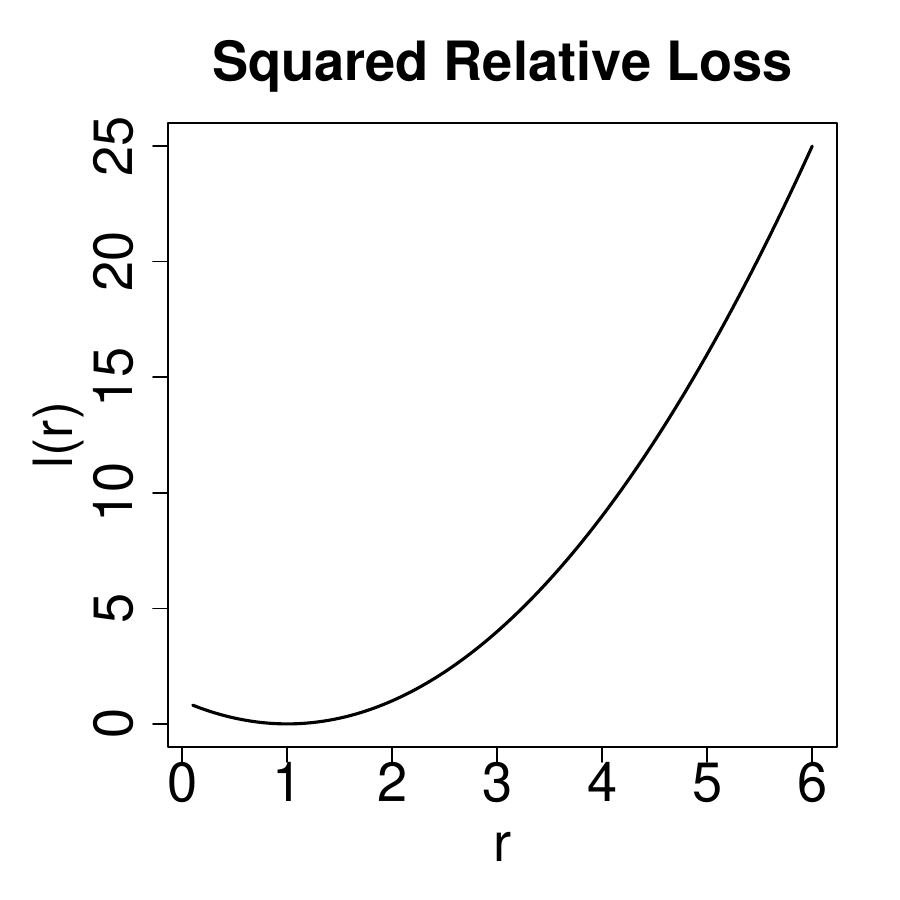}
	\end{subfigure}
	\hfill
	\begin{subfigure}{0.31\textwidth}
		\centering
		\includegraphics[width=\textwidth]{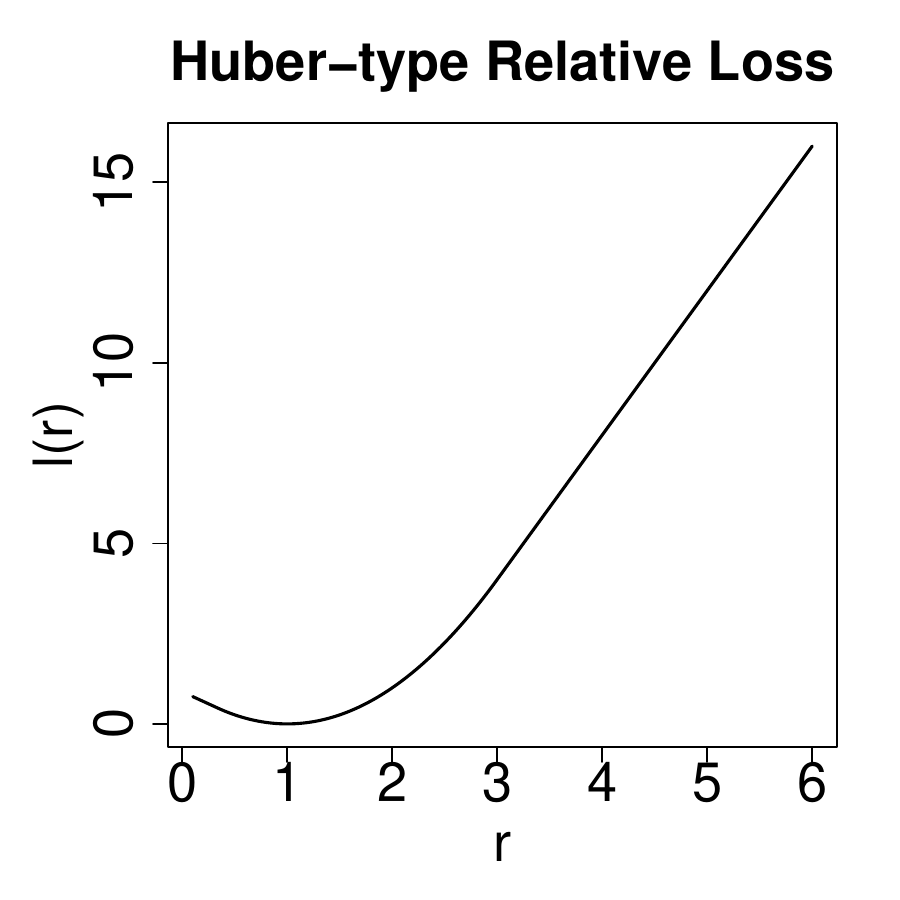}
	\end{subfigure}
	\caption{Plots of the representing functions $\ell$ of absolute relative, squared relative and Huber-type relative loss functions; Huber-type relative loss for parameter $\alpha = 3$}
\end{figure}  

\subsection{Inverse Absolute Relative Loss}
\begin{equation} \label{invabsFehler}
	\ell(r) = |r^{-1}-1|
\end{equation}
In contrast to the absolute loss, which penalizes overestimation hard, but neglected underestimation, this representation function $\ell$ behaves conversely.
Here, we focus on penalizing underestimation.
\subsubsection{Squared Inverse Relative Loss}
Again, differentiability can be achieved through taking the square:
\begin{equation} \label{invsqFehler}
	\ell(r) = (r^{-1}-1)^2.
\end{equation}
\subsubsection{Huber-type Inverse Relative Loss}
Advantages of both inverse loss functions given above are united in their Huber-type version with parameter $\alpha > 1$.
This is again inspired by Huber's loss function (\cite[Chapter 3.5: Ex. 5.4]{Huber1981}).
We requested continuity and continuous differentiability.
To our knowledge, this loss function has not been proposed in the literature.
\begin{equation} \label{mixedinvFehler}
	\ell(r) = 
	\begin{cases}
		2(\alpha-1)(r^{-1}-1)-(\alpha-1)^2, & r \leq \alpha^{-1}, \\
		(r^{-1}-1)^2, & \alpha^{-1} < r < \alpha, \\
		2(\alpha^{-1}-1)(r^{-1}-1)- (1-\alpha^{-1})^2, & \alpha \leq r.
	\end{cases}
\end{equation}
\begin{figure}[ht]
	\centering
	\begin{subfigure}{0.31\textwidth}
		\centering 
		\includegraphics[width=\textwidth]{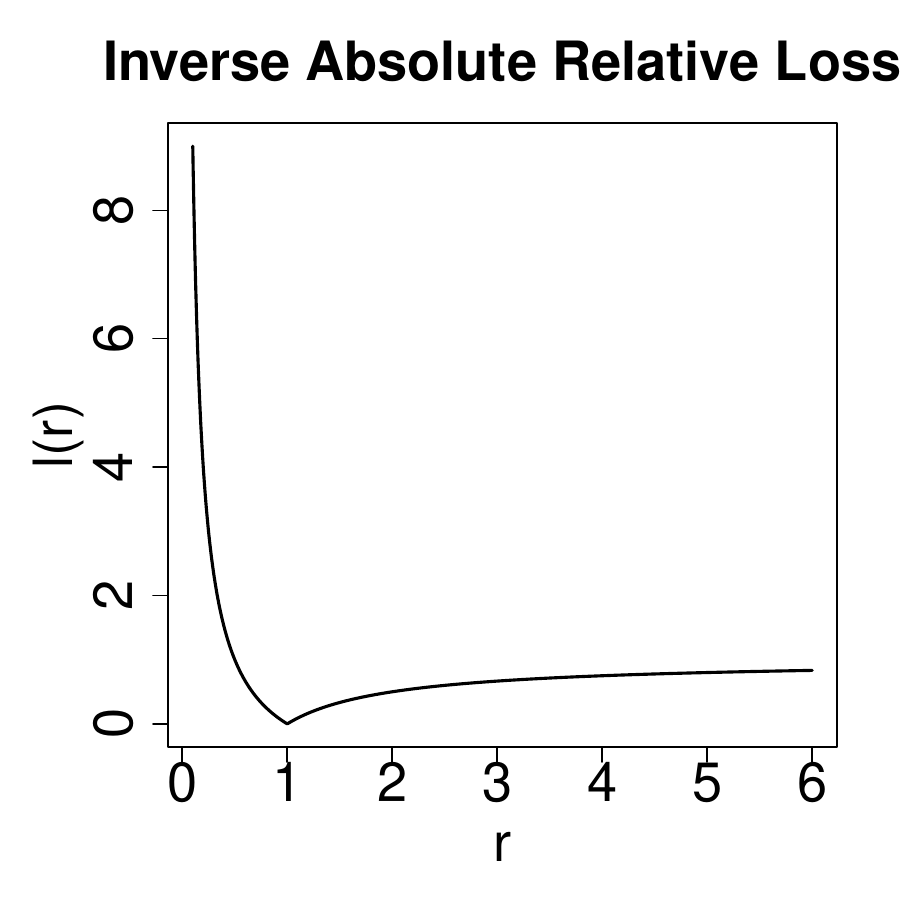}
	\end{subfigure}
	\hfill
	\begin{subfigure}{0.31\textwidth}
		\centering
		\includegraphics[width=\textwidth]{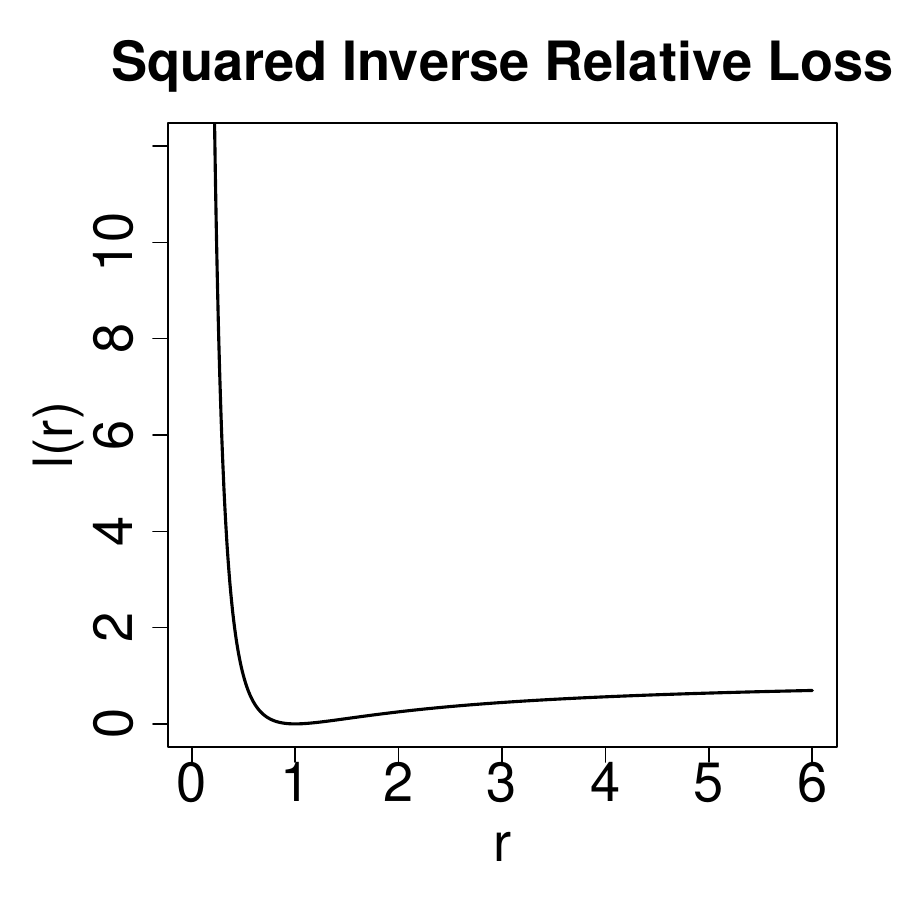}
	\end{subfigure}
	\hfill
	\begin{subfigure}{0.31\textwidth}
		\centering
		\includegraphics[width=\textwidth]{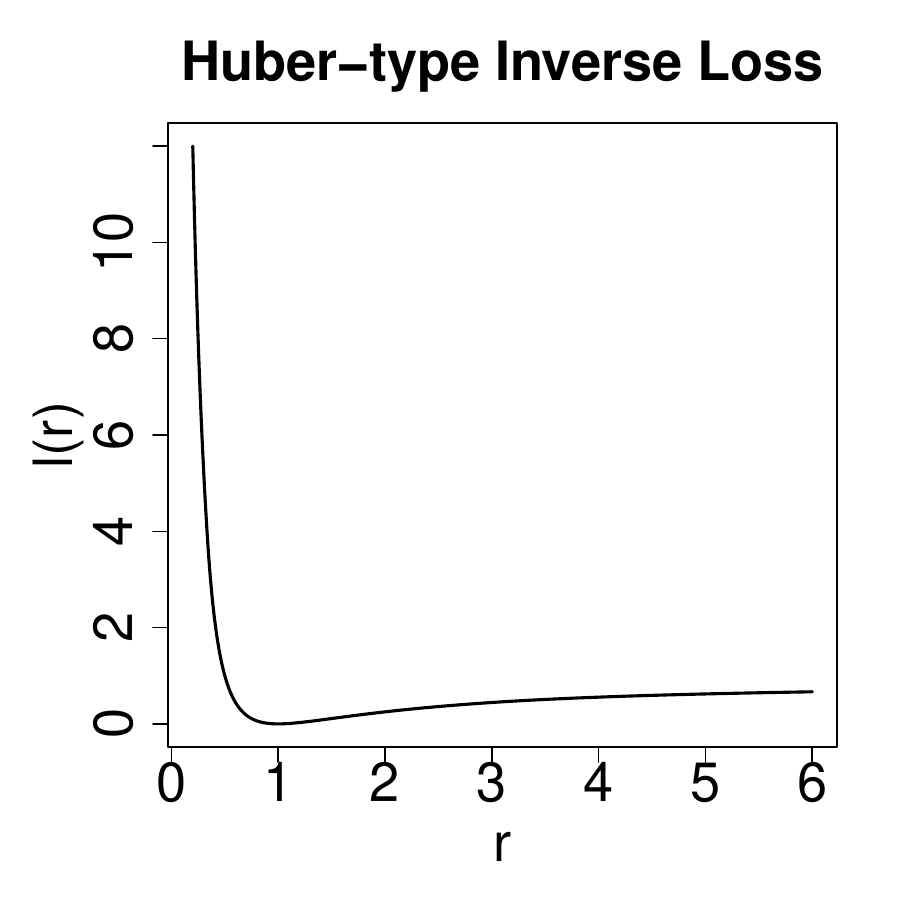}
	\end{subfigure}
	\caption{Plots of the representing functions $\ell$ of inverse relative loss functions; Huber-type inverse relative loss function for parameter $\alpha = 3$}
\end{figure}

\subsection{Least Absolute Relative Loss}
\begin{equation} \label{leastabsFehler}
	\ell(r) = |r - r^{-1}| = |1-r| + |1-r^{-1}|
\end{equation}
This loss function is following the example of \emph{least absolute relative error} (LARE, see \cite[(3)]{ChenEtAl2010LARE}).
The first representation allows easy calculations, the second shows its connection to \textit{general relative errors} \eqref{generalFehler} (\cite[(8)]{ChenEtAl2016LPRE}).
\subsubsection{Smooth Least Absolute Relative Loss}
Taking the square yields a differentiable function $\ell$:
\begin{equation} \label{smoothleastabsFehler}
	\ell(r) = (r-r^{-1})^2.
\end{equation}
\subsubsection{Huber-type Least Absolute Relative Loss}
\begin{equation} \label{mixedleastrelFehler}
	\ell(r) = 
	\begin{cases}
		\frac{2(\alpha^2-1)}{\alpha}(r^{-1}-r) - \frac{(1-\alpha^2)^2}{\alpha^2} & r \leq \alpha^{-1}, \\
		(r-r^{-1})^2, & \alpha^{-1} < r < \alpha, \\
		\frac{2(\alpha^2-1)}{\alpha}(r-r^{-1})- \frac{(1-\alpha^2)^2}{\alpha^2}, & \alpha \leq r.
	\end{cases}
\end{equation}
Each LARE version's advantages are combined in the above function $\ell$ inspired by Huber loss (\cite[Chapter 3.5: Ex. 5.4]{Huber1981}) using a parameter $\alpha > 1$.
To our knowledge, this loss function has not been proposed in the literature.
\begin{figure}[ht]
	\centering
	\begin{subfigure}{0.31\textwidth}
		\centering
		\includegraphics[width=\textwidth]{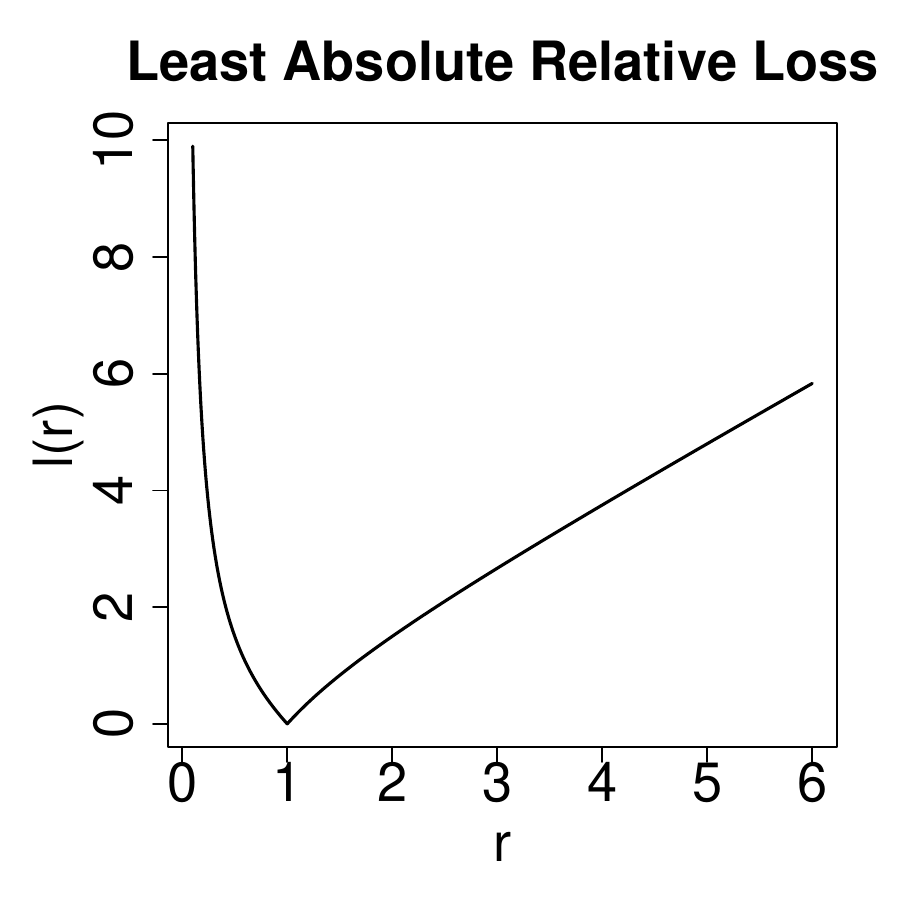}
	\end{subfigure}
	\hfill
	\begin{subfigure}{0.31\textwidth}
		\centering
		\includegraphics[width=\textwidth]{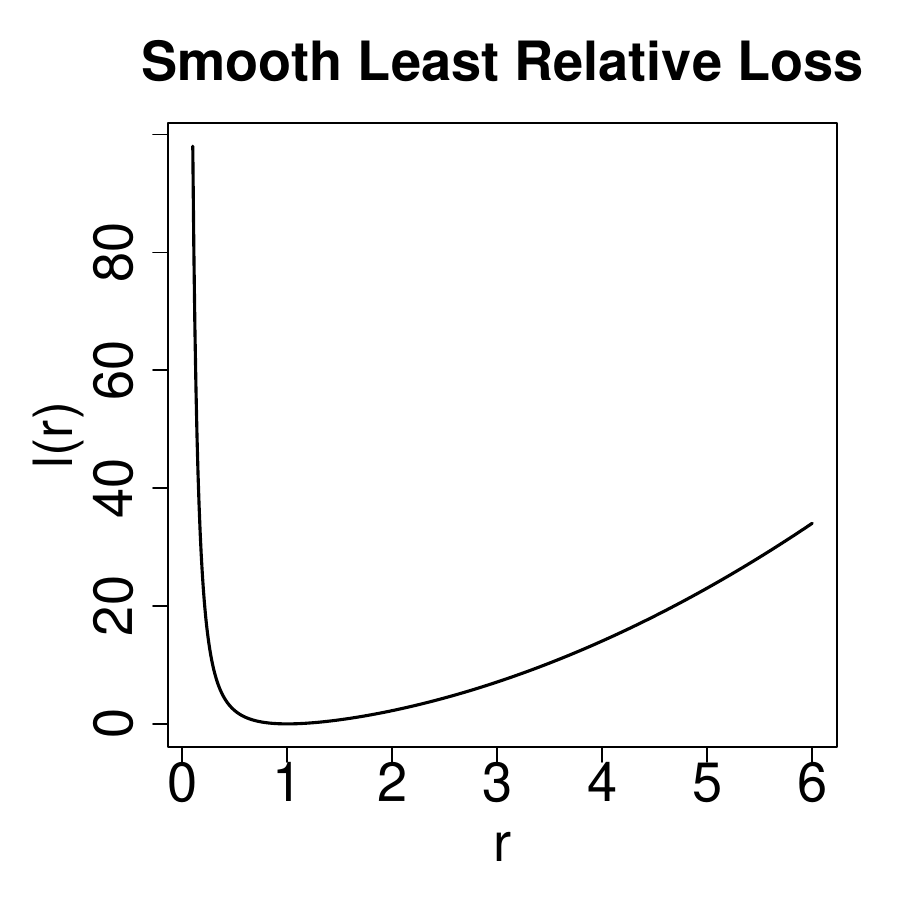}
	\end{subfigure}
	\hfill
	\begin{subfigure}{0.31\textwidth}
		\centering
		\includegraphics[width=\textwidth]{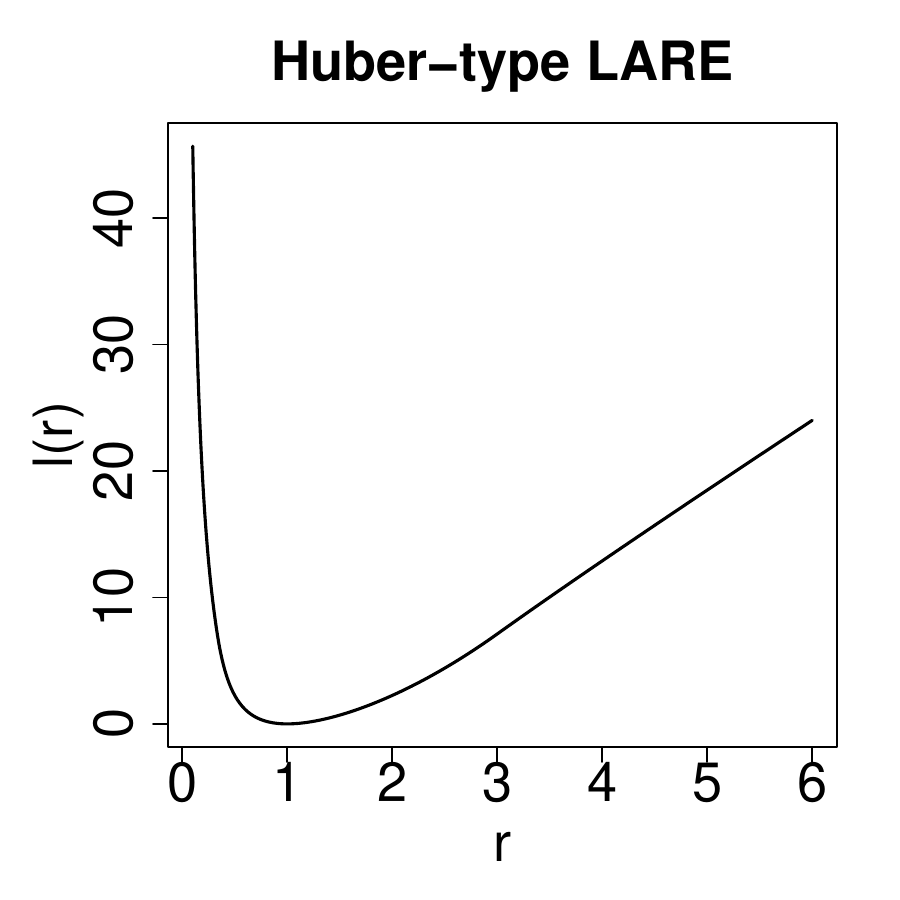}
	\end{subfigure}
	\caption{Plots of the representing functions $\ell$ of LARE loss functions; Huber-type least absolute relative loss function's representation $\ell$ with parameter $\alpha = 3$}
\end{figure}

\subsection{Least Product Relative Loss}
\begin{equation} \label{leastprodFehler}
	\ell(r)= |1-r|\cdot |r^{-1}-1| = r + r^{-1} - 2
\end{equation}
Taking the product of $|1-r|$ and $|r^{-1}-1|$ instead of the sum, gives the rb loss function, which is known in the literature as the \emph{least product relative error} (LPRE, cf. \cites[(3)]{ChenEtAl2016LPRE}[(3)]{MingEtAl2025}[(2.2)]{YangEtAl2023}). 
The first representation is essential for the loss function's name; the last one gives an easier formulation for calculations. 
\subsection{General Relative Loss}
Generalizing the concept of the previous loss functions, one gets the rb loss which is known as the \emph{general relative error} (\cite[(8)]{ChenEtAl2016LPRE})
\begin{equation} \label{generalFehler}
	\ell(r) = g(|1-r|, |r^{-1}-1|).
\end{equation}
The function  $g$ is not defined in \cite[Chapter 2.3]{ChenEtAl2016LPRE}, unless it is supposed to satisfy \anf{certain regularity conditions} that are not further specified.
For our purposes, we assume a measurable function
\begin{equation*}
	g: [0, \infty) \times [0, \infty) \to [0, \infty).
\end{equation*} 
Some of the prior examples already discussed are sum, product, and maximum as possible mappings $g$.
Some obvious additional examples are $g(a,b) = a^2+b^2$, $g(a,b) = \sqrt{a^2 + b^2}$, and $g(a,b) = \sqrt{a+b}$,
which give the representing functions
\begin{align}
	\ell(r) &= (1-r)^2 + (r^{-1}-1)^2, \label{generalsqFehler}\\
	\ell(r) &= \sqrt{(1-r)^2 + (r^{-1}-1)^2}, \label{generalsqrtsqFehler}\\
	\ell(r) &= \sqrt{|1-r|+|r^{-1}-1|}. \label{generalsqrtFehler}
\end{align}
These representing functions penalize underestimation and overestimation in the same manner.
An extreme version $g(a,b) = a + \exp(b)$, suggested in \cite[Chapter 2.3]{ChenEtAl2016LPRE}, penalizes underestimation more than overestimation.
Thus, we get (recall $\ell(1) = 0$) 
\begin{equation} \label{generalexpFehler}
	\ell(r) = |1-r| + \exp(|r^{-1}-1|) - 1.
\end{equation}
Note that using a general relative error does not automatically imply ratio-symmetry.
Applying both error terms, $|1-r|$ and $|r^{-1}-1|$, does not indicate that they are treated equally.
\begin{figure}[ht]
	\centering
	\begin{subfigure}{0.31\textwidth}
		\centering
		\includegraphics[width=\textwidth]{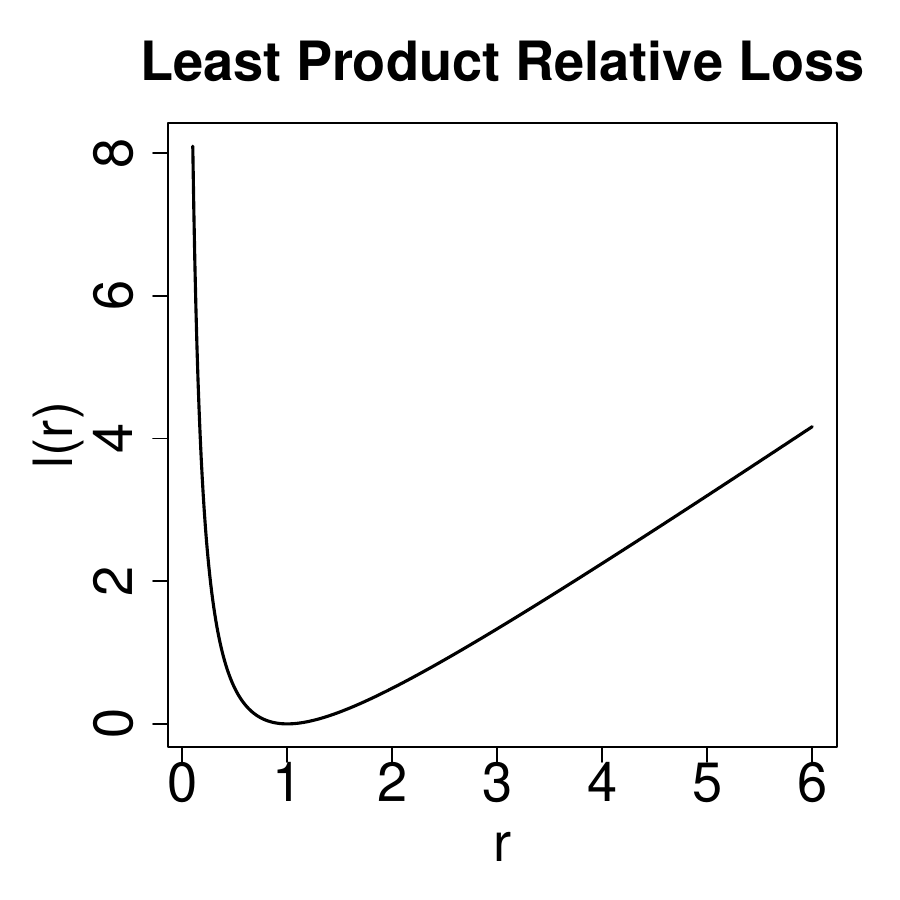}
		\caption{Plot of LPRE's representing function $\ell$ from \eqref{leastprodFehler}}
	\end{subfigure}
	\hfill
	\begin{subfigure}{0.63\textwidth}
		\centering
		\includegraphics[width=\textwidth]{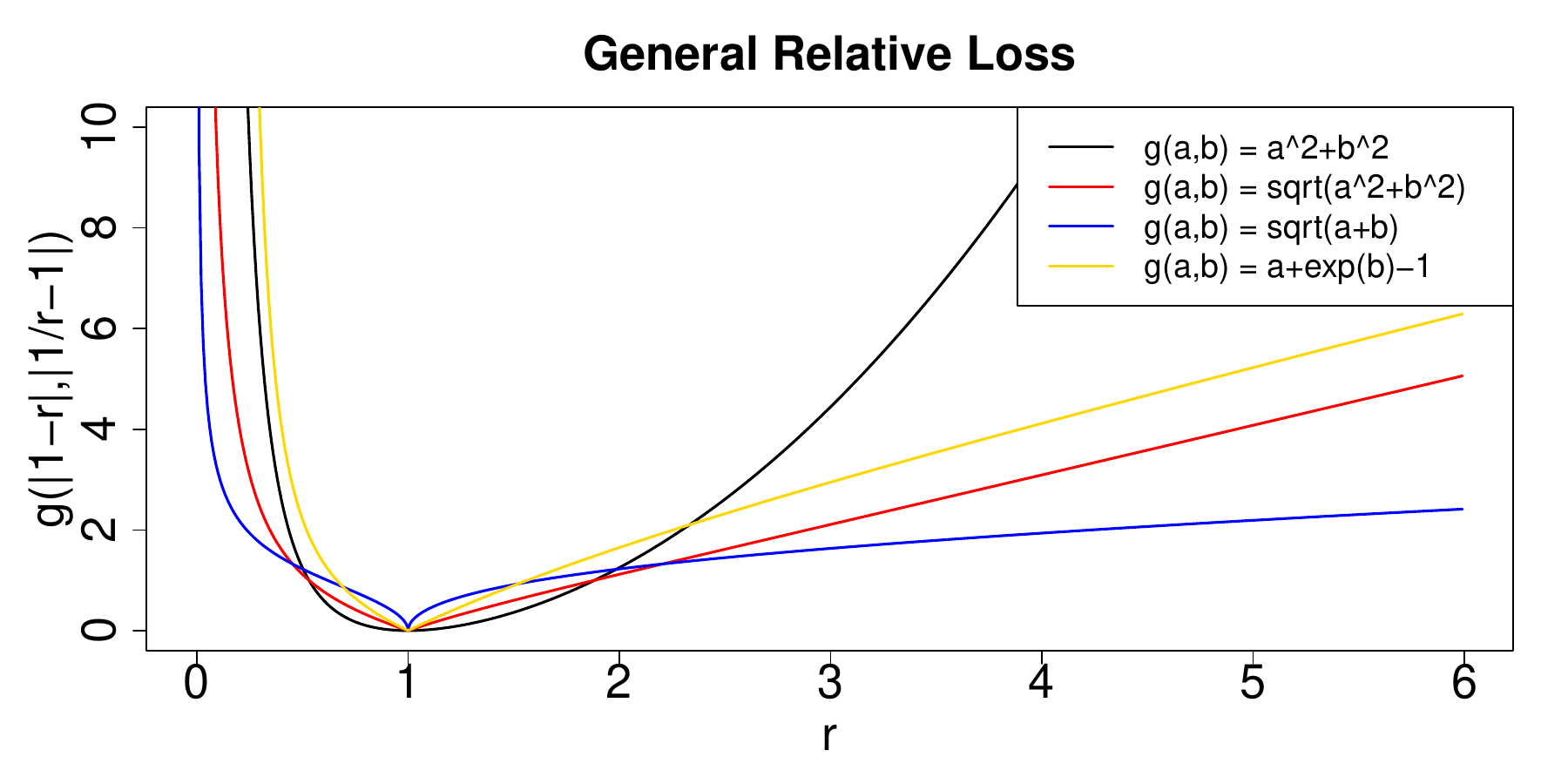}
		\caption{Plots of GRE's representing functions. Black: $\ell$ from \eqref{generalsqFehler}; Red:  $\ell$ from \eqref{generalsqrtsqFehler}; Blue: $\ell$ from \eqref{generalsqrtFehler}; Yellow: $\ell$ from \eqref{generalexpFehler}}
	\end{subfigure}
	\caption{Plots of LPRE's und GRE's representation functions $\ell$}
\end{figure}

\subsection{Insensitive Relative Loss}
As small deviations from the true value should not be penalized too hard or perhaps not at all, we adapt the classical distance-based  $\varepsilon$-insensitive loss, see e.g. \cite{Vapnik1995} and \cite{SmolaSchölkopf2004} to the ratio-based setting. 
We give two different versions.
The insensitive part can be defined ratio-symmetrically by adapting a ratio-symmetric representation function $\ell$. 
Basically, $\varepsilon$ does not need to have an upper bound, however, it makes sense to assume a somehow \emph{small} insensitive area around the minimum. 
Therefore, $\varepsilon$ should be significantly smaller than 1 in most applications.
\begin{align} 
	\ell(r) &= \max\{0, \max\{r, r^{-1}\} - 1 - \varepsilon \}, \hspace*{-2cm} &&\varepsilon \in (0,1), \label{insensitive1} \\
	\ell(r) &= \max\{0, r^{-1}+r-2-\varepsilon\}, \hspace*{-2cm} &&\varepsilon \in (0,1). \label{insensitive2}
\end{align}
\begin{figure}[H]
	\centering
	\begin{subfigure}{0.31\textwidth}
		\centering
		\includegraphics[width=\textwidth]{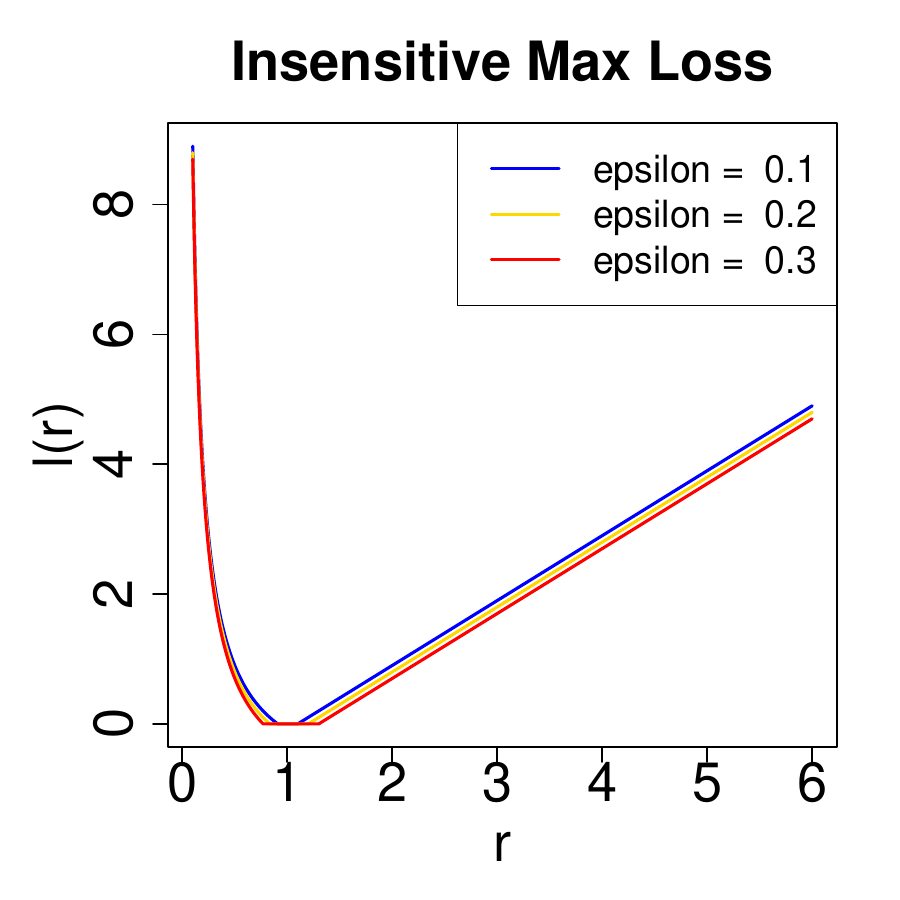}
		\caption{Plot of the representation functions $\ell$ of insensitive loss based on maximum loss for different $\varepsilon$}
	\end{subfigure}
	\hspace*{1cm}
	\begin{subfigure}{0.31\textwidth}
		\centering
		\includegraphics[width=\textwidth]{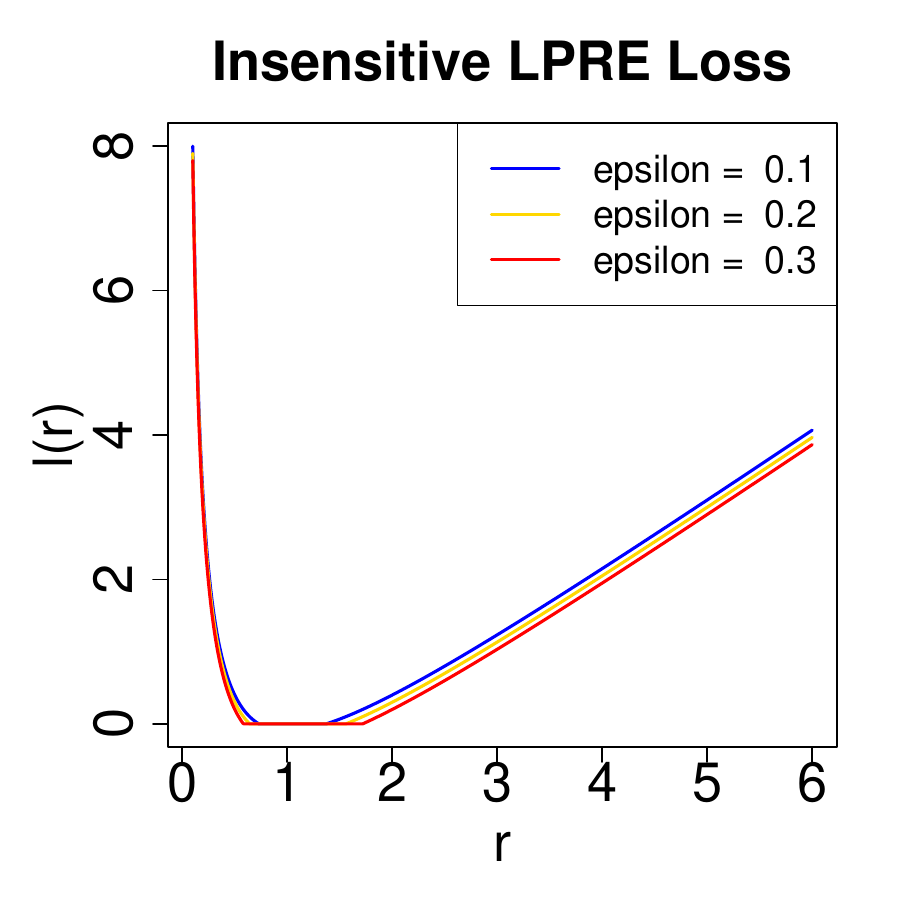}
		\caption{Plot of the representation functions $\ell$ of insensitive loss based on LPRE for various insensitivities $\varepsilon$}
	\end{subfigure}
\end{figure}

\subsection{Robust Relative Loss} \label{robustLossFunctions} 
Still using maximum loss and LPRE, we define more robust ratio-based loss functions with parameter $\alpha > 1$ inspired by Hampel's loss (\cites[Chapter 2.6: Ex. 1]{HampelEtAl1986}[Chapter 4.8]{Huber1981}[p. 2533]{KimScott2012}).
For each representation function, we give a robust and a robust plus insensitive version, which is obviously inspired by the famous $\varepsilon$-insensitive distance-based loss function (see e.g. \cite{Vapnik1995}), combining two favourable properties.
\begin{equation} \label{robustinsens1}
	\ell(r) = 
	\begin{cases}
		\max\{0, \max\{r, r^{-1}\} - 1 - \varepsilon \}, & r \in (\alpha^{-1}, \alpha), \\
		\alpha-1-\varepsilon, & r \not\in (\alpha^{-1}, \alpha).
	\end{cases}
\end{equation}
We require $\alpha > 1 + \varepsilon$ with $\varepsilon \geq 0$.
Here, $\varepsilon = 0$ means that there is no insensitivity around the minimum.
\begin{figure}[H] 
	\centering
	\begin{subfigure}{0.31\textwidth}
		\centering
		\includegraphics[width=\textwidth]{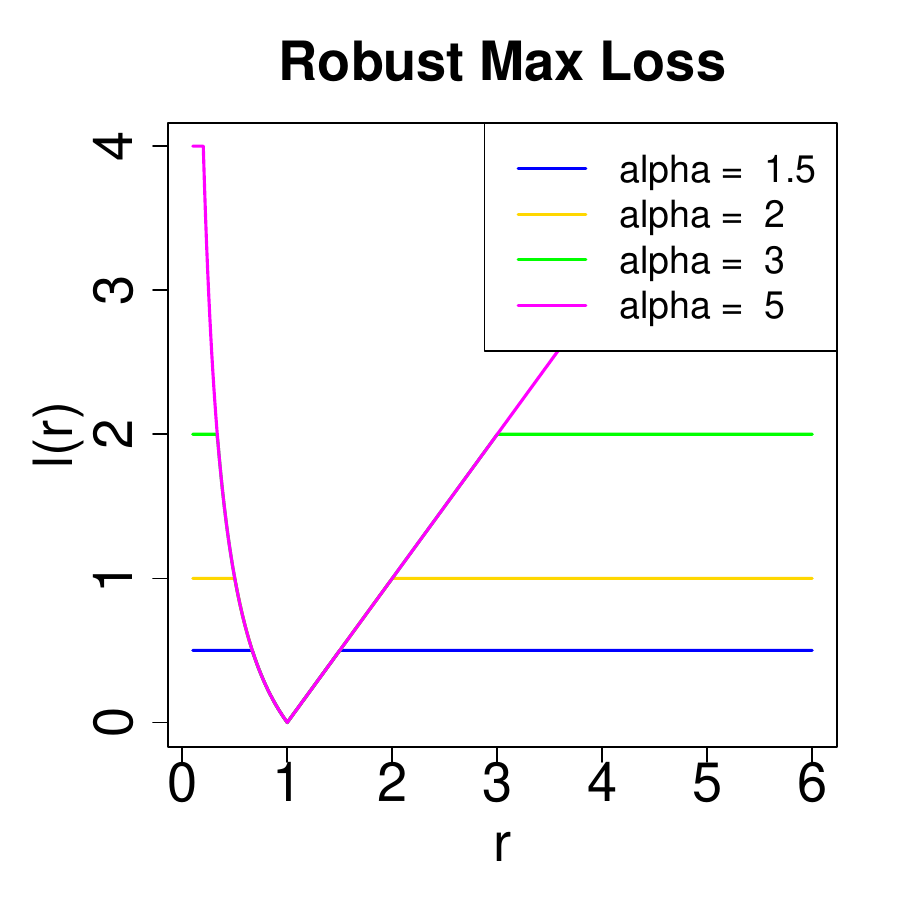}
	\end{subfigure}
	\hspace*{1cm}
	\begin{subfigure}{0.31\textwidth}
		\centering
		\includegraphics[width=\textwidth]{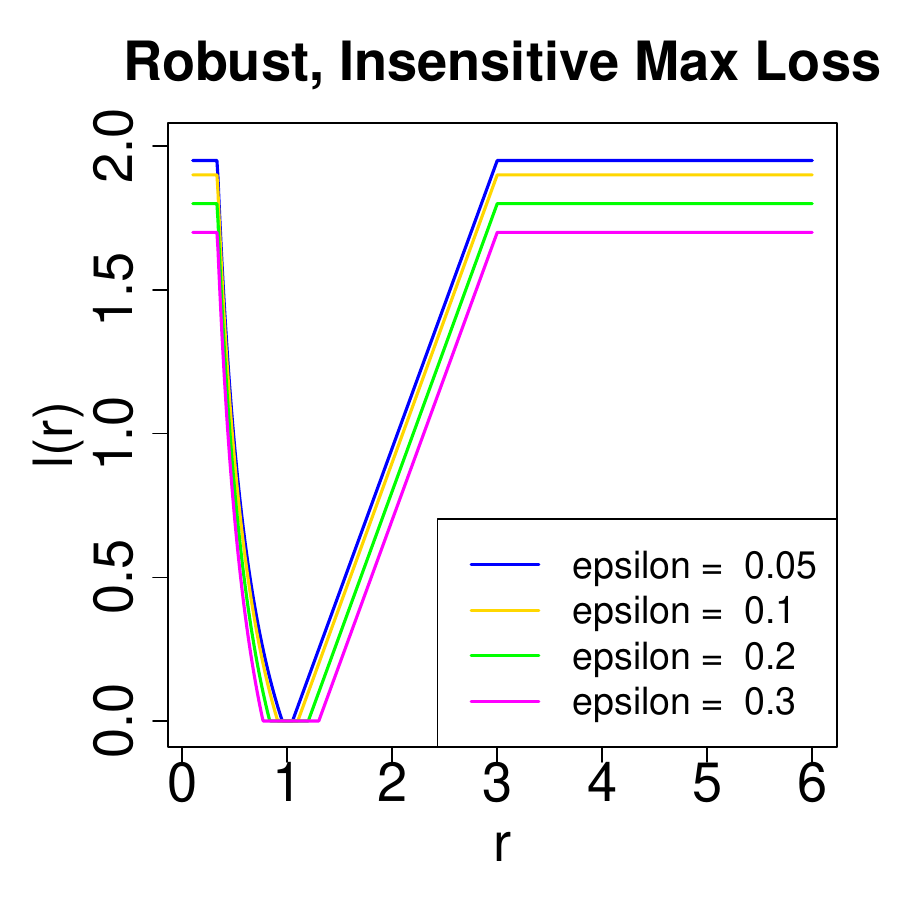}
	\end{subfigure}
	\caption{Plots of the representing functions $\ell$ of robust loss based on maximum loss function; Left: loss without insensitivity (i.e. $\varepsilon = 0$) for different choices of parameter $\alpha$; Right: robust loss functions with different insensitivity values $\varepsilon$ and parameter $\alpha = 3$}
\end{figure}
The second version is based on LPRE
\begin{equation} \label{robustinsens2}	
	\ell(r) = 
	\begin{cases}
		\max\{0, r^{-1}+r-2-\varepsilon\}, & r \in (\alpha^{-1}, \alpha), \\
		\alpha^{-1}+\alpha-2-\varepsilon, & r \not\in (\alpha^{-1}, \alpha).
	\end{cases}
\end{equation}
The parameters are required to satisfy $\alpha > 1 + \varepsilon, \varepsilon \geq 0$. 
In the sensitive setting, $\varepsilon = 0$ holds. \par
\begin{figure}[ht]
	\centering
	\begin{subfigure}{0.31\textwidth}
		\centering
		\includegraphics[width=\textwidth]{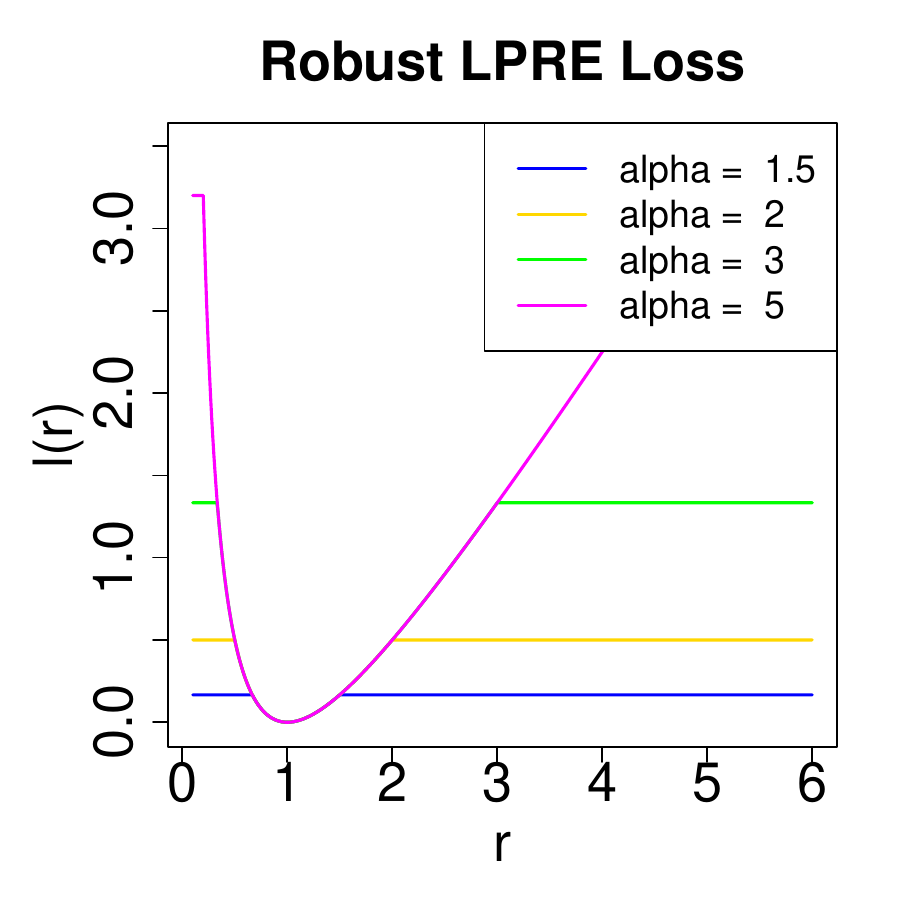}
	\end{subfigure}
	\hspace*{1cm}
	\begin{subfigure}{0.31\textwidth}
		\centering
		\includegraphics[width=\textwidth]{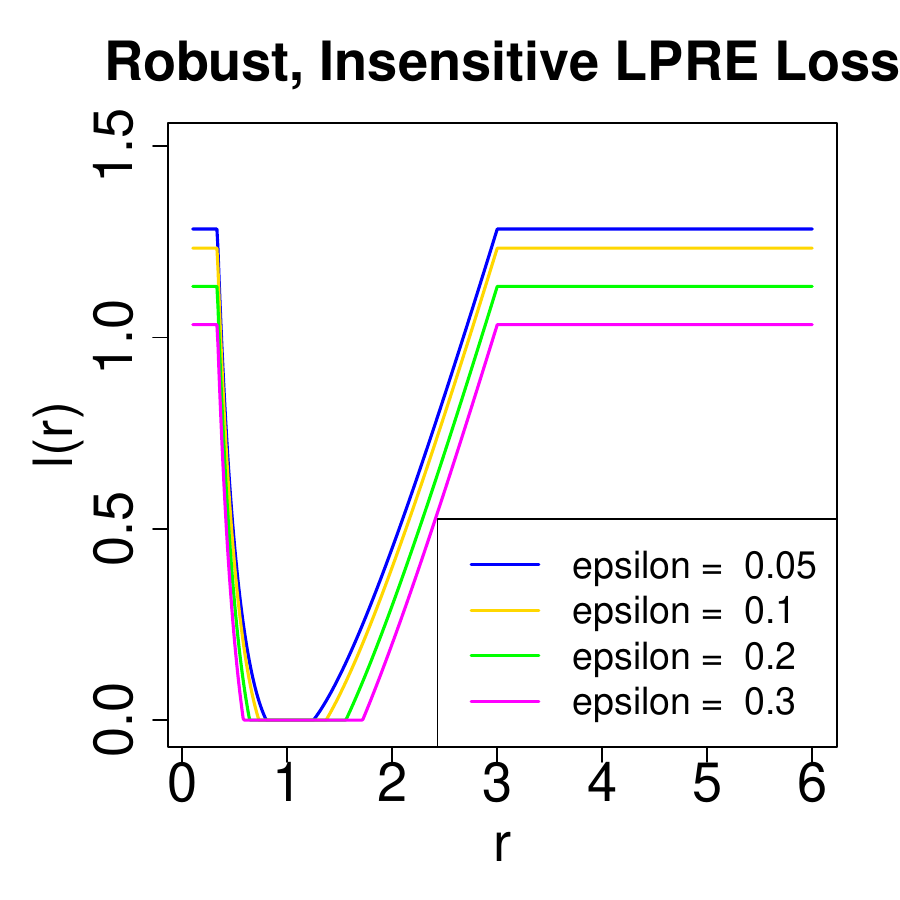}
	\end{subfigure}
	\caption{Plots of the representing functions $\ell$ of robust loss based on LPRE; Left: loss with $\varepsilon = 0$ (i.e. without insensitivity around 0) for different values of parameter $\alpha$; Right: $\ell$ for various insensitivity choices $\varepsilon$ and parameter $\alpha = 3$}
\end{figure}

\subsection{Smooth Robust Relative Loss Functions}
Instead of clipping the loss function at some point, \cite{FuEtAl2024} give a general framework to (smoothly) flatten general unbounded loss functions.
Taking some of the previous unbounded functions $\ell$, one can flatten these by using
\begin{equation*}
	\widehat{\ell}(r) := \frac{1}{\lambda} \biggl(1-\frac{1}{1+b\ell(r)}\biggr)
\end{equation*}
for some $\lambda, b > 0$.
Most properties still hold after this transformation, i.e. if $\ell$ is ratio-symmetric, continuous, differentiable or Lipschitz continuous (either locally or globally), $\widehat{\ell}$ is as well.
Note that convexity of $\ell$ does in general not transfer to $\widehat{\ell}$.
Using an unbounded loss $\ell$, $\lim_{r \to 0, r > 0} \ell(r) = \infty = \lim_{r \to \infty}\ell(r)$ holds.
However, $\widehat{\ell}$ is bounded as
\begin{equation*}
	\lim_{r \to 0, r > 0} \widehat{\ell}(r) = \frac{1}{\lambda} = \lim_{r \to \infty} \widehat{\ell}(r).
\end{equation*}
As \eqref{logcoshlogFehler} provides some nice properties (see Tables \ref{TablePropertiesl} and \ref{EigenschaftenL}, respectively), we modify this one to keep most of them.  
We get
\begin{equation} \label{robustlogcoshlogFehler}
	\ell(r) = \frac{1}{\lambda}\biggl(1 - \frac{1}{1+b\log(\cosh(\log(r)))}\biggr)
\end{equation}
with parameters $\lambda, b > 0$.
Here, $\lambda$ can reduce the impact of noise whereas $b$ pays attention to usual points and especially controls the loss function's compactness and growth (see \cite{FuEtAl2024}).
Indeed, \cite{FuEtAl2024} not only used some parameter $b$ but a nonnegative function $b(r)$. 
One could think about expanding the examples to nonnegative functions $b(r)$ for future research.
Since we have used differentiable log-cosh-log loss function before, \cite{FuEtAl2024}'s robust version was already smooth.\par
\begin{figure}[ht]
	\centering
	\begin{subfigure}{0.31\textwidth}
		\centering
		\includegraphics[width=\textwidth]{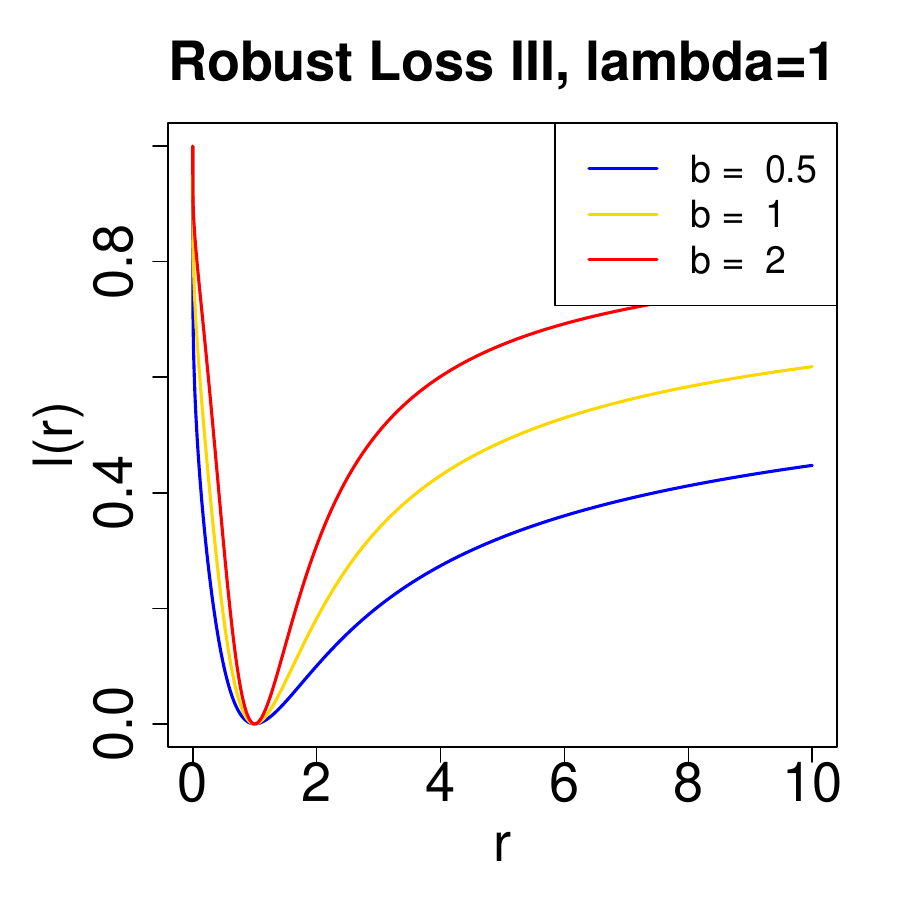}
	\end{subfigure}
	\hspace*{1cm}
	\begin{subfigure}{0.31\textwidth}
		\centering
		\includegraphics[width=\textwidth]{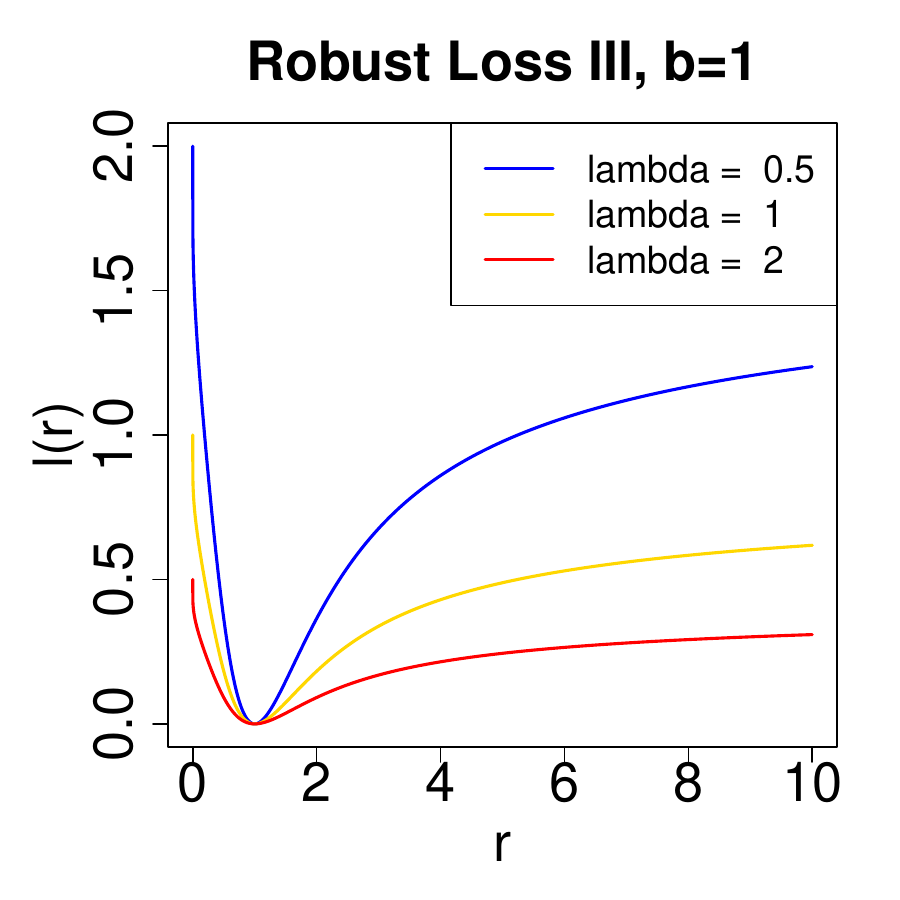}
	\end{subfigure}
	\caption{Plots of the representing functions $\ell$ of \cite{FuEtAl2024}'s robust loss functions based on log-cosh-log loss for certain choices of parameters $\lambda$ and $b$}
\end{figure}
Now, we extend the idea of Hampel's piecewise defined loss function (\cite[Chapter 2.6: Ex. 1]{HampelEtAl1986}) to smoothed versions of (smooth) LARE by adding some part(s) before cutting the loss. 
A perfect transfer of Hampel's loss (\cite[Chapter 2.6: Ex. 1]{HampelEtAl1986}) can be achieved when the limits $\lim_{r \to 0, r>0} \ell(r)$ and $\lim_{r \to \infty} \ell(r)$ agree.
This can be done in the following manner. 
We requested continuity and continuous differentiability. 
As far as we know, the following loss functions have not been proposed in the literature. 
Like Hampel's loss function, the first version deals with three parameters $1 < \alpha < \beta < \gamma$.
\begin{equation} \label{smoothrobustFehler4}
	\ell(r) = 
	\begin{cases}
		\frac{(\alpha^2-1)\beta\gamma}{\alpha(\beta\gamma+1)(\beta - \gamma)} \Bigl((\beta - \frac{1}{\beta})^2-(\gamma-\frac{1}{\gamma})^2\Bigr) - (\alpha - \frac{1}{\alpha})^2, & r \leq \frac{1}{\gamma},  \\[6pt]
		\frac{(\alpha^2-1)\beta\gamma}{\alpha(\beta\gamma+1)(\beta - \gamma)}\biggl(\!\!\Bigl(\!(\frac{1}{r}-r)-(\gamma-\frac{1}{\gamma})\!\Bigr)^2 + (\beta - \frac{1}{\beta})^2-(\gamma-\frac{1}{\gamma})^2\!\biggr)- (\alpha - \frac{1}{\alpha})^2, & \frac{1}{\gamma} < r \leq \frac{1}{\beta}, \\[6pt]
		2\frac{\alpha^2-1}{\alpha}(\frac{1}{r}-r) - (\alpha-\frac{1}{\alpha})^2, &\frac{1}{\beta} < r \leq \frac{1}{\alpha}, \\[6pt]
		(r-\frac{1}{r})^2, & \frac{1}{\alpha} < r < \alpha, \\[6pt]
		2\frac{\alpha^2-1}{\alpha}(r-\frac{1}{r}) - (\alpha-\frac{1}{\alpha})^2, & \alpha \leq r < \beta, \\[6pt]
		\frac{(\alpha^2-1)\beta\gamma}{\alpha(\beta\gamma+1)(\beta - \gamma)}\biggl(\!\!\Bigl(\!(r - \frac{1}{r})-(\gamma-\frac{1}{\gamma})\Bigr)^2 + (\beta - \frac{1}{\beta})^2-(\gamma-\frac{1}{\gamma})^2 \!\biggr) - (\alpha - \frac{1}{\alpha})^2, & \beta \leq r < \gamma, \\[6pt]
		\frac{(\alpha^2-1)\beta\gamma}{\alpha(\beta\gamma+1)(\beta - \gamma)} \Bigl((\beta - \frac{1}{\beta})^2-(\gamma-\frac{1}{\gamma})^2\Bigr) - (\alpha - \frac{1}{\alpha})^2, & \gamma \leq r.
	\end{cases}
\end{equation}
A simpler version reduces the number of parameters to two $1 < \alpha < \beta$ keeping ratio-symmetry and smooth robustness.
\begin{equation} \label{smoothrobustFehler5}
	\ell(r) = 
	\begin{cases}
		\frac{\alpha^2-1}{(\alpha-\beta)(\alpha \beta +1)} (\beta^2-1)((\alpha-\alpha^{-1})-(\beta - \beta^{-1})), & r \leq \beta^{-1}, \\
		\frac{\alpha^2-1}{(\alpha-\beta)(\alpha \beta +1)} \biggl(\beta(r-r^{-1})^2+2(\beta^2-1)(r-r^{-1})+\frac{(\beta^2-1)(\alpha^2-1)}{\alpha}\biggr), & \beta^{-1} < r \leq \alpha^{-1}, \\
		(r-r^{-1})^2, & \alpha^{-1} < r < \alpha, \\
		\frac{\alpha^2-1}{(\alpha-\beta)(\alpha \beta +1)} \biggl(\beta (r-r^{-1})^2-2(\beta^2-1)(r-r^{-1})+\frac{(\beta^2-1)(\alpha^2-1)}{\alpha}\biggr), & \alpha \leq r < \beta, \\
		\frac{\alpha^2-1}{(\alpha-\beta)(\alpha \beta +1)} (\beta^2-1)((\alpha-\alpha^{-1})-(\beta-\beta^{-1})), & \beta \leq r.
	\end{cases}
\end{equation}
\begin{figure}[ht]
	\centering
	\begin{subfigure}{0.31\textwidth}
		\centering
		\includegraphics[width=\textwidth]{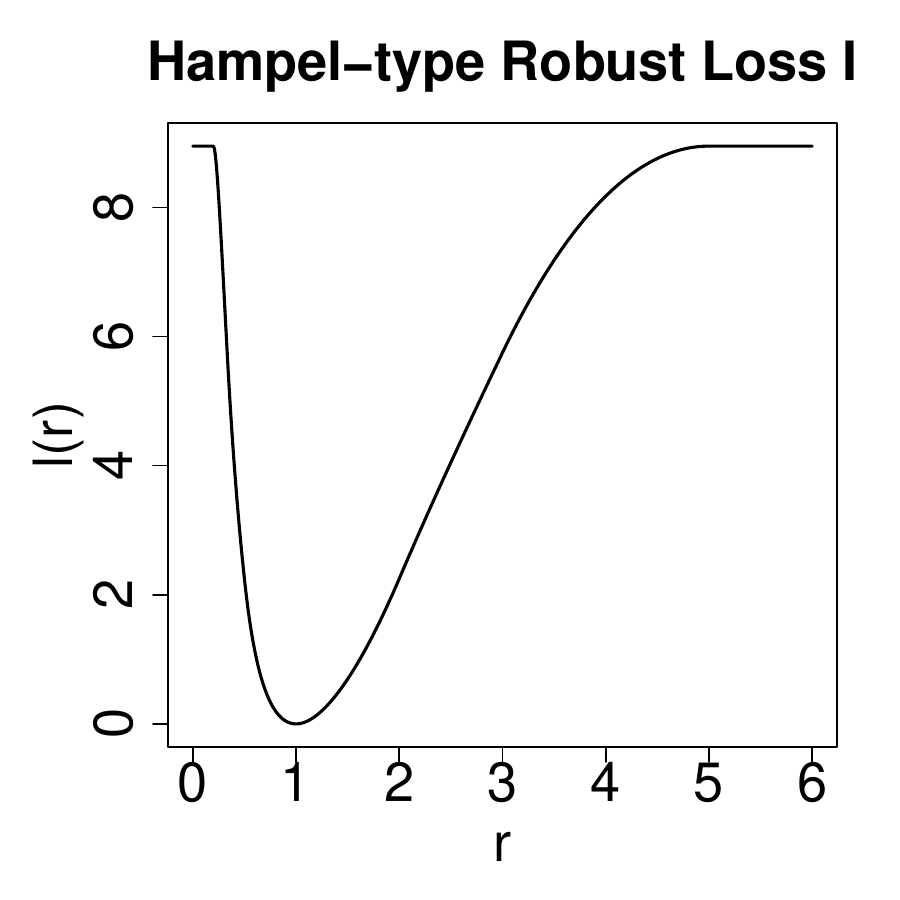}
	\end{subfigure}
	\hspace*{1cm}
	\begin{subfigure}{0.31\textwidth}
		\centering
		\includegraphics[width=\textwidth]{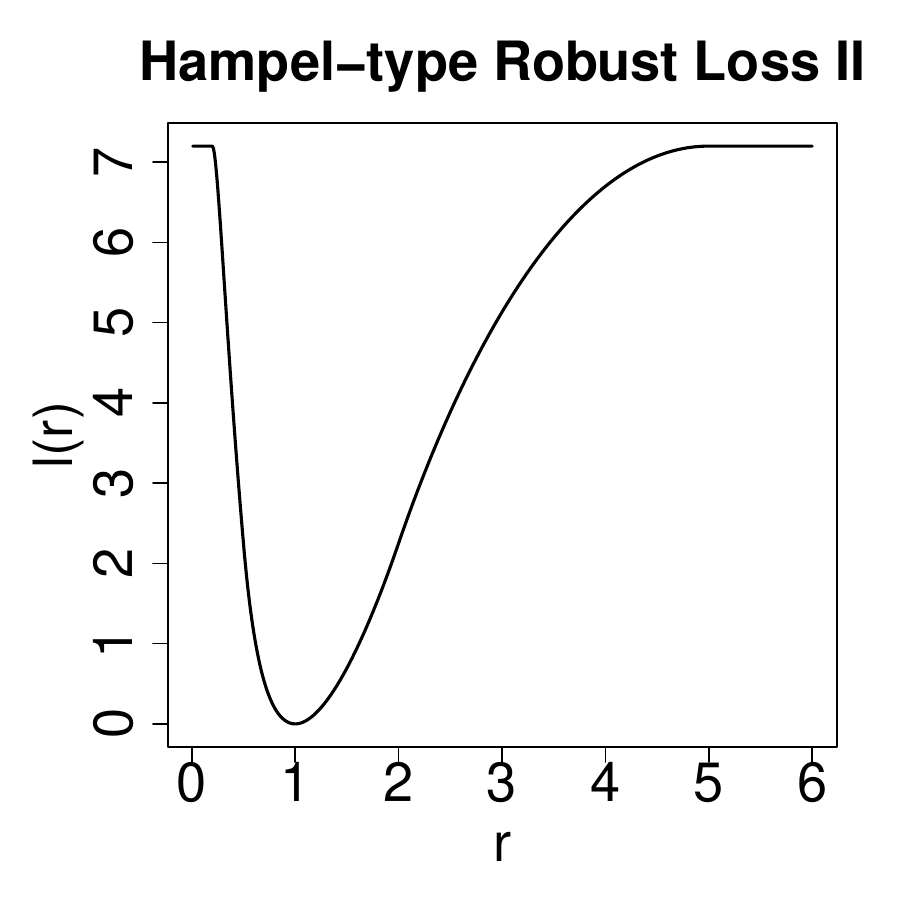}
	\end{subfigure}
	\caption{Plots of the representing functions $\ell$ for Hampel-type loss function; Left: $\ell$ from \eqref{smoothrobustFehler4} with three parameters $\alpha = 2, \beta = 3$, and $\gamma = 5$; Right: $\ell$ from \eqref{smoothrobustFehler5} for two parameters $\alpha = 2$ and $\beta = 5$}
\end{figure}

\subsection{Weighted Relative Loss}
Finally, let us once again focus on rb loss functions that do not require ratio-symmetry, i.e. on rb loss function that can be used if
overestimation and underestimation by the same factor are considered as harmful in a different manner.
In distance-based setting, for example, LINEX (\cite{ChangHung2007}) and BLINEX (\cite{TangEtAl2021}) loss as well as the classical $\tau$-pinball loss function, where $\tau\in(0,1)$ (see e.g. 
\cite{KoenkerBassett1978,Koenker2005, SteinwartChristmann2011} and 
\cite[Chapter III.N]{JadonEtAl2022}), are used when asymmetry is favorable. 
Here, we want to achieve asymmetry only by a parameter $\tau > 0$ like in the distance-based pinball loss.
In this case, $\tau < 1$ focuses on underestimation, whereas $\tau > 1$ highlights overestimation. 
\begin{align} 
	\ell(r) &= 
	\begin{cases} \label{gewFehler1}
		\tau^{-1} (\max\{r, r^{-1}\}-1), & r < 1, \\
		\tau (\max\{r, r^{-1}\}-1), & r \geq 1.
	\end{cases} \\
	\ell(r) &=
	\begin{cases} \label{gewFehler2}
		\tau^{-1} (r+r^{-1}-2), & r < 1, \\
		\tau (r+r^{-1}-2), & r \geq 1.
	\end{cases} \\
	\ell(r) &= 
	\begin{cases} \label{gewFehler3}
		\tau^{-1}(r-r^{-1})^2, & r < 1, \\
		\tau (r-r^{-1})^2, & r \geq 1.
	\end{cases}
\end{align}
\begin{figure}[ht]
	\centering
	\begin{subfigure}{0.31\textwidth}
		\centering
		\includegraphics[width=\textwidth]{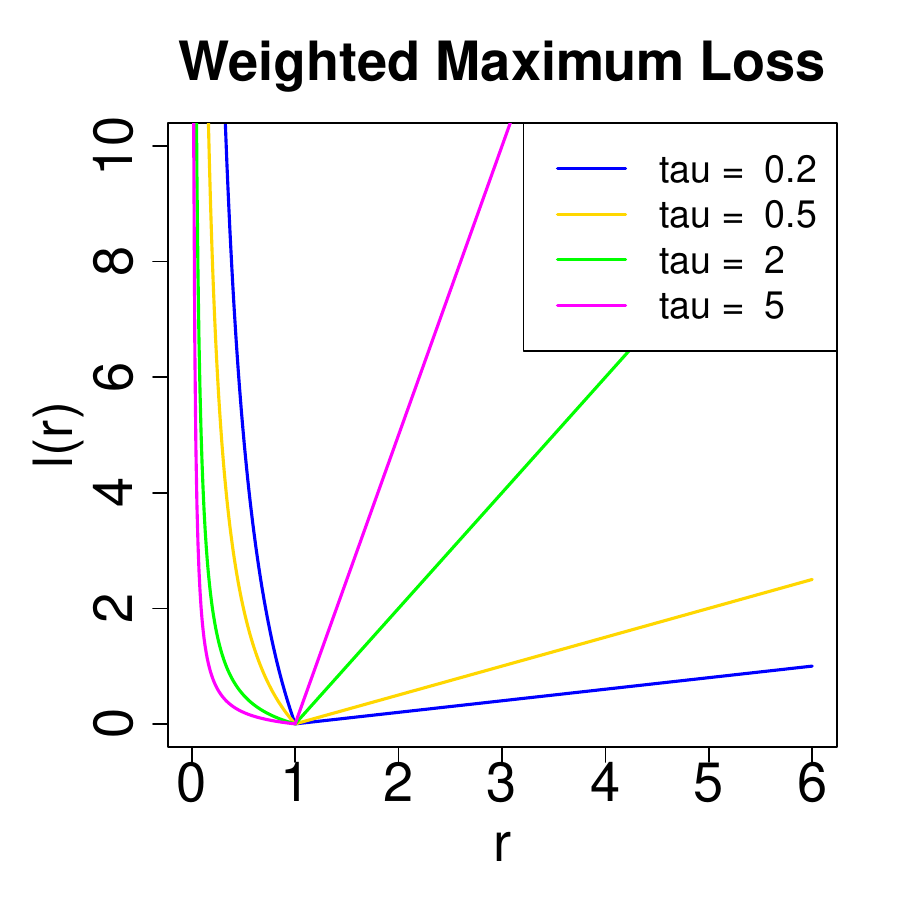}
	\end{subfigure}
	\hfill
	\begin{subfigure}{0.31\textwidth}
		\centering
		\includegraphics[width=\textwidth]{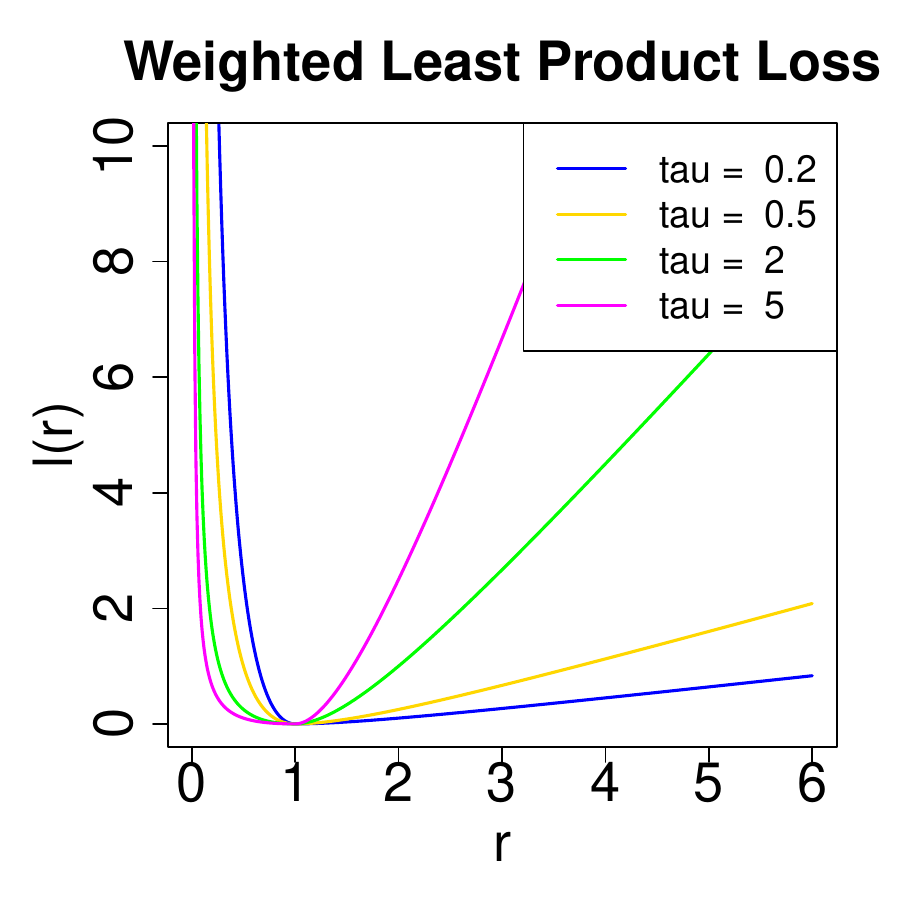}
	\end{subfigure}
	\hfill
	\begin{subfigure}{0.31\textwidth}
		\centering
		\includegraphics[width=\textwidth]{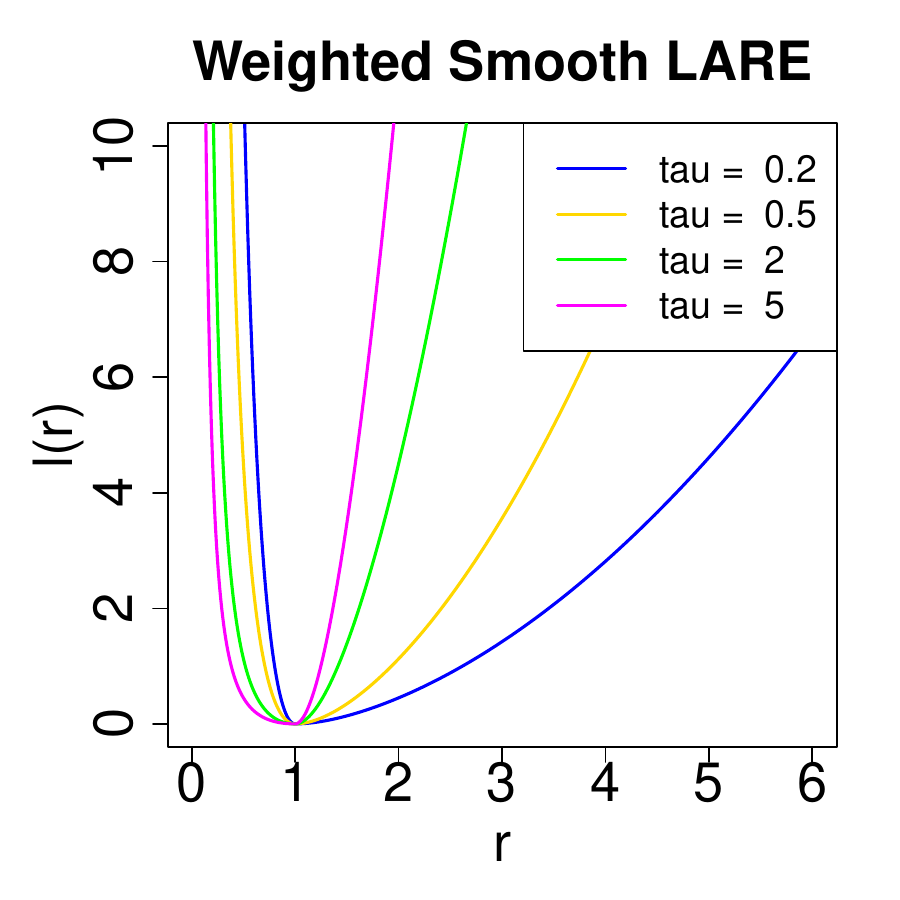}
	\end{subfigure}
	\caption{Plots of the representing functions $\ell$ of weighted loss functions for various weights $\tau$}
\end{figure}

\subsection{Overview} \label{overview_examples}
Table \ref{TablePropertiesl} gives some properties of the representing functions $\ell$ corresponding to ratio-based loss functions which we considered in this section.
\begin{longtable}[c]{|c||c|c|c|c|c|c|}
	\caption{Properties of ratio-based representation functions $\ell$. The symbol $"\checkmark"$ indicates that $\ell$ has this property, whereas the symbol $"\xmark"$ indicates that $\ell$ does not have this property for at least some choices of the parameter (if existing).} \label{TablePropertiesl} \\
	
	&&&&\multicolumn{2}{c|}{Lipschitz}&\\
	$\ell$ & ratio-symmetry & convex & continuous & locally & globally & differentiable \\
	\hline
	\endfirsthead
	&&&&\multicolumn{2}{c|}{Lipschitz}&\\
	$\ell$ & ratio-symmetry & convex & continuous & locally & globally & differentiable \\
	\hline
	\endhead
	\eqref{log(1+r)Loss} & \checkmark & \xmark & \checkmark & \checkmark & \xmark & \checkmark\\
	\eqref{bsp_3_Loss} & \checkmark & \xmark & \checkmark & \checkmark & \xmark & \checkmark\\
	\eqref{logquadrFehler} & \checkmark & \xmark & \checkmark & \checkmark & \xmark & \checkmark  \\
	\eqref{logabsFehler} & \checkmark & \xmark & \checkmark & \checkmark & \xmark & \xmark \\
	\eqref{mixedlogFehler} & \checkmark &\xmark &\checkmark &\checkmark &\xmark &\checkmark \\
	\eqref{logcoshFehler} & \xmark & \checkmark & \checkmark & \checkmark & \checkmark & \checkmark \\
	\eqref{coshlogFehler} & \checkmark & \checkmark & \checkmark & \checkmark & \xmark & \checkmark \\
	\eqref{logcoshlogFehler} & \checkmark & \xmark & \checkmark & \checkmark & \xmark & \checkmark \\
	\eqref{maxFehler}  & \checkmark &\checkmark & \checkmark& \checkmark & \xmark & \xmark \\
	\eqref{maxlogFehler} & \xmark & \xmark & \checkmark & \checkmark & \xmark & \xmark \\	
	\eqref{absFehler}  & \xmark &\checkmark& \checkmark& \checkmark & \checkmark & \xmark \\
	\eqref{quadrFehler} & \xmark & \checkmark & \checkmark & \checkmark & \xmark & \checkmark \\
	\eqref{mixedrelFehler} & \xmark & \checkmark & \checkmark & \checkmark & \checkmark & \checkmark \\
	\eqref{invabsFehler} & \xmark & \xmark & \checkmark & \checkmark & \xmark & \xmark \\
	\eqref{invsqFehler} & \xmark & \xmark & \checkmark & \checkmark & \xmark & \checkmark \\
	\eqref{mixedinvFehler} & \xmark & \xmark & \checkmark & \checkmark & \xmark & \checkmark \\
	\eqref{leastabsFehler} & \checkmark & \xmark & \checkmark & \checkmark & \xmark & \xmark \\
	\eqref{smoothleastabsFehler} & \checkmark &\checkmark & \checkmark & \checkmark & \xmark & \checkmark \\
	\eqref{mixedleastrelFehler} & \checkmark & \xmark & \checkmark & \checkmark & \xmark & \checkmark \\
	\eqref{leastprodFehler}  & \checkmark &\checkmark & \checkmark & \checkmark & \xmark & \checkmark \\
	\eqref{generalsqFehler} & \checkmark & \checkmark & \checkmark &\checkmark &\xmark & \checkmark \\
	\eqref{generalsqrtsqFehler} & \checkmark & \xmark & \checkmark &\checkmark & \xmark & \xmark \\
	\eqref{generalsqrtFehler} & \checkmark & \xmark & \checkmark &\xmark &\xmark & \xmark \\
	\eqref{generalexpFehler} & \xmark & \xmark & \checkmark & \checkmark & \xmark & \xmark \\
	\eqref{insensitive1}  & \checkmark &\checkmark & \checkmark& \checkmark& \xmark & \xmark \\
	\eqref{insensitive2}  & \checkmark &\checkmark&\checkmark &\checkmark & \xmark&\xmark \\
	\eqref{robustinsens1} & \checkmark &\xmark & \checkmark &\checkmark & \checkmark & \xmark \\
	\eqref{robustinsens2} & \checkmark &\xmark&\checkmark&\checkmark&\checkmark&\xmark\\
	\eqref{robustlogcoshlogFehler} & \checkmark & \xmark & \checkmark & \checkmark & \xmark & \checkmark \\
	\eqref{smoothrobustFehler4} & \checkmark & \xmark & \checkmark & \checkmark & \checkmark & \checkmark \\
	\eqref{smoothrobustFehler5} & \checkmark & \xmark & \checkmark & \checkmark & \checkmark & \checkmark \\
	\eqref{gewFehler1} & \xmark & \checkmark & \checkmark & \checkmark & \xmark & \xmark \\
	\eqref{gewFehler2} & \xmark & \checkmark & \checkmark & \checkmark & \xmark & \checkmark \\
	\eqref{gewFehler3} & \xmark & \checkmark & \checkmark & \checkmark & \xmark & \checkmark \\
\end{longtable}	
Please note, that margin-based or distance-based loss functions which are convex and simultaneously Lipschitz continuous can be interesting in the sense that they often yield existence \emph{and} uniqueness \emph{and} good statistical robustness properties of certain machine learning methods including general kernel based approaches, see e.g. \cite{ChristmannVanMessemSteinwart2009}. 
In contrast to that, many ratio-based loss functions proposed in the literature are not simultaneously convex and Lipschitz continuous. 
Note that the convexity of $\ell$ does in general not automatically transfer to the corresponding  ratio-based loss function.
Furthermore, some Huber-type and robust rb loss functions were used to combine favourable properties of several rb loss functions, especially continuous differentiability and better robustness results. 
However, the choice of parameters and a case-by-case analysis can lead to higher computation time.\par 
Combining the results from Table \ref{TablePropertiesl} with results from Chapter \ref{Kap_Eigenschaften} gives the following information about the loss function's properties  (i.e. with respect to the third variable $t$, cf. \cite[Chapter 2.2]{SC08})  when choosing a certain link function $u$.
If properties hold for both, $c = 0$ and $c > 0$, there is only one mark in the segment.
Otherwise, we split the entry with the first mark referring to assumption $c > 0$, whereas for the second mark, we assume a strictly rb loss function, i.e. $c = 0$. 
{\renewcommand{\arraystretch}{1.2}
	\begin{longtable}[c]{|c|c||c|c|c|c|c|}
		\caption{Properties of loss functions $L$ (w.r.t. the $3^{rd}$ argument $t$) depending on the chosen link function $u$ and on the representation function $\ell$. In each entry, symbols on the left hand side refer to $c>0$, while symbols on the right hand side refer to $c=0$. A single entry holds for both situations. The symbol \anf{\checkmark} indicates that $L$ has this property, whereas \anf{\xmark} indicates that $L$ does not have this property at least for some choices of the parameter (if existing).}
		\label{EigenschaftenL} \\
		
		&& \multicolumn{5}{c|}{Loss function $L$ (properties w.r.t. $t$) }\\
		$\ell$ & $u(t)$ & convex & continuous & locally Lipschitz & globally Lipschitz & differentiable \\
		\hline
		\endfirsthead
		&& \multicolumn{5}{c|}{Loss function $L$ (properties w.r.t. $t$) }\\
		$\ell$ & $u(t)$ & convex & continuous & locally Lipschitz & globally Lipschitz & differentiable \\
		\hline
		\endhead
		\eqref{log(1+r)Loss} & $\exp(t)$ & \xmark / \checkmark & \checkmark & \checkmark & \checkmark & \checkmark \\
		\eqref{log(1+r)Loss} & $\frac{1}{1+\exp(-t)}$ & \xmark & \checkmark & \checkmark & \checkmark & \checkmark \\
		\hline
		\eqref{bsp_3_Loss} & $\exp(t)$ & \xmark / \checkmark & \checkmark & \checkmark & \checkmark & \checkmark\\
		\eqref{bsp_3_Loss} & $\frac{1}{1+\exp(-t)}$ & \xmark & \checkmark & \checkmark  & \checkmark  & \checkmark  \\
		\hline
		\eqref{logquadrFehler} & $\exp(t)$ & \xmark / \checkmark & \checkmark & \xmark & \xmark & \checkmark \\
		\eqref{logquadrFehler} & $\frac{1}{1+\exp(-t)}$ & \xmark & \checkmark & \checkmark / \xmark & \checkmark / \xmark & \checkmark\\
		\hline
		\eqref{logabsFehler} & $\exp(t)$ & \xmark / \checkmark & \checkmark & \checkmark & \checkmark & \xmark\\
		\eqref{logabsFehler} & $\frac{1}{1+\exp(-t)}$ &\xmark &\checkmark& \checkmark & \checkmark & \xmark \\
		\hline
		\eqref{mixedlogFehler} & $\exp(t)$ & \xmark / \checkmark & \checkmark& \checkmark & \checkmark & \checkmark\\
		\eqref{mixedlogFehler}& $\frac{1}{1+\exp(-t)}$ &\xmark &\checkmark & \checkmark & \checkmark & \checkmark\\
		\hline
		\eqref{logcoshFehler} & $\exp(t)$ & \xmark &\checkmark & \checkmark / \xmark & \xmark & \checkmark \\
		\eqref{logcoshFehler} & $\frac{1}{1+\exp(-t)}$ & \xmark & \checkmark & \checkmark / \xmark & \checkmark / \xmark & \checkmark \\
		\hline
		\eqref{coshlogFehler} & $\exp(t)$ & \xmark / \checkmark & \checkmark & \xmark & \xmark & \checkmark \\
		\eqref{coshlogFehler} &$\frac{1}{1+\exp(-t)}$ & \xmark & \checkmark & \checkmark / \xmark & \checkmark / \xmark & \checkmark\\
		\hline
		\eqref{logcoshlogFehler} & $\exp(t)$ & \xmark / \checkmark & \checkmark & \checkmark & \checkmark & \checkmark \\
		\eqref{logcoshlogFehler} & $\frac{1}{1+\exp(-t)}$ &\xmark & \checkmark & \checkmark & \checkmark & \checkmark\\
		\hline
		\eqref{maxFehler}  & $\exp(t)$ & \xmark / \checkmark & \checkmark& \xmark & \xmark & \xmark \\
		\eqref{maxFehler} &$\frac{1}{1+\exp(-t)}$ & \xmark & \checkmark & \checkmark / \xmark & \checkmark / \xmark & \xmark \\
		\hline
		\eqref{maxlogFehler} & $\exp(t)$ & \xmark / \checkmark & \checkmark & \checkmark & \checkmark & \xmark \\
		\eqref{maxlogFehler} & $\frac{1}{1+\exp(-t)}$ & \xmark & \checkmark & \checkmark & \checkmark & \xmark \\			
		\hline
		\eqref{absFehler}  & $\exp(t)$ &\xmark & \checkmark & \checkmark / \xmark & \xmark & \xmark \\
		\eqref{absFehler} &$\frac{1}{1+\exp(-t)}$ &\xmark & \checkmark & \checkmark / \xmark & \checkmark / \xmark & \xmark \\
		\hline
		\eqref{quadrFehler} & $\exp(t)$ &\xmark & \checkmark & \checkmark / \xmark & \xmark & \checkmark \\
		\eqref{quadrFehler} &$\frac{1}{1+\exp(-t)}$ &\xmark & \checkmark & \checkmark / \xmark & \checkmark / \xmark & \checkmark\\
		\hline
		\eqref{mixedrelFehler} & $\exp(t)$ & \xmark & \checkmark & \checkmark / \xmark & \xmark & \checkmark \\
		\eqref{mixedrelFehler} & $\frac{1}{1+\exp(-t)}$ &\xmark & \checkmark & \checkmark / \xmark & \checkmark / \xmark & \checkmark\\
		\hline
		\eqref{invabsFehler} & $\exp(t)$ & \xmark & \checkmark & \xmark & \xmark & \xmark \\
		\eqref{invabsFehler} & $\frac{1}{1+\exp(-t)}$ & \xmark & \checkmark & \checkmark & \checkmark / \xmark & \xmark \\
		\hline
		\eqref{invsqFehler} & $\exp(t)$ &\xmark & \checkmark& \xmark & \xmark &  \checkmark\\
		\eqref{invsqFehler} & $\frac{1}{1+\exp(-t)}$ &\xmark & \checkmark & \checkmark & \checkmark / \xmark & \checkmark\\
		\hline
		\eqref{mixedinvFehler} & $\exp(t)$ &\xmark & \checkmark & \xmark & \xmark &  \checkmark\\
		\eqref{mixedinvFehler} & $\frac{1}{1+\exp(-t)}$ &\xmark & \checkmark & \checkmark & \checkmark / \xmark & \checkmark\\
		\hline
		\eqref{leastabsFehler} & $\exp(t)$ & \xmark / \checkmark & \checkmark & \xmark & \xmark & \xmark \\
		\eqref{leastabsFehler} & $\frac{1}{1+\exp(-t)}$ & \xmark & \checkmark & \checkmark / \xmark & \checkmark / \xmark & \xmark \\
		\hline
		\eqref{smoothleastabsFehler} & $\exp(t)$ & \xmark / \checkmark & \checkmark & \xmark & \xmark & \checkmark \\
		\eqref{smoothleastabsFehler} & $\frac{1}{1+\exp(-t)}$ & \xmark & \checkmark & \checkmark / \xmark & \checkmark / \xmark & \checkmark\\
		\hline
		\eqref{mixedleastrelFehler} & $\exp(t)$ & \xmark / \checkmark & \checkmark & \xmark & \xmark & \checkmark\\
		\eqref{mixedleastrelFehler} &$\frac{1}{1+\exp(-t)}$ & \xmark & \checkmark & \checkmark / \xmark & \checkmark / \xmark & \checkmark\\
		\hline
		\eqref{leastprodFehler} & $\exp(t)$ & \xmark / \checkmark & \checkmark & \xmark & \xmark &  \checkmark \\
		\eqref{leastprodFehler}  &$\frac{1}{1+\exp(-t)}$ & \xmark &\checkmark & \checkmark / \xmark & \checkmark / \xmark & \checkmark\\
		\hline 
		\eqref{generalsqFehler} & $\exp(t)$ & \xmark / \checkmark & \checkmark & \xmark & \xmark & \checkmark \\
		\eqref{generalsqFehler} &$\frac{1}{1+\exp(-t)}$ & \xmark & \checkmark & \checkmark / \xmark & \checkmark / \xmark & \checkmark\\
		\hline
		\eqref{generalsqrtsqFehler} & $\exp(t)$ & \xmark / \checkmark & \checkmark & \xmark & \xmark & \xmark \\
		\eqref{generalsqrtsqFehler} &$\frac{1}{1+\exp(-t)}$ & \xmark & \checkmark & \checkmark / \xmark & \checkmark / \xmark & \xmark\\
		\hline
		\eqref{generalsqrtFehler} & $\exp(t)$ & \xmark & \checkmark& \xmark & \xmark & \xmark \\
		\eqref{generalsqrtFehler} & $\frac{1}{1+\exp(-t)}$ & \xmark & \checkmark & \xmark & \xmark & \xmark \\
		\hline
		\eqref{generalexpFehler} & $\exp(t)$ & \xmark / \checkmark & \checkmark& \xmark & \xmark & \xmark \\
		\eqref{generalexpFehler} & $\frac{1}{1+\exp(-t)}$ & \xmark & \checkmark & \checkmark / \xmark & \checkmark / \xmark & \xmark \\
		\hline
		\eqref{insensitive1} & $\exp(t)$ & \xmark / \checkmark & \checkmark & \xmark & \xmark & \xmark \\
		\eqref{insensitive1}  &$\frac{1}{1+\exp(-t)}$ & \xmark & \checkmark & \checkmark / \xmark & \checkmark / \xmark & \xmark \\ 
		\hline
		\eqref{insensitive2} & $\exp(t)$ & \xmark / \checkmark & \checkmark & \xmark & \xmark & \xmark \\
		\eqref{insensitive2}  &$\frac{1}{1+\exp(-t)}$ & \xmark & \checkmark & \checkmark / \xmark & \checkmark / \xmark & \xmark \\ 
		\hline
		\eqref{robustinsens1} & $\exp(t)$ & \xmark & \checkmark & \checkmark & \checkmark & \xmark \\
		\eqref{robustinsens1} &$\frac{1}{1+\exp(-t)}$ & \xmark & \checkmark & \checkmark & \checkmark & \xmark\\
		\hline
		\eqref{robustinsens2} & $\exp(t)$ & \xmark & \checkmark & \checkmark & \checkmark & \xmark \\
		\eqref{robustinsens2} &$\frac{1}{1+\exp(-t)}$ & \xmark & \checkmark & \checkmark & \checkmark & \xmark \\
		\hline
		\eqref{robustlogcoshlogFehler} & $\exp(t)$ & \xmark & \checkmark & \checkmark & \checkmark & \checkmark \\
		\eqref{robustlogcoshlogFehler} &$\frac{1}{1+\exp(-t)}$ & \xmark & \checkmark & \checkmark & \checkmark & \checkmark \\
		\hline
		\eqref{smoothrobustFehler4} & $\exp(t)$ & \xmark & \checkmark & \checkmark & \checkmark & \checkmark \\
		\eqref{smoothrobustFehler4} &$\frac{1}{1+\exp(-t)}$ & \xmark & \checkmark & \checkmark & \checkmark & \checkmark\\
		\hline
		\eqref{smoothrobustFehler5} & $\exp(t)$ & \xmark & \checkmark & \checkmark & \checkmark & \checkmark \\
		\eqref{smoothrobustFehler5} &$\frac{1}{1+\exp(-t)}$ & \xmark & \checkmark & \checkmark & \checkmark & \checkmark \\
		\hline
		\eqref{gewFehler1} & $\exp(t)$ & \xmark / \checkmark & \checkmark & \xmark & \xmark & \xmark \\
		\eqref{gewFehler1} & $\frac{1}{1+\exp(-t)}$ & \xmark & \checkmark & \checkmark / \xmark & \checkmark / \xmark & \xmark \\
		\hline
		\eqref{gewFehler2} & $\exp(t)$ & \xmark / \checkmark & \checkmark & \xmark & \xmark & \checkmark \\
		\eqref{gewFehler2} & $\frac{1}{1+\exp(-t)}$ & \xmark & \checkmark & \checkmark / \xmark & \checkmark / \xmark & \checkmark\\
		\hline
		\eqref{gewFehler3} & $\exp(t)$ & \xmark / \checkmark & \checkmark & \xmark & \xmark & \checkmark \\
		\eqref{gewFehler3} & $\frac{1}{1+\exp(-t)}$ & \xmark & \checkmark & \checkmark / \xmark & \checkmark / \xmark & \checkmark \\
	\end{longtable}
}
Note that log-cosh-log loss and Huber-type logarithmic loss function combined with $c = 0$, $Y = (0, \infty)$, $u = \exp$ give ratio-based loss functions which fulfill all properties, i.e. convexity, Lipschitz continuity, and differentiability (cf. Proposition \ref{Prop_BspKonvexeVerlustfkt_Integral} and \ref{Prop_Lipschitz_Integral}). 
This is also true for the rb loss functions in the Examples \eqref{log(1+r)Loss} and \eqref{bsp_3_Loss}, respectively, from Chapter \ref{Kap_Eigenschaften}.
Dropping the requirement of differentiability, in this scenario ($Y = (0, \infty)$, $u = \exp$, $c = 0$), logarithmic absolute loss \eqref{logabsFehler} and logarithmic pinball loss \eqref{maxlogFehler} at least yield convex and Lipschitz continuous loss functions. \par
Roughly speaking, Table \ref{EigenschaftenL} shows that in most cases convexity and Lipschitz continuity are incompatible properties of rb loss functions as convexity often requires $c = 0$, whereas Lipschitz continuity needs $c > 0$.

\section{Connection to Distance-based Loss Functions} \label{Kap_Distance}
The question arises whether ratio-based loss functions are part of the class of distance-based loss functions or vice versa.
The constant function $L(x,y,t)=0$, for sure, is an element in both classes.
Furthermore, $L \equiv 0$ is the only \textit{constant} function, which can be both, ratio-based and distance-based, as both definitions require a certain value to be mapped to zero.
Obviously, $L \equiv 0$ can be noted in a distance- and in a ratio-based way using representing functions $\psi \equiv 0$ and $\ell\equiv 0$. 
\par 
Next, we consider a strict ratio-based loss (i.e. $c = 0$).
Because $\frac{u(t)}{y} \in (0, \infty)$, $y \in Y \subseteq (0, \infty)$, and $u: \R \to Y$ is measurable, we obviously have
\begin{equation*}
	\ell\biggl(\frac{u(t)}{y}\biggr) = \ell \biggl(\exp\Bigl(\log\Bigl(\frac{u(t)}{y}\Bigr)\Bigr)\biggr) = \ell\Bigl(\exp(\log(u(t))-\log(y))\Bigl).
\end{equation*}
Define
\begin{equation*}
	\psi: \R \to [0, \infty), \; \psi := \ell \circ \exp.
\end{equation*}
For this function, $\psi(0) = 0$ is satisfied. 
Therefore, $\psi$ can serve as a representing function of a dis\-tance\--based loss.
Together with $\tilde{y} := -\log(y)$ and $\tilde{t} := -\log(u(t))$,
\begin{equation*}
	\ell\biggl(\frac{u(t)}{y}\biggr) = \ell\Bigl(\exp(\log(u(t))-\log(y))\Bigr) = \psi(\tilde{y}-\tilde{t}).
\end{equation*}
This also holds for $c > 0$ using transformations $\tilde{y} := -\log(y+c)$ and $\tilde{t} := -\log(u(t)+c)$:
\begin{equation*}
	\ell\biggl(\frac{u(t)+c}{y+c}\biggr) = \psi(\tilde{y}-\tilde{t}).
\end{equation*}
Conversely, starting with a distance-based loss, whose representing function is $\psi: \R \to [0, \infty), \psi(0) = 0$, one can write
\begin{equation*}
	\psi(y-t) = \psi(\log(\exp(y-t))) = \psi\biggl(\log\Bigl(\frac{\exp(y)}{\exp(t)}\Bigr)\biggr).
\end{equation*}
Additionally, we define 
\begin{equation*}
	\ell: (0, \infty) \to [0, \infty), \; \ell := \psi \circ \log,
\end{equation*}
fulfilling $\ell(1) = 0$.
Hence, $\psi$ is a representing function of a distance-based loss.
Therefore, $\tilde{y} := \exp(-y)$ and $\tilde{t} := \exp(-t)$ in $(0, \infty)$ satisfy
\begin{equation*}
	\psi(y-t) = \psi\biggl(\log\biggl(\frac{\exp(y)}{\exp(t)}\biggr)\biggr) = \ell\biggl(\frac{\exp(-t)}{\exp(-y)}\biggr) = \ell\biggl(\frac{\tilde{t}}{\tilde{y}}\biggr).
\end{equation*}
Even though, we can find an obvious connection between ratio-based and distance-based loss functions, one can in general \emph{not} write a ratio-based loss as a distance-based loss in its \emph{original} arguments $(x,y,t)$, but only in its transformed arguments and vice versa. 
Additionally, as  \cite[Chapter 2]{ChenEtAl2010LARE} emphasized, \anf{a transformation is a reasonable choice in some cases due to its theoretical simplicity. However, a linear relationship in the transformed model is not linear in the original one. And one need[s] to transform the analysis results back to the original measurement scale}.
Recalling the medical example from the introduction, a 2 kg deviation does not give any hint about the ratio of true and predicted weight.
Remember the comparison between adult and child here.
The same holds true if it is only known that 80\% of the true weight was captured by the prediction. 
Then, one can conclude neither the true weight nor how far the prediction derives from the true value. \par
In the first chapter, we also saw that distance-based loss function cannot handle a ratio-based approach.
Conversely, assuming a ratio-based loss function which fulfills the distance-based property $\ell(\frac{\hat{y}}{y}) = \ell(\frac{\hat{y}+\lambda}{y+\lambda})$ for all $\hat{y}, y \in Y$ and $\lambda \geq 0$, i.e. under- or overestimation by a certain constant is penalized equally, implies a constant function.  \par 
Lastly, we also highlight these differences with some examples. 
For this, at first, consider a systematic error of the predictions $\hat{y}_i$ for all $i \in \{1, \dots, n\}$, which estimates the $\xi$-th of the observed value ($\xi > 0, \xi \neq 1$), i.e. $\hat{y}_i = \xi y_i$.
Then, $y_i-\hat{y}_i = (1-\xi)y_i$ for all $i \in \{1, \dots, n\}$.
Hence, $\psi(y_i-\hat{y}_i)$ does depend on $y_i$ in distance-based setting, whereas $\ell\Bigl(\frac{\xi y_i}{y_i}\Bigr) = \ell(\xi)$ in (strict) ratio-based setting is constant for all $i \in \{1, \dots, n\}$. 
Second, we think of the systematic error $\hat{y}_i = y_i + \xi$ for $\xi \in \R, \xi \neq 0$.
In this case, the prediction either over- or underestimates the true value by a certain amount.
Therefore, $\psi(y_i - \hat{y}_i) = \psi(\xi)$  is constant, but the ratio-based expression $\ell\Bigl(\frac{y_i+\xi}{y_i}\Bigr) = \ell\Bigl(1 + \frac{\xi}{y_i}\Bigr)$ is not.\par
From this short comparison of ratio-based loss functions and distance-based loss functions, it is clear that \emph{both} classes of loss functions are interesting and that the real-life application should be taken into account to decide which type of loss function is more appropriate.

\section{Alternative Definition} \label{Kap_Alternative}
One can ask why we defined ratio-based losses in the proposed way.
Instead of \eqref{ratiobased} we could have also considered
\begin{equation} \label{ratiobased_inverse}
	L(x,y,t) := \ell\biggl(\frac{y+c}{u(t)+c}\biggr), \qquad (x,y,t) \in \XY \times \R, 
\end{equation}
using the inverse quotient. 
In this short section, we argue that in most cases our slightly different notion of an rb loss function \eqref{ratiobased} yields the same properties, but sometimes the results need slightly weaker assumptions and the proofs are shorter. Of course, this does not matter if the rb loss function is based on a representation function $\ell(r)=\widetilde{\ell}(r)+\widetilde{\ell}(r^{-1})-2\widetilde{\ell}(1)$, see
Proposition \ref{Prop_konvexeVerlustfkt_tildeEll}. \par 
If not otherwise mentioned, the loss function $L$ will be defined \emph{in this section only} via {(\ref{ratiobased_inverse})}. \par
\begin{lemma} \label{altLstetig}
	For a loss function $L$ defined through \eqref{ratiobased_inverse}, let $\ell$ and $u$ be continuous. Then $L$ is a continuous loss function.
\end{lemma}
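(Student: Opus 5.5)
The plan is to write $L$ from \eqref{ratiobased_inverse} as a composition of continuous maps, exactly as one would for Lemma \ref{LemmaLstetig}. Fix $(x,y)\in\XY$. Since the property is understood with respect to the third argument, it suffices to show that $t\mapsto L(x,y,t)$ is continuous on $\R$. Write
\begin{equation*}
	L(x,y,t) = \ell\bigl(h_y(u(t))\bigr), \qquad h_y(s) := \frac{y+c}{s+c}.
\end{equation*}

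First, $u:\R\to Y$ is continuous by assumption. Second, $h_y$ is continuous on $Y$. The only point to verify is that the denominator never vanishes. Since $Y=(a,b)$ with $a\ge 0$ and $c\ge 0$, every $s\in Y$ satisfies $s+c>a+c\ge 0$, so $s+c>0$. Hence $h_y$ is a quotient of continuous functions with a positive denominator, and it is continuous on $Y$. Third, $y+c>0$ for the same reason, so $h_y(s)\in(0,\infty)$ for all $s\in Y$. This means $h_y$ maps into the domain of $\ell$, where $\ell$ is continuous.

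The composition $\ell\circ h_y\circ u$ is therefore continuous in $t$ for every fixed $(x,y)$. The same argument shows joint continuity in $(y,t)$ if desired, because $(y,s)\mapsto (y+c)/(s+c)$ is jointly continuous on $Y\times Y$. Measurability of $L$ is inherited from the measurability of $\ell$ and $u$, since continuous maps are Borel measurable.

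The only potential obstacle is making sure the quotient stays strictly inside $(0,\infty)$ with a nonvanishing denominator. This is where the structural assumptions $Y\subseteq(0,\infty)$ and $c\ge 0$ from Definition \ref{Def_ratiobasedLoss} enter. Once this is checked, the result follows exactly as in Lemma \ref{LemmaLstetig}. Alternatively, one can reduce to Lemma \ref{LemmaLstetig} directly: set $\bar\ell(r):=\ell(r^{-1})$, which is continuous on $(0,\infty)$ and satisfies $\bar\ell(1)=0$, and observe that $L(x,y,t)=\bar\ell\bigl(\tfrac{u(t)+c}{y+c}\bigr)$.
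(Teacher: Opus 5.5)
Your proof is correct and takes essentially the same approach as the paper. The paper gives no separate proof of this lemma; like Lemma \ref{LemmaLstetig}, it treats it as immediate from writing $L(x,y,\cdot)$ as a composition of continuous maps whose intermediate quotient stays in $(0,\infty)$ because $Y\subseteq(0,\infty)$ and $c\ge 0$, and your alternative reduction via $\bar\ell(r)=\ell(r^{-1})$ to Lemma \ref{LemmaLstetig} is equally valid.
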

If $L$ is twice differentiable  w.r.t. to its last argument, for all
fixed values of $(x,y)$, we obtain
\begin{align*}
	L'(x,y,t) &= -\ell'\biggl(\frac{y+c}{u(t)+c}\biggr) \frac{y+c}{(u(t)+c)^2} u'(t), \\
	L''(x,y,t) &= -\ell''\biggl(\frac{y+c}{u(t)+c}\biggr) \! \biggl(\frac{(y+c)u'(t)}{(u(t)+c)^2}\biggr)^{\!2} + \ell'\biggl(\frac{y+c}{u(t)+c}\biggr)\frac{(y+c)}{(u(t)+c)^3} \bigl(u''(t)(u(t)+c)-2(u'(t))^2\bigr).
\end{align*}
\begin{remark}
	Let $L$ be a loss function defined via \eqref{ratiobased_inverse}.
	If $\ell$ and $u$ are ($n$ times) differentiable functions, then $L$ a ($n$ times) differentiable loss function, $n \in \N$.
\end{remark}
Looking at the second derivative, one can hardly derive a criterion for convexity. 
However, as with the first definition, we refer to Proposition \ref{Prop_konvexeVerlustfkt_tildeEll}.
\begin{lemma} \label{altLLipschitz}
	Let $Y = (a,b), 0 \leq a < b < \infty$, $c \geq 0$ with $a+c>0$. 
	Let $\ell$ and $u$ be Lipschitz continuous functions. 
	In that case, $L$ defined by \eqref{ratiobased_inverse} is a Lipschitz continuous loss function. 
\end{lemma}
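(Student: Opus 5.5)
Since $b<\infty$ and $a+c>0$, both the numerator $y+c$ and the denominator $u(t)+c$ of the argument of $\ell$ in \eqref{ratiobased_inverse} lie in the bounded interval $(a+c,b+c)$, which stays away from zero. So the argument $\frac{y+c}{u(t)+c}$ always lies in the compact interval $I:=\bigl[\frac{a+c}{b+c},\frac{b+c}{a+c}\bigr]\subset(0,\infty)$. This lets us treat the map $s\mapsto \frac{y+c}{s+c}$ as Lipschitz on the relevant range, uniformly in $y$. The plan is to compose the three Lipschitz maps $t\mapsto u(t)$, $s\mapsto \frac{y+c}{s+c}$ and $r\mapsto \ell(r)$, exactly as in the proof of Lemma \ref{LipschitzL}, and to track the constants so they do not depend on $(x,y)$.

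Fix $(x,y)\in\XY$ and $t,t'\in\R$. First, Lipschitz continuity of $\ell$ with constant $|\ell|_1$ gives
\begin{equation*}
	|L(x,y,t)-L(x,y,t')| \leq |\ell|_1 \biggl|\frac{y+c}{u(t)+c}-\frac{y+c}{u(t')+c}\biggr| = |\ell|_1 \frac{(y+c)\,|u(t')-u(t)|}{(u(t)+c)(u(t')+c)}.
\end{equation*}
Next I bound the factor $y+c$ above by $b+c$ and each factor in the denominator below by $a+c$, using $y,u(t),u(t')\in Y=(a,b)$. Finally, Lipschitz continuity of $u$ gives $|u(t')-u(t)|\leq |u|_1|t-t'|$. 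Together these yield
\begin{equation*}
	|L(x,y,t)-L(x,y,t')|\leq \frac{|\ell|_1|u|_1(b+c)}{(a+c)^2}\,|t-t'|,
\end{equation*}
uniformly in $(x,y)$, which is the claim, with the explicit constant $|L|_1\leq \frac{|\ell|_1|u|_1(b+c)}{(a+c)^2}$.

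The only delicate point is the uniformity of the constant in $y$. Unlike Lemma \ref{LipschitzL}, the variable $y$ now appears in the numerator, so $b<\infty$ is genuinely needed: otherwise the factor $y+c$ is unbounded and the estimate fails. The condition $a+c>0$ controls the denominator. I would also remark that global Lipschitz continuity of $\ell$ on $(0,\infty)$ is more than needed; Lipschitz continuity on the compact interval $I$ suffices, as in Lemma \ref{LipschitzY01}.
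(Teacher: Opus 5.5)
Your proof is correct and matches the paper's argument essentially step for step: Lipschitz continuity of $\ell$, then the explicit difference of the two quotients with $y+c\le b+c$ in the numerator and $(u(t)+c)(u(t')+c)\ge (a+c)^2$ in the denominator, then Lipschitz continuity of $u$, giving the same constant $\frac{|\ell|_1|u|_1(b+c)}{(a+c)^2}$. Your closing remark is also right: Lipschitz continuity of $\ell$ on the compact interval $\bigl[\frac{a+c}{b+c},\frac{b+c}{a+c}\bigr]$ suffices, in the spirit of Lemma \ref{altLLipschitzY01}.
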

Compared to the result in Chapter \ref{Kap_Eigenschaften}, we have an additional restriction here.
Whereas Chapter \ref{Kap_Eigenschaften} allows $b = \infty$, this is not possible here.
In contrast, Lemmata \ref{Lemma_Lipschitz_beschrAbl} and \ref{Lemma_Lipschitz_beschrAbl2} as well as Proposition \ref{Prop_Lipschitz_Integral} still hold.
Additionally, for $Y = (0,1)$ we only need a locally Lipschitz continuous function $\ell$.
\begin{lemma} \label{altLLipschitzY01}
	Let $L$ be a loss function of type \eqref{ratiobased_inverse}.
	Let $Y = (0,1), u(t) = \frac{1}{1+\exp(-t)}$, and $c > 0$. 
	Moreover, let $\ell$ be locally Lipschitz continuous.
	Then, $L$ is a Lipschitz continuous loss with Lipschitz constant $|L|_1 \leq \frac{|\ell|_{I,1} |u|_1 (1+c)}{c^2}, I := \Bigl(\frac{c}{1+c}, \frac{1+c}{c}\Bigr)$.
\end{lemma}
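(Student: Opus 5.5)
We argue directly with Lipschitz estimates for the composition $t \mapsto \ell\bigl(\frac{y+c}{u(t)+c}\bigr)$, as in the proof of Lemma \ref{LipschitzY01}.

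\textbf{Step 1: the argument stays in $I$.} Since $u$ maps into $Y=(0,1)$ and $y\in(0,1)$, every argument of $\ell$ satisfies
\begin{equation*}
	\frac{y+c}{u(t)+c} \in \Bigl(\frac{c}{1+c}, \frac{1+c}{c}\Bigr) = I .
\end{equation*}
The closure of $I$ is a compact subset of $(0,\infty)$, because $c>0$. A locally Lipschitz continuous function is Lipschitz continuous on every compact set: cover the compact set by finitely many neighbourhoods on which $\ell$ is Lipschitz, then use a Lebesgue-number argument together with the boundedness of the continuous $\ell$ on the compact set. Hence $|\ell|_{I,1}<\infty$.

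\textbf{Step 2: Lipschitz estimate for $t\mapsto (y+c)/(u(t)+c)$.} Fix $(x,y)\in\XY$ and $t,t'\in\R$. Then
\begin{equation*}
	\biggl|\frac{y+c}{u(t)+c}-\frac{y+c}{u(t')+c}\biggr| = (y+c)\frac{|u(t')-u(t)|}{(u(t)+c)(u(t')+c)} .
\end{equation*}
We bound the factors separately:
\begin{itemize}
	\item $y+c\le 1+c$;
	\item each denominator factor is at least $c$, since $u>0$;
	\item $|u(t')-u(t)|\le |u|_1\,|t-t'|$, where $|u|_1=\frac14$ for the logistic function.
\end{itemize}
This gives the Lipschitz constant $\frac{(1+c)|u|_1}{c^2}$, uniformly in $(x,y)$.

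\textbf{Step 3: compose.} Both points $\frac{y+c}{u(t)+c}$ and $\frac{y+c}{u(t')+c}$ lie in $I$ by Step 1. Applying the Lipschitz bound of $\ell$ on $I$ and then Step 2,
\begin{equation*}
	|L(x,y,t)-L(x,y,t')| \le |\ell|_{I,1}\,\frac{(1+c)|u|_1}{c^2}\,|t-t'| .
\end{equation*}
This holds for all $(x,y)\in\XY$, which is exactly the claimed bound $|L|_1 \leq \frac{|\ell|_{I,1} |u|_1 (1+c)}{c^2}$.

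The only genuinely delicate point is Step 1: the conclusion from local to uniform Lipschitz continuity on $I$, which is where $c>0$ is essential. Steps 2 and 3 are routine. Note also that no property of the logistic link is used beyond $u(\R)\subseteq(0,1)$ and global Lipschitz continuity of $u$.
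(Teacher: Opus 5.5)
Your proof is correct and follows the paper's argument essentially line by line. Both show that the ratio $\frac{y+c}{u(t)+c}$ stays in $I$, bound the difference of the two ratios by $\frac{(y+c)\,|u|_1}{c^2}\,|t-t'|$, and take the supremum over $y\in(0,1)$ to get the factor $1+c$. Your Step 1 also justifies $|\ell|_{I,1}<\infty$ via compactness of $\overline{I}\subset(0,\infty)$, which the paper uses without comment.
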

\begin{lemma} \label{altLLokalLipschitz}
	Assume $L$ like in \eqref{ratiobased_inverse}.
	Let $Y = (a,b), 0 \leq a < b < \infty$, and $a+c>0$. 
	Assume $\ell$ and $u$ to be locally Lipschitz continuous. 
	Then, $L$ is a locally Lipschitz continuous loss function.
\end{lemma}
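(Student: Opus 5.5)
The plan is to reduce the statement to a Lipschitz estimate on a compact region, in the same way as Lemma \ref{LocalLipschitzL} for the original definition \eqref{ratiobased}. By definition (cf. \cite[Chapter 2.2]{SC08}), $L$ is locally Lipschitz continuous if for every $B>0$ there is a constant $c_B$ with
\begin{equation*}
	|L(x,y,t)-L(x,y,t')| \leq c_B |t-t'|, \qquad (x,y)\in\XY,\; t,t'\in[-B,B].
\end{equation*}
So we fix $B>0$ and $t,t'\in[-B,B]$ and write $r:=\frac{y+c}{u(t)+c}$ and $r':=\frac{y+c}{u(t')+c}$.

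\textbf{Step 1 (range of the arguments).} Since $u$ maps into $Y=(a,b)$ and $y\in Y$, both $r$ and $r'$ lie in $\bigl(\frac{a+c}{b+c},\frac{b+c}{a+c}\bigr)$. Here $a+c>0$ and $b<\infty$ are used: the closure $I:=\bigl[\frac{a+c}{b+c},\frac{b+c}{a+c}\bigr]$ is a compact subset of $(0,\infty)$. Because $\ell$ is locally Lipschitz continuous, it is Lipschitz on $I$ with some constant $|\ell|_{I,1}$. This bound is uniform in $(x,y)$.

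\textbf{Step 2 (difference of the ratios).} We compute
\begin{equation*}
	|r-r'| = (y+c)\frac{|u(t')-u(t)|}{(u(t)+c)(u(t')+c)} \leq \frac{b+c}{(a+c)^2}\,|u(t)-u(t')|.
\end{equation*}
Since $u$ is locally Lipschitz continuous, it is Lipschitz on $[-B,B]$ with some constant $|u|_{[-B,B],1}$.

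\textbf{Step 3 (combination).} Putting the two steps together gives
\begin{equation*}
	|L(x,y,t)-L(x,y,t')| \leq |\ell|_{I,1}\,\frac{b+c}{(a+c)^2}\,|u|_{[-B,B],1}\,|t-t'|,
\end{equation*}
uniformly in $(x,y)\in\XY$, which is the claim.

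The only delicate point is Step 1, namely that the compact interval containing all the ratios is independent of $y$. This needs both $b<\infty$, which is used in the numerator $y+c\leq b+c$, and $a+c>0$, which keeps the interval away from $0$ so that local Lipschitz continuity of $\ell$ applies there. Unlike Lemma \ref{LocalLipschitzL}, no monotonicity of $u$ is needed, because the bound on $|r-r'|$ holds directly.
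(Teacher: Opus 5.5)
Your proof is correct and has the same skeleton as the paper's. Both confine the two ratios to a compact subinterval of $(0,\infty)$, use the Lipschitz constant of $\ell$ there, bound $|r-r'|$ by a multiple of $|u(t)-u(t')|$, and finish with the local Lipschitz constant of $u$.

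The one real difference is where the compact interval comes from:
\begin{itemize}
\item The paper uses the monotonicity of $u$. It traps the ratios in a $d$-dependent interval $M_d=[(a+c)/(\hat m_d+c),\,(b+c)/(\check m_d+c)]$, where $\check m_d,\hat m_d$ are the minimum and maximum of $u(\pm d)$, and obtains the factor $(b+c)/(\check m_d+c)^2$.
\item You use only $u(\R)\subseteq(a,b)$. This gives the fixed interval $I$ and the factor $(b+c)/(a+c)^2$.
\end{itemize}

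Since $M_d\subseteq I$ and $\check m_d> a$, the paper's constant is never larger than yours. Your route, however, needs no monotonicity, and the $\ell$-part of your constant does not depend on $B$. It therefore shows at once that $L$ is globally Lipschitz whenever $u$ is, with $\ell$ only locally Lipschitz. This yields Lemma \ref{altLLipschitzY01} for any bounded $Y=(a,b)$ with $a+c>0$, with the same constant, and it weakens the hypothesis on $\ell$ in Lemma \ref{altLLipschitz}.

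Incidentally, the paper's proof prints $\hat M_d$ as the product $(b+c)(\check m_d+c)$, which should be the quotient $(b+c)/(\check m_d+c)$; your argument does not depend on this.
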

Altogether, most results from Chapter \ref{Kap_Eigenschaften} apply to loss functions defined by {(\ref{ratiobased_inverse})} as well.
Only the result about global Lipschitz continuity has a certain restriction.
Hence, our original definition \eqref{ratiobased} of a ratio-based loss function has some minor advantages when compared with the definition  in 
\eqref{ratiobased_inverse}.
Calculations are easier and shorter in Chapter \ref{Kap_Eigenschaften}. \par
Furthermore, the former definition helps to control the risk in an easier manner. Indeed, the ratio 
\begin{equation*}
	\frac{y+c}{u(t)+c} \in \biggl(\frac{a+c}{b+c}, \frac{b+c}{a+c}\biggr) \subsetneq (0,\infty)
\end{equation*}
is bounded if $Y = (a,b), 0 \leq a < b < \infty$, and $c \geq 0$ such that $a+c>0$.
Thus, a continuous representing function $\ell$ bounds the risk $\RLP(f)$ of a measurable function $f: X \to \R$.
Moreover, if $Y = (0, \infty)$ and $\ell$ is bounded, the risk of a measurable function is finite. \par
When it comes to the calculation of the risk $\RLP(0)$, where $Y = (0, \infty)$, we, however, have to consider the ratio
\begin{equation*}
	\frac{y+c}{u(0)+c} \in \Bigl(\frac{a+c}{u(0)+c}, \infty \Bigr)
\end{equation*}
which has a lower, but no upper bound.
Even assuming a globally Lipschitz continuous $\ell$ gives
\begin{equation*}
	\biggl|\ell\Bigl(\frac{y+c}{u(0)+c}\Bigr)-\ell(1)\biggr| \leq |\ell|_1 \biggl|\frac{y-u(0)}{u(0)+c}\biggr|,
\end{equation*}
which can become arbitrarily large when taking the supremum over $y \in Y = (0, \infty)$.
In general, we can therefore not conclude that $\RLP(0)< \infty$.
Hence, using the inverse ratio does in this setting in general not bound the $L$-risk.

\section{Conclusion}
Many machine learning methods and AI algorithms are based on three cornerstones: (i) an appropriate function space  $\mathcal{F}$ often called hypothesis space, (ii) the set of  probability measures $\mathcal{P}$, and (iii) a loss function $L$ which is used to define the risk functional. \par
In supervised learning, margin-based loss functions for classification and distance-based loss functions for regression and quantile regression have been investigated in great detail by many authors. Distance-based loss functions are -- not only but of particular -- interest, if the loss between output values and its predictions depends on their \emph{difference}. 
This is obvious under the classical signal plus noise assumption.\par 
This paper focused on loss functions for supervised learning,
which depend on the \emph{ratio} of output values and its predictions. 
Such ratio-based loss functions are of particular interest if a multiplicative error structure seems to be plausible. 
Though, relative errors have been of concern for several years (see e.g. \cite{ChenEtAl2010LARE, ChenEtAl2016LPRE, JadonEtAl2022, TervenEtAl2025}) for multiplicative models, a systematic investigation of ratio-based loss functions has not been done in the literature to our best knowledge.
Therefore, the goal of this survey article was to put ratio-based loss functions into a more general framework.
We proposed a general definition, investigated their properties and those of the corresponding risk functionals, and collected several examples from the literature and proposed some new ratio-based loss functions.
We also showed that in general neither ratio-based loss functions are a subset of distance-based loss functions nor vice versa, although constant loss functions (which are completely uninteresting for practical applications) obviously belong to both sets of loss functions.\par
The goal of this paper was not to deduce learning rates, robustness results, etc. for a certain machine learning method.
Instead, we hope that our survey paper will stimulate research on several machine learning algorithms based on ratio-based loss functions in various directions.  
Such research may include finding additional convex and Lipschitz continuous ratio-based loss functions as well as deducing learning rates, representer theorems, or statistical robustness results (such as qualitative robustness, influence functions, and bounds for the maxbias) for certain machine learning methods including kernel based methods and CNNs.
Moreover, research in terms of non-convex loss functions can be crucial for some ML methods, too.
One can also think about mitigating the assumptions on the link function $u$ as well as expanding the ratio-based definition to other output spaces.
We are currently focusing on applying ratio-based loss functions to kernel-based ML methods and want to describe the set of all convex and Lipschitz continuous rb loss functions.
However, this is beyond the scope of this paper.

\printbibliography[heading = bibintoc, title={Bibliography}] 
\appendix
\section{Proofs}
\begin{proof}[\upshape \textbf{Proof of Proposition \ref{PropNoConvexLoss}}]
	Assume, that $L$ is a convex rb loss function.
	Let $(x,y) \in \XY$.
	Then, for all $\hat{t} \in \R$,
	\begin{equation} \label{monotoneAbb}
		s \mapsto \frac{L(x,y,s)-L(x,y,\hat{t})}{s-\hat{t}} = \frac{L(x,y,\hat{t})-L(x,y,s)}{\hat{t}-s}
	\end{equation}
	is increasing.
	Furthermore, there are $t < t' \in \R$ satisfying $L(x,y,t) < L(x,y,t')$.
	Define 
	\begin{equation*}
		q := \frac{L(x,y,t')-L(x,y,t)}{t'-t} > 0.
	\end{equation*}
	Let $t'' > t'$ and $\lambda = \frac{t'-t''}{t-t''} \in (0,1)$. Then $t' = \lambda t + (1-\lambda)t''$.
	Since $L$ is a convex loss function,  
	\begin{align*}
		L(x,y,t') &\leq \lambda L(x,y,t) + (1-\lambda) L(x,y,t'') \\
		\Leftrightarrow  \qquad \frac{L(x,y,t')-L(x,y,t'')}{t'-t''} &\geq \frac{L(x,y,t)-L(x,y,t'')}{t-t''} \overset{\eqref{monotoneAbb}}{\geq} \frac{L(x,y,t)-L(x,y,t')}{t-t'} = q.
	\end{align*}
	Hence,
	\begin{align*}
		L(x,y,t')-L(x,y,t'') &\leq q(t'-t'') \\
		\Leftrightarrow \qquad L(x,y,t'') &\geq L(x,y,t')+q(t''-t').
	\end{align*}
	As $q > 0$ and $t'' > t'$, $\lim_{t \to \infty} L(x,y,t) = \infty$ follows, giving a contradiction to $\lim_{t \to \infty} L(x,y,t) < \infty$ discussed in advance of Proposition \ref{PropNoConvexLoss}.
	Therefore, there is no convex loss $L$ in this setting.
\end{proof}
\begin{proof}[\upshape \textbf{Proof of Proposition \ref{Prop_konvexeVerlustfkt_tildeEll}}]
	Before proving the result, we show that $\ell$ is well-defined, i.e. $\ell \geq 0$.
	As each $r \in (0, \infty)$ can be written as $r = \e^t = u(t)$ with $t = \log(r) \in \R$, we define $f(t) := \widetilde{\ell}(\e^t)$.
	Then, 
	\begin{equation*}
		f''(t) = \widetilde{\ell}''(\e^t)\cdot (\e^t)^2 + \widetilde{\ell}'(\e^t) \cdot \e^t = \e^t \biggl(\widetilde{\ell}'(\e^t) + \e^t\widetilde{\ell}''(\e^t)\biggr).
	\end{equation*}
	As \eqref{Bedingung_Konvexität_tildeell} holds, $f''(t) \geq 0$ for all $t \in \R$, i.e. $f$ is convex.
	Therefore,
	\begin{equation*}
		\widetilde{\ell}(1) = f(0) \leq \frac{1}{2}f(t) + \frac{1}{2}f(-t) = \frac{1}{2} \widetilde{\ell}(r) + \frac{1}{2}\widetilde{\ell}(r^{-1})
	\end{equation*}
	gives the assertion. \par
	For $y \in Y$, we now define $q(t) := \frac{\e^t}{y}$, then $q'(t) = q(t)$ and the derivative of $\frac{1}{q(t)}$ is $-\frac{1}{q(t)}$.
	Hence,
	\begin{equation*}
		L''(x,y,t) = q(t) \Bigl(\widetilde{\ell}'(q(t)) + q(t)\widetilde{\ell}''(q(t))\Bigr) + \frac{1}{q(t)} \biggl(\widetilde{\ell}'\Bigl(\frac{1}{q(t)}\Bigr)+\frac{1}{q(t)} \widetilde{\ell}''\Bigl(\frac{1}{q(t)}\Bigr)\biggr).
	\end{equation*}
	Since $q(t), \frac{1}{q(t)} \in (0, \infty)$ for all $t \in \R$ and \eqref{Bedingung_Konvexität_tildeell} holds, $L''(x,y,t) \geq 0$.
	Thus, $L$ is a convex loss.
\end{proof}
\begin{proof}[\upshape \textbf{Proof of Proposition \ref{Prop_BspKonvexeVerlustfkt_Integral}}]
	Differentiating $\widetilde{\ell}$ yields $\widetilde{\ell}'(r) = \frac{g(r)}{r}$ by the Fundamental theorem of calculus. 
	Hence,
	\begin{equation*}
		\widetilde{\ell}'(r) + r\widetilde{\ell}''(r) = \frac{g(r)}{r} + r\Bigl(\frac{g'(r)}{r}-\frac{g(r)}{r^2}\Bigr) = g'(r).
	\end{equation*}
	As $g$ is an increasing function, $g'(r) \geq 0$ for all $r \in (0, \infty)$ implies \eqref{Bedingung_Konvexität_tildeell}. 
	According to Proposition \ref{Prop_konvexeVerlustfkt_tildeEll}, $L$ is a convex ratio-based loss function.
\end{proof}
\begin{proof}[\upshape \bfseries Proof of Lemma \ref{LipschitzL}] 
	For $(x,y) \in X \times Y$ and $t_1, t_2 \in \R$, we have
	\begin{equation*}
		|L(x, y, t_1) - L(x, y, t_2)| = \biggl| \ell\biggl( \frac{u(t_1) + c}{y + c}\biggr) -  \ell\biggl( \frac{u(t_2) + c}{y + c}\biggr)\biggr| \leq |\ell|_1 \cdot \frac{1}{|y+c|} \cdot |u|_1 \cdot|t_1 - t_2|.
	\end{equation*}
	$|\ell|_1$ and $|u|_1$ denote the Lipschitz constants of its particular functions.
	Hence, because $y \geq a$,
	\begin{equation*}
		\sup_{(x,y) \in X \times Y} |L(x, y, t_1) - L(x, y, t_2)| \leq |\ell|_1 \cdot |u|_1 \cdot \frac{1}{a + c} \cdot |t_1 - t_2|. \qedhere
	\end{equation*}
\end{proof}
\begin{proof}[\upshape\bfseries Proof of Lemma \ref{Lemma_Lipschitz_beschrAbl2}]
	Assume $L$ is a differentiable Lipschitz continuous loss function with uniform Lipschitz constant $K \geq 0$, i.e. there exists a constant $K \ge 0$ such that for all $t, t' \in \R$
	\begin{equation*}
		\sup_{(x,y) \in X \times Y} |L(x,y,t)-L(x,y,t')| \leq K |t-t'|.
	\end{equation*}
	Then, 
	\begin{equation*}
		|L'(x,y,t_0)| = \biggl|\lim_{t\to t_0} \frac{L(x,y,t_0)-L(x,y,t)}{t-t_0}\biggr| \leq \lim_{t\to t_0} \frac{\sup_{(x,y) \in \XY}|L(x,y,t_0)-L(x,y,t)|}{|t-t_0|} \leq K,\\
	\end{equation*}
	which yields a contradiction to $|L'|$ being unbounded.
	Thus, $L$ is not a Lipschitz continuous loss.
\end{proof}
\begin{proof}[\upshape \textbf{Proof of Proposition \ref{Prop_Lipschitz_Integral}}]
	According to Lemma \ref{Lemma_Lipschitz_beschrAbl}, $L$ is a Lipschitz continuous loss function because of
	\begin{equation*}
		|L'(x,y,t)| = \Bigl|g\Bigl(\frac{\e^t}{y}\Bigr)-g\Bigl(\frac{y}{\e^t}\Bigr)\Bigr| \leq 2M. \qedhere
	\end{equation*}
\end{proof}
\begin{proof}[\upshape \textbf{Proof of Lemma \ref{LipschitzY01}}]
	The ratio of interest $\frac{u(t)+c}{y+c}$ is an element in $I$. 
	Therefore,
	\begin{equation*}
		|L(x,y,t)-L(x,y,t')| = \biggl|\ell\biggl(\frac{u(t)+c}{y+c}\biggr)-\ell\biggl(\frac{u(t')+c}{y+c}\biggr)\biggl| \leq \frac{|\ell|_{I,1}}{y+c} |u(t)-u(t')|.
	\end{equation*}
	Hence, one can conclude Lipschitz continuity of the loss function:
	\begin{equation*}
		\sup_{(x,y) \in \XY} |L(x,y,t)-L(x,y,t')| \leq \frac{|\ell|_{I,1}|u|_1}{c} |t-t'|. \qedhere
	\end{equation*}
\end{proof}
\begin{proof}[\upshape\bfseries Proof of Lemma \ref{LocalLipschitzL}]
	Let $d \geq 0$ and $t_1, t_2 \in [-d, d]$.
	Since $u$ is monotonic, 
	\begin{equation*}
		0 < \check{m} \leq \frac{u(t_i)+c}{y+c} \leq \hat{m} < \infty
	\end{equation*}
	holds for $y \in Y$ and $i \in \{1,2\}$.
	Here, 
	\begin{equation*}
		\check{m} := \frac{\min\{u(-d), u(d)\}+c}{b + c} \in (0, \infty), \hspace*{2cm} \hat{m} := \frac{\max\{u(-d), u(d)\}+c}{a + c} \in (0, \infty),  
	\end{equation*}
	depending on $d$.
	As $u: \R \to Y$ is locally Lipschitz continuous, with $|u|_{d, 1}$ denoting the Lipschitz constant of $u$ on $[-d,d]$,
	\begin{equation*}
		|u(t_1) - u(t_2)| \leq |u|_{d,1} |t_1 - t_2|
	\end{equation*}
	holds. 
	Because $\ell: (0, \infty) \to [0, \infty)$ is locally Lipschitz continuous on $[\check{m}, \hat{m}] =: M \subseteq (0, \infty)$,
	\begin{equation*}
		\biggl|\ell\biggl(\frac{u(t_1)+c}{y+c}\biggr)-\ell\biggl(\frac{u(t_2)+c}{y+c}\biggr)\biggr| \leq |\ell|_{M,1} \biggl|\frac{u(t_1)-u(t_2)}{y+c}\biggr| \leq |\ell|_{M,1} \frac{|u|_{d,1}}{|y+c|} |t_1 - t_2| 
	\end{equation*}
	follows.
	As a result, the interval $M$ and its corresponding Lipschitz constant only depend on $d$.
	Thus,
	\begin{equation*}
		\sup_{(x,y) \in \XY} |L(x,y,t_1)-L(x,y,t_2)| \leq \frac{|\ell|_{M,1}|u|_{d,1}}{a+c} \cdot |t_1-t_2|. \qedhere
	\end{equation*}
\end{proof}
\begin{proof}[\upshape\bfseries Proof of Lemma \ref{LocalLipschitzL2}]
	Let $d \geq 0$ and $t_1, t_2 \in [-d,d]$. 
	Since $u$ is a monotone function,  
	\begin{equation*}
		0 < \frac{u(t_i)+c}{y+c} \leq m < \infty
	\end{equation*} 
	for $y \in Y$ and $i \in \{1,2\}$ with
	\begin{equation*}
		m :=  \frac{\max\{u(-d), u(d)\}+c}{a + c} \in (0, \infty)
	\end{equation*}
	depending on $d$.	
	As the continuation $\ell: [0, \infty) \to [0, \infty)$ is locally Lipschitz continuous, 
	\begin{align*}
		\biggl|\ell\biggl(\frac{u(t_1)+c}{y+c}\biggr)-\ell\biggl(\frac{u(t_2)+c}{y+c}\biggr)\biggr| \leq |\ell|_{[0,m],1} \cdot \biggl|\frac{u(t_1)-u(t_2)}{y+c}\biggr| 
	\end{align*}
	follows. 
	Because $u$ is locally Lipschitz continuous as well,
	\begin{align*}
		\biggl|\frac{u(t_1)-u(t_2)}{y+c}\biggr| \leq \frac{|u|_{d,1}}{|y+c|} |t_1 - t_2|.
	\end{align*}
	Therefore, 
	\begin{equation*}
		\sup_{(x,y) \in \XY} |L(x,y,t_1) - L(x,y,t_2)| \leq \frac{|\ell|_{[0,m],1} \cdot |u|_{d,1}}{a+c} |t_1-t_2|. \qedhere
	\end{equation*}
\end{proof}
\begin{proof}[\upshape\bfseries Proof of Lemma \ref{altLLipschitz}]
	Let $(x,y) \in \XY$ and $t_1, t_2 \in \R$.
	Since $\ell$ is Lipschitz continuous,
	\begin{equation*}
		\Bigl|L(x, y, t_1) - L(x, y, t_2)\Bigr| = \Bigl| \ell\biggl( \frac{y + c}{u(t_1) + c}\Bigr) -  \ell\Bigl( \frac{y + c}{u(t_2) + c}\Bigr)\Bigr| \leq |\ell|_1 \cdot |y + c|\cdot |u|_1 \cdot |t_1 - t_2| \cdot \frac{1}{(a+c)^2}.
	\end{equation*}
	As $0 < y < b < \infty$, we have
	\begin{equation*}
		\sup_{(x,y) \in X \times Y} |L(x, y, t_1) - L(x, y, t_2)| \leq |\ell|_1 \cdot |u|_1 \frac{b + c}{(a+c)^2} \cdot |t_1 - t_2|. \qedhere
	\end{equation*}
\end{proof}
\begin{proof}[\upshape\bfseries Proof of Lemma \ref{altLLipschitzY01}]
	We now focus on the ratio $\frac{y+c}{u(t)+c} \in I$.
	Since $u$ itself is globally Lipschitz continuous,
	\begin{equation*}
		|L(x,y,t)-L(x,y,t')| \leq |\ell|_{I,1} \Bigl| \frac{y+c}{u(t)+c} - \frac{y+c}{u(t')+c} \Bigr| \leq \frac{|\ell|_{I,1} |u|_1}{c^2} (y+c)|t-t'|
	\end{equation*}
	follows. Taking the supremum yields
	\begin{equation*}
		\sup_{(x,y) \in \XY} |L(x,y,t)-L(x,y,t')| \leq \frac{|\ell|_{I,1} |u|_1 (1+c)}{c^2}|t-t'|. \qedhere
	\end{equation*}
\end{proof}
\begin{proof}[\upshape\bfseries Proof of Lemma \ref{altLLokalLipschitz}]
	Let $d \geq 0, t, t' \in [-d,d]$.
	As $u$ is a monotone function,
	\begin{equation*}
		\frac{y+c}{u(t)+c}, \frac{y+c}{u(t')+c} \in [\check{M}_d, \hat{M}_d] =: M_d \subseteq (0, \infty)
	\end{equation*}
	with $\check{m}_d = \min\{u(-d), u(d)\}$, 
	$\hat{m}_d := \max\{u(-d), u(d)\}$, 
	$\check{M}_d := (a+c)/(\hat{m}_d+c)$,
	and
	$\hat{M}_d := (b + c)(\check{m}_d + c)$.
	Since $u$ is locally Lipschitz continuous, $|u(t)-u(t')| \leq |u|_{d,1} |t - t'|$.
	The local Lipschitz continuity of $\ell$ yields
	\begin{equation*}
		\biggl|\ell\biggl(\frac{y+c}{u(t)+c}\biggr)-\ell\biggl(\frac{y+c}{u(t')+c}\biggr)\biggr| \leq |\ell|_{M_d, 1} \biggl|\frac{y+c}{u(t)+c}-\frac{y+c}{u(t')+c}\biggr| \leq |\ell|_{M_d,1} |u|_{d,1} \frac{b+c}{(\check{m}_d+c)^2} |t-t'|.
	\end{equation*}
	Thus,
	\begin{equation*}
		\sup_{(x,y) \in \XY} |L(x,y,t)-L(x,y,t')| \leq|\ell|_{M_d,1} |u|_{d,1} \frac{b+c}{(\check{m}_d+c)^2} |t-t'|. \qedhere
	\end{equation*}
\end{proof}
	
\end{document}